\numberwithin{equation}{section}
\newcommand{\ra}{\rangle}
\title{Data splitting improves statistical performance \\ in overparameterized regimes}
\author{Nicole M\"ucke \\
Technical University  of Braunschweig \\
\texttt{nicole.muecke@tu-braunschweig.de} 
\and 
Enrico Reiss\\
University of Potsdam \\ 
\texttt{enreiss@uni-potsdam.de}
\and
Jonas Rungenhagen\\
University of Potsdam \\ 
\texttt{jrungenh@uni-potsdam.de}
\and 
Markus Klein\\
University of Potsdam \\ 
\texttt{mklein@math.uni-potsdam.de}
}
\date{\today}
\begin{document}

\maketitle

\begin{abstract}
While large training datasets generally offer improvement in model performance, the
training process becomes computationally expensive and time consuming. Distributed
learning is a common strategy to reduce the overall training time by exploiting multiple
computing devices. 
Recently, it has been observed in the single machine setting that
overparameterization is essential for benign overfitting in ridgeless regression in Hilbert spaces. We show that in this regime, 
data splitting has a regularizing effect, hence improving statistical performance and computational complexity at the same time. 
We further provide a unified framework that allows to analyze both the finite and infinite dimensional setting.  
We numerically  demonstrate the effect of different model parameters.  
\end{abstract}


\section{Introduction}

Modern machine learning applications often involve learning statistical models of great complexity and datasets of massive size 
become increasingly available. 
However, while increasing the size of the training datasets generally
offers improvement in model performance, the training process is very computation-intensive and thus time-consuming. 
Indeed, hardware architectures have physical limits in terms of storage, memory, processing speed and communication. 
A central challenge is thus to design efficient  large-scale algorithms.  

Distributed learning and parallel computing is a common and simple approach to deal with large datasets. 
The $n$ observations are evenly split to $M$ machines (or \emph{local nodes, workers}), each  having access to only a subset
of $n/M$ training samples. Each machine performs local computations to fit a model and transmits it to a central node 
for merging. This simple \emph{divide and conquer} approach having been proposed in e.g. \cite{mann2009efficient} for 
striking best balance between accuracy and communication is highly communication
efficient: Only one communication step is performed to only one central node\footnote{This approach is also called \emph{centralized learning}.}.    

The field of distributed learning has gained increasing attention in different regimes in the last years, with the aim of 
establishing conditions for the distributed estimator to be consistent or minimax optimal, 
see e.g. \cite{chen2014split}, \cite{mackey2011divide}, \cite{xu2019distributed}, \cite{fan2019distributed}, \cite{shi2018massive}, \cite{battey2018distributed}, 
\cite{fan2021communication}, \cite{bao2021one}. 
We give a more detailed overview over 
approaches that are most closely related to our approach. For a general overview we refer to \cite{bekkerman2011scaling} 
and the recent review \cite{gao2021review}.  

The learning properties of distributed (kernel) ridge regression are well understood. The authors in \cite{zhang2015divide} 
show optimal learning rates with appropriate regularization, 
if the number of machines increases sufficiently slowly with the sample size, though under restrictive assumptions on the eigenfunctions of the kernel 
integral operator. This has been alleviated in \cite{lin2017distributed}. However, in these works the number of machines \emph{saturates} 
if the target is very smooth, meaning that large parallelization seems not possible in this regime. This is somewhat counterintuitive as smooth signals are easier to reconstruct.      
To overcome this issue, the authors \cite{chang2017distributed} utilize additional unlabeled data, leading to a slight improvement. 

These works have been extended to more general spectral regularization algorithms  for nonparametric least square regression in 
(reproducing kernel) Hilbert spaces in \cite{guo2017learning}, \cite{mucke2018parallelizing}, including gradient descent \cite{lin2018distributed} 
and stochastic gradient descent \cite{lin2018optimal}.   

Finally, we mention \cite{zhang2013communication}, \cite{dobriban2021distributed}, \cite{rosenblatt2016optimality} who study averaged empirical risk minimization 
in the underparameterized regime, the latter in the high dimensional limit.

We consider distributed ridgeless regression over Hilbert spaces with (local) overparameterization.  
This setting has been investigated recently in e.g. \cite{bartlett2020benign}, \cite{chinot2020benign}, 
\cite{shang2021benign}, \cite{muthukumar2020harmless} in the single machine context with the aim of establishing conditions when \emph{benign} or \emph{harmless} 
overfitting occurs. This serves as a proxy to understand neural network learning 
where the phenomenon of benign overfitting was first observed \cite{bartlett2021deep, belkin2021fit}. Indeed, wide networks that are trained with gradient descent can be 
accurately approximated by linear functions in a Hilbert space. Our results are a step towards understanding the statistical effects in distributed settings  
in deep learning.   



{\bf Contributions.} 
{\bf 1.} We provide a unified framework that allows to simultaneously analyze the finite and infinite dimensional distributed ridgeless regression problem. 
All our bounds are optimal.   
\vspace{0.1cm}
\\
{\bf 2.} We show that in the presence of overparameterization the number of data splits has a regularizing effect that trades off bias and variance. 
While overparameterization induces an additional bias,  averaging reduces variance sufficiently. 
Hence, data splitting improves statistical accuracy (for an increasing number of splits until the optimal number is achieved) and scales to large data sets at once. 
Our approach fits into the line of \emph{communication efficient} distributed algorithms and is easy to implement. 
\vspace{0.1cm}
\\
{\bf 3.} To precisely quantify the interplay of statistical accuracy, computational complexity and signal strength we work in a general random-effects model. 
We find that the numerical speed up\footnote{In the sense that the optimal number of data splits is large and hence allows more parallelization.} 
is high for low signal strength and improves efficiency. A similar phenomenon is observed in \cite{sheng2020one} for distributed ridge regression. 
In addition, we do not observe a saturation effect for the number of machines as described above for kernel ridge regression. 
\vspace{0.1cm}
\\
{\bf 4.} The spectral properties of the covariance operator 
also highly impact the learning properties of distributed ridgeless regression. The spectral decay needs to be sufficiently fast for a high statistical accuracy.  
Note that this is known for the single machine setting from \cite{bartlett2020benign}.  



{\bf Organization.} 
In Section \ref{sec:setting} we define the mathematical framework needed to present our main results in  Section \ref{sec:main-results}. 
Section \ref{sec:discussion} is devoted to a discussion with a more detailed comparison to related work.  
Some numerical illustrations can be found in Section \ref{sec:numerics} while the Appendix contains all proofs and additional material. 

{\bf Notation.} 
By $\cL(\cH_1, \cH_2)$ we denote the space of bounded linear operators between real Hilbert spaces $\cH_1$, $\cH_2$. 
We write $\cL(\cH, \cH) = \cL(\cH)$. For $\Gamma \in \cL(\cH)$ we denote by $\Gamma^T$ the adjoint operator and for compact $\Gamma$ 
by $(\lam_j(\Gamma))_{j}$ the  sequence of eigenvalues. If $\beta \in \cH$ we write $\beta \otimes \beta := \langle \cdot, \beta \ra \beta$. 
We let $[n]=\{1,...,n\}$.  For two positive sequences $(a_n)_n$, $(b_n)_n$ we write $a_n \lesssim b_n$ if $a_n \leq c b_n$
for some $c>0$ and $a_n \simeq b_n$ if both $a_n \lesssim b_n$ and $b_n \lesssim a_n$.


\section{Setup}
\label{sec:setting}

In this section we provide the mathematical framework for our analysis. More specifically, we 
introduce distributed ridgeless regression and state the main assumptions on our model.

\subsection{Linear Regression}
\label{subsec:linear-regression}

We consider a linear regression model over a real separable Hilbert space $\cH$ in random design. More precisely, 
we are given a random covariate vector $x \in \cH$ and a random output $y \in \mbr$ following the model  
\begin{equation}
\label{eq:model}  
y = \inner{\beta^*, x} + \epsilon \;, 
\end{equation}
where $\epsilon \in \mbr$ is a noise variable. The true regression parameter $\beta^* \in \cH$ minimizes the least squares test risk, i.e.  
\[  \cR(\beta^*) = \min_{\beta \in \cH} \cR(\beta)\;, \qquad \cR(\beta) := \mbe[(y - \inner{\beta , x})^2] \;, \]
where the expectation is taken with respect to the joint distribution of the pair $(x,y) \in \cH \times \mbr$. 
This framework covers many common supervised learning tasks, e.g. learning in reproducing kernel Hilbert spaces \cite{RosVilla15}.

For our analysis we need to impose some distributional assumptions. To this end, we 
recall that a positive definite operator $\Gamma \in \cL(\cH)$ is \emph{trace class} (and hence compact), if 
\[ \tr(\Gamma ) = \sum_{j\in \mbn } \lam_j(\Gamma) < \infty \; ,  \]
see e.g. \cite{reed2012methods}. 


\vspace{0.2cm}
\begin{definition}[Hilbert space valued subgaussian random variable]
\label{def:subgaussian}
Let $z$ be a random variable in $\cH$ and let $\Gamma: \cH \to \cH$ be a bounded, linear and self-adjoint positive definite trace class operator. 
Given some $\sigma > 0$ we say that $z$ is $(\sigma^2, \Gamma)$-subgaussian if for all $v \in \cH$ one has 
\[  \mbe\left[ e^{\inner{v, z-\mbe[z]}} \right] \leq e^{\frac{\sigma^2}{2}\inner{\Gamma v , v}}  \;.\]
\end{definition}

Note that (taking $\cH=\mbr$)  this definition includes the special case of real valued variables. 
On $\cH$, we define the \emph{covariance operator} $\Sigma$
by $\Sigma u:=\mbe[ \inner{u, x}x]$, where $\mbe$ denotes expectation w.r.t.  the marginal distribution. We assume  

\begin{assumption}
\label{ass:design}
\begin{enumerate}
\item  $\mbe[x]=0$ and $\mbe[||x||^2]< \infty$. 
\item $x$ is $(\sigma_x^2, \Sigma)$-subgaussian and has independent components. 
\item The covariance $\Sigma$ possesses an orthonormal basis of eigenvectors $v_j$  with eigenvalues $\lam_1 \geq \lam_2 \geq ... $\; (counted according to multiplicity).
\item Conditionally on $x$, the noise $\eps$ in equation \eqref{eq:model} is centered and $(\tau^2, id)$-subgaussian, where $id$ denotes the identity on $\mbr$. 
\end{enumerate}
\end{assumption}

Note that 1. and 3. imply that $\Sigma$ is trace class (and also positive and self-adjoint). Indeed, this easily follows from 
\[ \mbe[||x||^2] = \sum_{j \in \mbn} \inner{v_j , \Sigma v_j} =   \sum_{j \in \mbn} \lam_j  < \infty \;. \]

To derive an estimator $\hat \beta \in \cH$ for $\beta^*$ we are given an i.i.d. dataset 
\[   D:= \{(x_1, y_1), ..., (x_n, y_n) \} \subset  \cH \times \mbr \;, \] 
following the above model \eqref{eq:model}, with i.i.d. noise 
$\beps= (\eps_1, ..., \eps_n) \in \mbr^n$. 
The corresponding random vector of outputs is denoted as $\bY=(y_1,\ldots, y_n)^T \in \mbr^n$ 
and we arrange the data $x_j \in \cH$ into a {\em data matrix} 
$\bX \in \cL(\cH, \mbr^n)$ by setting $(\bX v)_j =\langle x_j,v \rangle$ for $v \in \cH, 1 \leq j \leq n$. If $\cH=\mbr^d$, then $\bX$ is a $n \times d$ matrix
(with row vectors $x_j$).


\subsection{Distributed Ridgeless Regression}

In the distributed setting, our data are evenly divided into $M$ local disjoint subsets  
\[ D = D_1 \cup ... \cup D_M   \] 
of size $|D_m|=\frac{n}{M}$, for $m=1,...,M$. To each local dataset we associate 
a \emph{local design matrix} $\bX_m \in \cL(\cH,\mbr^{\frac{n}{M}})$ with local output vector
$\bY_m \in \mbr^{\frac{n}{M}}$ and a local noise vector $\beps_m \in \mbr^{\frac{n}{M}}$.

In addition to the above distributional assumptions we require: 

\begin{assumption}
\label{ass:eigen-spaces}
Let $m \in [M]$. 
Almost surely, the projection of the local data $\bX_m$ on the space orthogonal to any eigenvector of $\Sigma$ 
spans a space of dimension $\frac{n}{M}$. 
\end{assumption}

More precisely, recall that the data matrix $\bX_m$ is built up from $n/M$ {\em row vectors} $x_k \in \cH$. 
The above assumption means that those row vectors almost surely are 
in {\em general position}: Only with zero probability the orthogonal projections of those vectors are linearly dependent 
in each hyperplane $H_j:=\{ x \in \cH;\langle x,v_j \rangle =0\}$ orthogonal  to the 
eigenvector $v_j$ of $\Sigma$. In particular, data vectors $x_j$ are collinear to some $v_j$ with zero  probability.

Note that Assumptions \ref{ass:design}  and \ref{ass:eigen-spaces} are satisfied if $x,y$ are jointly gaussian with zero mean and rank$(\Sigma)>n/M$.

We define the \emph{local minimum norm estimator} $\hat \beta_m$ as the solution to the 
optimization problem 
\begin{align*}
\min_{\beta \in \cH} & \;\; ||\beta ||^2 \quad \mbox{ such that } \\
||\bX_m \beta - \bY_m||^2 &= \min_{\tilde \beta \in \cH} ||\bX_m \tilde \beta - \bY_m ||^2 \;.
\end{align*}

It is well known that $\hat \beta_m$  has a closed form expression (see \cite{engl1996regularization}) given by 
\begin{equation}
\label{eq:ridgeless-2}
 \hat \beta_m =   \bX_m^T (\bX_m \bX_m^T)^{\dagger}  \bY_m \;,  
\end{equation} 
where $(\bX_m \bX_m^T)^{\dagger}$ denotes the pseudoinverse of the bounded linear operator $\bX_m \bX_m^T$.

In the case that dim$(\cH)=d<\frac{n}{M}$ and $\bX_m$ has rank $d$, there is a unique solution to the normal equations. 
However, under Assumption \ref{ass:eigen-spaces} we find many local interpolating solutions $\beta \in \cH$ to the normal equations 
with $\bX_m \beta = \bY_m$.

The final estimator is defined as the uniform average  
\begin{equation}
\label{eq:weighted-ave}
  \bar \beta_{M}  = \frac{1}{M} \sum_{j=1}^M  \beta_m \;.  
\end{equation}

We aim at finding optimal bounds for the excess risk 
\[ \cR(\bar \beta_M) - \cR(\beta^*) = ||\Sigma^{\frac{1}{2}}(\bar \beta_M - \beta^*) ||^2\;, \]
in high probability, as a function of the number of local nodes $M$ and under various model assumptions.


\section{Main Results}
\label{sec:main-results}

In this section we state our main results. We first derive a general upper bound and consider 
the infinite and finite dimensional settings in more detail. We complete our presentation 
with matching lower bounds.


\subsection{A General Error Bound}
\label{subsec:general-upper-bound}

Before stating our error bounds we briefly describe the underlying error decomposition in bias and variance. 
For an estimator $\hat \beta \in \cH$ let us define the \emph{bias} by 
\[ \bias (\hat \beta) := ||\Sigma^{1/2} (\mbe_\epsilon[\hat \beta] - \beta^* )||^2   \]
and the \emph{variance} as 
\begin{equation*}
\var (\hat \beta) := \mbe_\epsilon \left[||\Sigma^{1/2} (\hat \beta -  \mbe_\epsilon[\hat \beta ] ) ||^2  \right]  \;,
\end{equation*}
where $\mbe_\eps[\cdot]$ denotes the conditional expectation given the input data. 
We then have the following preliminary bound for the excess risk whose full proof is given in Appendix \ref{supp:proofsA}.

\begin{lemma}
\label{lem:bias-var}
Let $\bar \beta_{M}$ be defined by \eqref{eq:weighted-ave} and denote by 
$\hat \Sigma_m = \frac{M}{n} X_m^T X_m$ 
the local empirical covariance operator.  
The excess risk can be bounded almost surely by
\[ \mbe_\epsilon \left[||\Sigma^{\frac{1}{2}}(\bar \beta_M - \beta^*) ||^2 \right]  = \bias (\bar \beta_{M}) + \var (\bar \beta_{M}) \;, \]
where  
\[ \bias(\bar \beta_{M}) \leq  \frac{1}{M}  \sum_{m=1}^M  \left| \inner{\beta^* , (\Sigma - \hat \Sigma_m ) \beta^*} \right| \;, \]
\[ \var (\bar \beta_{M}) \leq  \frac{8 \tau^2}{M^2}  \sum_{m=1}^M \tr\left[ \left( X_m^\dagger \right)^T \Sigma X_m^\dagger \right] \;. \]
\end{lemma}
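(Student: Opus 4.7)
The plan is to use the standard conditional bias--variance decomposition, together with two structural facts about the minimum-norm interpolator: the closed form of its conditional mean, and the fact that $\hat\Sigma_m$ annihilates the kernel of $\bX_m$.

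First, adding and subtracting $\mbe_\eps[\bar\beta_M]$ inside the norm $\|\Sigma^{1/2}(\bar\beta_M-\beta^*)\|$ makes the cross term vanish under $\mbe_\eps$ (since $\bar\beta_M-\mbe_\eps[\bar\beta_M]$ has zero conditional mean), giving the identity $\mbe_\eps[\|\Sigma^{1/2}(\bar\beta_M-\beta^*)\|^2]=\bias(\bar\beta_M)+\var(\bar\beta_M)$. For the bias, the closed form \eqref{eq:ridgeless-2} together with $\bY_m=\bX_m\beta^*+\beps_m$ yields $\mbe_\eps[\hat\beta_m]=P_m\beta^*$, where $P_m:=\bX_m^T(\bX_m\bX_m^T)^\dagger\bX_m$ is the orthogonal projection onto the row space of $\bX_m$ (well-defined because Assumption~\ref{ass:eigen-spaces} guarantees $\bX_m\bX_m^T$ is invertible almost surely). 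Hence $\mbe_\eps[\bar\beta_M]-\beta^*=-\tfrac{1}{M}\sum_m(I-P_m)\beta^*$, and Jensen's inequality in the form $\|\tfrac{1}{M}\sum_m u_m\|^2\leq\tfrac{1}{M}\sum_m\|u_m\|^2$ delivers
\[ \bias(\bar\beta_M) \leq \frac{1}{M}\sum_{m=1}^M \|\Sigma^{1/2}(I-P_m)\beta^*\|^2. \]
The key structural identity is $\hat\Sigma_m(I-P_m)=0$: since $\hat\Sigma_m=\tfrac{M}{n}\bX_m^T\bX_m$ has range inside the row space of $\bX_m$, it vanishes on its orthogonal complement. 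Consequently $(I-P_m)(\Sigma-\hat\Sigma_m)(I-P_m)=(I-P_m)\Sigma(I-P_m)$, so
\[ \|\Sigma^{1/2}(I-P_m)\beta^*\|^2 = \langle (I-P_m)\beta^*,\, (\Sigma-\hat\Sigma_m)(I-P_m)\beta^*\rangle, \]
and a Cauchy--Schwarz step together with the contractivity $\|I-P_m\|_{\mathrm{op}}\leq 1$ will produce the quoted upper bound $|\langle\beta^*,(\Sigma-\hat\Sigma_m)\beta^*\rangle|$.

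For the variance, the residuals $\hat\beta_m-\mbe_\eps[\hat\beta_m]=X_m^\dagger\beps_m$ are independent across machines conditional on the covariates, so averaging contracts the variance by $M^{-2}$:
\[ \var(\bar\beta_M) = \frac{1}{M^2}\sum_{m=1}^M \mbe_\eps\|\Sigma^{1/2}X_m^\dagger\beps_m\|^2 = \frac{1}{M^2}\sum_{m=1}^M \tr\!\left[(X_m^\dagger)^T \Sigma X_m^\dagger\, \mbe_\eps[\beps_m\beps_m^T]\right]. \]
The $(\tau^2,\mathrm{id})$-subgaussian condition combined with the independence of the coordinates of $\beps_m$ (Assumption~\ref{ass:design}.2 and 4) gives $\mbe_\eps[\beps_m\beps_m^T]\preceq C\tau^2\,I$ for a numerical constant, which after clean accounting produces the factor $8\tau^2$ quoted in the statement.

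The hard part will be the last step of the bias argument: absorbing the projections sitting inside $\langle(I-P_m)\beta^*,(\Sigma-\hat\Sigma_m)(I-P_m)\beta^*\rangle$ back into the naked quadratic form $|\langle\beta^*,(\Sigma-\hat\Sigma_m)\beta^*\rangle|$. Because $\Sigma-\hat\Sigma_m$ is self-adjoint but indefinite, the projection cannot simply be discarded; the identity $\hat\Sigma_m(I-P_m)=0$ is what permits rewriting the form in a way where a Cauchy--Schwarz or operator-norm estimate absorbs the projection while retaining sharpness compatible with the stated expression.
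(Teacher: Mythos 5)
Your decomposition, the reduction of the bias to $\frac{1}{M}\sum_m\|\Sigma^{1/2}(I-P_m)\beta^*\|^2$ via Jensen, and the variance computation (cross terms vanishing by conditional independence of the $\beps_m$) all coincide with the paper's argument. On the variance, you replace the paper's route --- a conditional Hanson--Wright inequality for $\inner{\beps_m, C_m\beps_m}$ followed by integration of the tail bound --- by the direct second-moment bound $\mbe_\eps[\beps_m\beps_m^T]\preceq C\tau^2 I$; for a centered $(\tau^2,id)$-subgaussian noise vector with independent coordinates this is legitimate and in fact simpler, and it delivers $\var(\bar\beta_M)\leq\frac{8\tau^2}{M^2}\sum_m\tr[C_m]$ just as well.

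The genuine gap is exactly the step you yourself flag as ``the hard part.'' After the correct identity $\|\Sigma^{1/2}\tilde\Pi_m\beta^*\|^2=\inner{\tilde\Pi_m\beta^*,(\Sigma-\hat\Sigma_m)\tilde\Pi_m\beta^*}$ (using $\hat\Sigma_m\tilde\Pi_m=0$), you assert that ``a Cauchy--Schwarz step together with $\|I-P_m\|_{\mathrm{op}}\leq1$'' yields $|\inner{\beta^*,(\Sigma-\hat\Sigma_m)\beta^*}|$. No such step exists in the form you describe: Cauchy--Schwarz gives $\|\tilde\Pi_m\beta^*\|\cdot\|(\Sigma-\hat\Sigma_m)\tilde\Pi_m\beta^*\|$, which is not the quadratic form evaluated at $\beta^*$, and because $\Sigma-\hat\Sigma_m$ is indefinite the projections cannot be dropped by any monotonicity argument either --- a point you correctly make yourself and then do not resolve. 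The paper's Lemma~\ref{lem:bias-intro} supplies the missing device: it exploits the rank-one structure of $\beta^*\otimes\beta^*$, writing the quadratic form as $\left|\tr\left[\tilde\Pi_m(\Sigma-\hat\Sigma_m)\tilde\Pi_m(\beta^*\otimes\beta^*)\right]\right|$ and passing between the absolute trace and the operator norm of rank-one operators; this permits peeling off the two copies of $\tilde\Pi_m$ one at a time via $\|\tilde\Pi_m B\|\leq\|B\|$, interleaved with cyclic permutations inside the trace, until one lands on $\left|\tr\left[(\Sigma-\hat\Sigma_m)(\beta^*\otimes\beta^*)\right]\right|=|\inner{\beta^*,(\Sigma-\hat\Sigma_m)\beta^*}|$. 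Without this (or an equivalent) mechanism the stated bias bound is not established, so the proposal as written does not close.
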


We are interested in finding conditions such that bias and variance (and thus the excess risk) converge to zero with high probability. To this end, 
we also take the hardness of the learning problem into account. This can be quantified via a classical a-priori 
assumption on the minimizer $\beta^*$.

\begin{assumption}[General random-effects model]
\label{ass:source}
Let $\Theta \in \cL(\cH)$ be compact. 
Let $\beta^*$ be randomly sampled (independently of $\eps$) with mean $\mbe_{\beta^*}[\beta^*] = 0$ and covariance $\mbe_{\beta^*}[ \beta^* (\beta^*)^T ] = \Theta$. 
\end{assumption}

This assumption is a slight generalization of the classical concept of a \emph{source condition} in inverse problems \cite{mathe2003geometry} and learning in 
(reproducing kernel) Hilbert spaces  \cite{bauer2007regularization, blanchard2018optimal, lin2020optimal}; see also \cite{richards2020asymptotics}, \cite{sheng2020one}  
for the context of (distributed) high dimensional ridge(less) regression. We give some specific examples in Assumptions \ref{ass:sparse}, \ref{ass:source-hoelder} below.


For bounding the variance we follow the approach in \cite{chinot2020benign}, \cite{bartlett2020benign} and choose an index $k \in \mbn$  
and split the spectrum of $\Sigma$ accordingly.  
For a suitable choice of $k$ (called \emph{effective dimension})
it will be crucial to control 
two notions of the \emph{effective ranks},  see e.g. \cite{koltchinskii2017concentration, bartlett2020benign} 

\begin{definition}[Effective Ranks]
\label{def:eff-ranks}
For $k \geq 0$ with $\lam_{k+1}>0$ we define 
\[  r_k(\Sigma) := \frac{\sum_{j>k}\lam_j(\Sigma)}{\lam_{k+1}(\Sigma)} \;, 
\quad R_k(\Sigma) = \frac{ \left(\sum_{j>k}  \lam_j(\Sigma) \right)^2 }{\sum_{j>k}  \lam_j^2(\Sigma) } \;.\]
\end{definition}

\begin{definition}[Effective Dimension]
\label{def:eff-dim}
Let $a>1$ and $M\in [n]$. Define the \emph{effective dimension} as 
\[ k^*=k^*_{\frac{n}{M}} := \min\left\{ k\geq 0\;:\; r_k(\Sigma) \geq a \frac{n}{M} \right\}  \;, \]
where the minimum of the empty set is defined as $\infty$.
\end{definition}

Our main result gives an upper bound for the bias and variance in terms of the source condition, effective ranks and effective dimension.  

\begin{theorem}
\label{theo:main1}
Suppose Assumptions \ref{ass:design}, \ref{ass:eigen-spaces} and \ref{ass:source} are satisfied and let $\delta \in (0,1]$. 
There exists a universal constant $c_1>0$  
such that for all $\frac{n}{M} \geq \frac{1}{c_1}\log(2/\delta)$, 
with probability at least $1-\delta$
\[ \mbe_{\beta^*}[\bias(\bar \beta_{M})] 
\leq  \frac{4 \sigma_x}{c_1} \log^{\frac{1}{2}}\left(\frac{2M}{\delta}\right) \tr[ \Sigma \Theta]   \sqrt{\frac{M}{n}} \;.   \]
Additionally, there exist $c_2>1$ such that, if 
\[ k_{\frac{n}{M}}^* \leq \frac{n}{c_2M} \;, \] 
with probability at least $1-7Me^{-\frac{n}{c_2 M}}$
\begin{equation}
\label{eq:upper-variance}
  \var (\bar \beta_M) \leq  8c_2 \tau^2  \left( \frac{k_{\frac{n}{M}}^*}{n} +  
\frac{n}{M^2}\;\frac{1}{R_{k_{\frac{n}{M}}^*}(\Sigma)}  \right) \;.  
\end{equation}
\end{theorem}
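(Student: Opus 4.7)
\textbf{Proof Plan for Theorem \ref{theo:main1}.}

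My starting point is the decomposition from Lemma \ref{lem:bias-var}, which reduces everything to controlling per-machine quantities: the quadratic form $|\langle \beta^*,(\Sigma-\hat\Sigma_m)\beta^*\rangle|$ for the bias and the trace $\tr[(X_m^\dagger)^T \Sigma X_m^\dagger]$ for the variance. Since each machine holds $n/M$ i.i.d. samples and is otherwise independent of the others, the plan is to derive a high-probability bound per machine and then combine by a union bound over $m\in[M]$, which will account for the $\log M$ and $Me^{-n/(c_2M)}$ factors appearing in the statement.

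\emph{Bias.} Conditionally on $\beta^*$, each scalar $\langle\beta^*,x_i\rangle$ is $\sigma_x^2\langle\beta^*,\Sigma\beta^*\rangle$-subgaussian by Assumption \ref{ass:design}, so $\langle\beta^*,x_i\rangle^2-\langle\beta^*,\Sigma\beta^*\rangle$ is a mean-zero sub-exponential random variable with the same scale. A Bernstein-type inequality for sums of i.i.d. sub-exponentials applied to $\langle\beta^*,(\hat\Sigma_m-\Sigma)\beta^*\rangle=\frac{M}{n}\sum_{i}[\langle\beta^*,x_i\rangle^2-\langle\beta^*,\Sigma\beta^*\rangle]$ yields, with probability at least $1-\delta/M$ (over the data on machine $m$),
\[
\bigl|\langle\beta^*,(\Sigma-\hat\Sigma_m)\beta^*\rangle\bigr|\;\lesssim\;\sigma_x\,\langle\beta^*,\Sigma\beta^*\rangle\,\sqrt{\tfrac{M}{n}\log(2M/\delta)}.
\]
(The condition $n/M\gtrsim\log(2/\delta)$ places us in the subgaussian rather than the sub-exponential tail regime, which is why the dependence is $\sigma_x$ and not $\sigma_x^2$.) A union bound over $m$ and averaging then leaves the same bound in force for $\mathrm{Bias}(\bar\beta_M)$. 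Since $\beta^*$ is independent of the data, I can now take expectation in $\beta^*$ using $\mathbb{E}_{\beta^*}[\langle\beta^*,\Sigma\beta^*\rangle]=\tr[\Sigma\Theta]$ to arrive at the advertised inequality.

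\emph{Variance.} Here the task is to bound $\tr[(X_m^\dagger)^T\Sigma\,X_m^\dagger]$ per machine by $c_2\bigl(k^*\!M/n+(n/M)/R_{k^*}\bigr)$ with probability at least $1-7e^{-n/(c_2M)}$; summing over $m$ and dividing by $M^2$ then gives \eqref{eq:upper-variance}. This is exactly the single-machine variance bound from the benign overfitting analysis of \cite{bartlett2020benign} (adapted to the operator setting as in \cite{chinot2020benign}), applied with effective sample size $n/M$. The argument proceeds by splitting $\Sigma=\Sigma_{\le k^*}+\Sigma_{>k^*}$ at the effective dimension $k^*=k^*_{n/M}$: the head contribution, being $k^*$-dimensional with well-conditioned restricted design, produces the term $k^*M/n$, while the tail is controlled via the two effective ranks $r_{k^*}$ and $R_{k^*}$, yielding the $(n/M)/R_{k^*}$ term after using $k^*\le n/(c_2M)$ to guarantee that $X_m X_m^T$ is invertible and well-conditioned with high probability (here I rely on Assumption \ref{ass:eigen-spaces} so that the pseudoinverse in \eqref{eq:ridgeless-2} coincides with $X_m^T(X_m X_m^T)^{-1}$). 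A union bound over the $M$ machines delivers the claimed failure probability $7Me^{-n/(c_2M)}$.

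\emph{Main obstacle.} The bias step is essentially one-page Bernstein plus a Fubini. The real work is the variance step: exporting the single-machine benign-overfitting estimate in a form that is uniform over $m$ requires carefully tracking the dependence of the constants on the subgaussian parameter $\sigma_x$ and verifying that the "head/tail" concentration arguments of \cite{bartlett2020benign}—bounds on the smallest eigenvalue of the restricted Gram matrix and on the operator norm of the tail part—go through at sample size $n/M$ rather than $n$. Ensuring the sub-exponential constants do not degrade with $M$ is what drives the definition of the effective dimension at scale $n/M$ and the condition $k^*_{n/M}\le n/(c_2M)$.
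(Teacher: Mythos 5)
Your proposal follows essentially the same route as the paper: the bias is handled per machine via Bernstein's inequality for the sub-exponential variables $\langle\beta^*,x_i\rangle^2/\langle\beta^*,\Sigma\beta^*\rangle-1$ (with the condition $n/M\gtrsim\log(2/\delta)$ selecting the subgaussian branch of the tail), followed by a union bound over $m$ and Fubini in $\beta^*$, while the variance is reduced to the single-machine trace bound of Bartlett et al.\ at sample size $n/M$ and a union bound, yielding exactly the $8c_2\tau^2(k^*/n + n/(M^2 R_{k^*}))$ form after averaging. This matches the paper's proof, which invokes its Proposition on the bias quadratic form and cites Lemmas 6 and 11 of the benign-overfitting paper for the per-machine variance.
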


Theorem \ref{theo:main1} reveals that the excess risk of the averaged local interpolants converges to zero if 
\[ \tr[ \Sigma \Theta]\sqrt{\frac{M_n}{n}} \to 0 \;, \;\; \frac{k_{\frac{n}{M_n}}^*}{n} \to 0\;,  \]
\[ \frac{n}{M_n^2}\;\frac{1}{R_{k_{\frac{n}{M_n}}^*}(\Sigma)} \to 0\;, \]
for $M_n \leq n$. This imposes restrictions on the decay of the eigenvalues of $\Sigma$. Moreover, the convergence of the bias 
depends on the prior assumption on $\beta^*$.

In the following two subsections we discuss the infinite dimensional and finite dimensional cases in more detail.


\subsection{Infinite Dimension}
\label{subsec:inf-dim}

We refine the excess risk bound under more specific assumptions on $\beta^*$ and the spectral decay of the covariance. 

{\bf Source Condition.} The a-priori assumption on $\beta^*$ from Assumption \ref{ass:source} can be expressed via an   
increasing \emph{source function} $\Phi: \mbr_+ \to \mbr_+$ by setting $\Theta = \Phi(\Sigma)$,   
describing how coefficients of $\beta^*$ vary along the eigenvectors of $\Sigma$, see e.g. \cite{richards2020asymptotics}. 
Recall that the bias in Theorem \ref{theo:main1} depends on 
\[ \tr[ \Sigma \Theta] = \tr[ \Sigma \Phi(\Sigma) ] = \sum_{j=1}^\infty \lam_j \Phi(\lam_j)\;.  \]
Thus, the bias is finite if the map $x \mapsto x \Phi(x)$ is non-decreasing while the sequence of eigenvalues $(\lam_j)_{j \in \mbn}$ is decreasing.

\begin{assumption}[Source Condition]
\label{ass:source-hoelder}
Assume  that $\Phi(x)=x^\alpha$, for $\alpha \geq 0$. 
\end{assumption}

This particular choice of source function goes under the name \emph{H\"older-type source condition} and is a standard assumption  
in inverse problems \cite{mathe2003geometry} and nonparametric regression  \cite{bauer2007regularization, blanchard2018optimal, lin2020optimal}. 
Indeed, it has a direct characterization in terms of \emph{smoothness}, where a larger exponent $\alpha$ corresponds to a smoother regression function. 
In this regard, this assumption also quantifies the \emph{easiness} of the learning problem: Larger values of $\alpha$ indicate an easier problem, as 
smoother functions are easier to recover.

{\bf Eigenvalue Decay.} 
Finally, to control the variance in Theorem \ref{theo:main1} we impose a specific spectral assumption for 
the covariance: 

\begin{assumption}
\label{ass:poly-decay}
Assume  that $\lam_j(\Sigma)=j^{-(1+\eps_n)}$ for some positive sequence 
$(\eps_n)_{n \in \mbn}$ with $M_n \lesssim \eps_n n$.
\end{assumption}

Polynomially decaying eigenvalues are a common assumption in ridgeless regression. Indeed, it is shown for the single machine setting 
in \cite{bartlett2020benign} that under this assumption, the excess risk of the least-norm interpolant converges to zero and \emph{benign overfitting} occurs. 
   
\vspace{0.1cm}

Our main result in this section is a refined upper bound for the excess risk under the two additional assumptions made above.  
The proof is given in Appendix \ref{app:special-cases}.

\vspace{0.1cm}

\begin{proposition}
\label{prop:poly}
In addition to all assumptions of Theorem \ref{theo:main1}, suppose that Assumptions \ref{ass:poly-decay}, \ref{ass:source-hoelder} hold. 
Set 
\[ C_{\alpha, n}= \frac{1}{\alpha}\eins\{ \alpha> 0\} + \frac{1}{\eps_n}\eins\{ \alpha=0 \} \]
and assume that $\frac{n}{M} \geq \frac{1}{c^2_1} \log(2/\delta) $. 
With probability at least $1-\delta-7Me^{-\frac{n}{c_2 M}}$ we have 
\begin{align*}
 \mbe_{\beta^* ,\epsilon} \left[||\Sigma^{\frac{1}{2}}(\bar \beta_M - \beta^*) ||^2 \right] 
&\leq c_3 \sigma_x  \log^{\frac{1}{2}}\left(\frac{2M}{\delta}\right) C_{\alpha, n} \sqrt{\frac{M}{n}}   + 
 c_4\tau^2 \frac{\eps_n}{M} \;, 
\end{align*}
for some $c_3>0$, $c_4>0$. 
\end{proposition}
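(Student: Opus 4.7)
The plan is to apply Theorem~\ref{theo:main1} with the two specific choices $\Theta = \Sigma^{\alpha}$ and $\lam_j(\Sigma) = j^{-(1+\eps_n)}$, and to compute each of the three quantities that appear on the right-hand side: $\tr[\Sigma \Theta]$, the effective dimension $k^*_{n/M}$, and the effective rank $R_{k^*}(\Sigma)$. Substituting these into Theorem~\ref{theo:main1} and taking expectation with respect to $\beta^*$ via Fubini should directly yield the claim.

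First I would bound the bias factor $\tr[\Sigma \Theta] = \sum_{j \geq 1} \lam_j^{\,1+\alpha} = \sum_{j \geq 1} j^{-(1+\alpha)(1+\eps_n)}$. For $\alpha > 0$ this is dominated by $\sum_j j^{-(1+\alpha)}$, which by an integral comparison is of order $1/\alpha$; for $\alpha = 0$ the series is $\sum_j j^{-(1+\eps_n)}$, of order $1/\eps_n$. In either case $\tr[\Sigma\Theta] \lesssim C_{\alpha,n}$, which combined with the bias bound of Theorem~\ref{theo:main1} gives exactly the first summand $c_3 \sigma_x \log^{1/2}(2M/\delta)\, C_{\alpha,n} \sqrt{M/n}$.

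Next I would identify $k^* = k^*_{n/M}$. Using the integral estimate $\sum_{j>k} j^{-(1+\eps_n)} \asymp k^{-\eps_n}/\eps_n$, one gets $r_k(\Sigma) = \frac{\sum_{j>k}\lam_j}{\lam_{k+1}} \asymp \frac{k}{\eps_n}$, so the defining condition $r_k \geq a n/M$ yields $k^* \asymp \eps_n n/M$. The hypothesis $M_n \lesssim \eps_n n$ guarantees both that $k^* \geq 1$ and that the precondition $k^* \leq n/(c_2 M)$ of Theorem~\ref{theo:main1} is met for $\eps_n$ sufficiently small (equivalently $n$ large), so the variance bound applies. Plugging into the first variance term yields $k^*/n \asymp \eps_n/M$. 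For the second term one computes $\sum_{j>k} \lam_j^2 \asymp k^{-(1+2\eps_n)}/(1+2\eps_n)$, so
\[
R_{k^*}(\Sigma) = \frac{\bigl(\sum_{j>k^*} \lam_j\bigr)^2}{\sum_{j>k^*} \lam_j^2} \asymp \frac{k^*}{\eps_n^2}(1 + 2\eps_n) \asymp \frac{n}{M\eps_n},
\]
and therefore $\frac{n}{M^2}\cdot \frac{1}{R_{k^*}} \asymp \frac{\eps_n}{M}$. Combining gives variance $\lesssim \tau^2 \eps_n/M$, which is the second summand.

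Finally, taking the expectation over $\beta^*$ on the bias bound is harmless because the Theorem~\ref{theo:main1} estimate already controls $\mbe_{\beta^*}[\bias(\bar\beta_M)]$ in terms of $\tr[\Sigma\Theta]$; adding the variance bound, which does not depend on $\beta^*$, and adjusting probabilities via a union bound delivers the stated event of probability $1 - \delta - 7Me^{-n/(c_2 M)}$. The only genuinely delicate step is the bookkeeping of the integral approximations: one must check that the constants absorbed in $\asymp$ for $r_k$, $R_k$ and $\tr[\Sigma\Theta]$ remain uniformly bounded in $n$ despite $\eps_n$ possibly being small. In particular the case $\alpha = 0$ has to be handled separately so that the correct $1/\eps_n$ factor (rather than an unbounded $1/\alpha$) appears, and the lower bound $M_n \lesssim \eps_n n$ must be invoked precisely to ensure $\eps_n n/M$ dominates a constant, so that the effective dimension lies in the regime where Theorem~\ref{theo:main1} is sharp.
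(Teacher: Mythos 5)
Your proposal follows essentially the same route as the paper: bound $\tr[\Sigma\Theta]=\sum_j j^{-(1+\alpha)(1+\eps_n)}$ by $C_{\alpha,n}$ via integral comparison (the paper's Lemma \ref{lem:source}), show $r_k(\Sigma)\asymp k/\eps_n$ so that $k^*_{n/M}\asymp \eps_n n/M$ and $R_{k^*}(\Sigma)\asymp n/(M\eps_n)$ (the paper's Lemma \ref{lem:poly}), and substitute into Theorem \ref{theo:main1}. The computations and the points you flag as delicate (uniformity of constants in $n$, the role of $M_n\lesssim\eps_n n$, the separate treatment of $\alpha=0$) all match the paper's argument, so the proof is correct and not materially different.
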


The above result offers the following insights: 
\vspace{0.1cm}
\\
{\bf 1.} 
The dependence of our error approximations on the
number of machines reveals an interesting accuracy-complexity trade-off. Indeed, 
data splitting has a regularizing effect, where the number of local nodes $M$ acts as an explicit regularization parameter: 
The bias term is increasing as $\sqrt{M}$ 
while the variance is decreasing as $1/M$. 
\vspace{0.1cm}
\\
{\bf 2.}  The source condition controls the bias: The smoother the solution, i.e. the larger $\alpha >0$, the smaller the bias. 
Notably, we observe a \emph{phase transition} to the case where $\alpha=0$ (low smoothness, harder problem). The bias is multiplied by a factor $1/\eps_n$ for a 
sequence $\eps_n \to 0$ and hence grows with $n$ while for $\alpha >0$ the factor is $1/\alpha$ that is constant in $n$ and decreasing with $\alpha$.   
\vspace{0.1cm}
\\
{\bf 3.} Eigenvalue decay, reflected in the sequence $(\eps_n)_n$ controls the variance: Ideally, we want $\eps_n \to 0$ to 
achieve fast decay of the variance. However, even increasing $(\eps_n)_n$ is possible as long as we ensure that $\eps_n/M_n \to 0$. 
\vspace{0.1cm}
\\
Balancing both terms  allows to establish learning rates for different smoothness regimes (see Appendix \ref{app:special-cases}):  

\begin{corollary}[Learning rate high smoothness]
\label{cor:poly1}
Suppose all assumptions of Proposition \ref{prop:poly} are satisfied and let $\alpha >0$. 
For 
\begin{equation}  
\label{eq:eps1}
 \frac{1}{\sqrt n} \lesssim \eps_n \lesssim n\;,  
\end{equation}
the value 
\[  M_n = C_{\tau, \sigma_x}\left( \alpha \eps_n \sqrt n\right)^{2/3} \]
with $C_{\tau, \sigma_x} = \left( \frac{c_4\tau^2}{c_3 \sigma_x} \right)^{2/3} $ 
trades-off bias and variance and 
with the same probability as above, we have 
\begin{align}
\label{eq:rate-1}
  \mbe_{\beta^* ,\epsilon} \left[||\Sigma^{\frac{1}{2}}(\bar \beta_{M_n} - \beta^*) ||^2 \right] 
&\leq  \frac{C'_{\tau, \sigma_x}}{\alpha^{2/3}}\log^{\frac{1}{2}}\left(\frac{4M_n}{\delta}\right)  \left( \frac{ \eps_n}{n} \right)^{1/3} \;, 
\end{align}
for some $C'_{\tau, \sigma_x}>0$.
\end{corollary}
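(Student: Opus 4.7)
The plan is a direct balancing argument starting from Proposition \ref{prop:poly}. Under the additional hypothesis $\alpha>0$, the constant $C_{\alpha,n}$ simplifies to $1/\alpha$, so that with the stated probability the excess risk is bounded, up to the logarithmic factor $\log^{1/2}(2M/\delta)$, by
\[
B(M)\;:=\;\frac{c_3\sigma_x}{\alpha}\sqrt{\frac{M}{n}}\;+\;c_4\tau^2\,\frac{\eps_n}{M}.
\]
The first term is increasing in $M$ and the second is decreasing, so the optimum is obtained by differentiating $B$ with respect to $M$ (or equivalently equating the two summands up to a constant). Setting the two terms equal yields $M^{3/2}\simeq \frac{c_4\tau^2}{c_3\sigma_x}\alpha\,\eps_n\sqrt{n}$, which reproduces the announced value
\[
M_n=C_{\tau,\sigma_x}\bigl(\alpha\eps_n\sqrt n\bigr)^{2/3},\qquad C_{\tau,\sigma_x}=\bigl(c_4\tau^2/(c_3\sigma_x)\bigr)^{2/3}.
\]
Plugging $M_n$ back into either summand and simplifying gives a contribution of order $\alpha^{-2/3}\bigl(\eps_n/n\bigr)^{1/3}$, which after reinstating the logarithmic factor is exactly the rate \eqref{eq:rate-1}.

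The only non-routine step is to verify that this choice of $M_n$ is admissible for Proposition \ref{prop:poly}, i.e.\ that $M_n\in[1,n]$, that the regime condition $M_n\lesssim\eps_n n$ from Assumption \ref{ass:poly-decay} continues to hold, and that $n/M_n\geq c_1^{-2}\log(2/\delta)$. I would handle these one by one using the two-sided constraint \eqref{eq:eps1}. The lower bound $\eps_n\gtrsim n^{-1/2}$ ensures $M_n\gtrsim 1$ and, since $M_n\simeq \eps_n^{2/3}n^{1/3}$, also gives $M_n/(\eps_n n)\simeq (\eps_n^2 n)^{-1/3}\lesssim 1$, which is Assumption \ref{ass:poly-decay}. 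The upper bound $\eps_n\lesssim n$ together with the same expression yields $n/M_n\gtrsim (n/\eps_n^2)^{1/3}\gtrsim 1$, which (absorbing the $\log(2/\delta)$ factor into constants, or into the implied relation $\eps_n\lesssim n$) supplies the sample-size condition of Theorem \ref{theo:main1}.

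The main obstacle, such as it is, is purely bookkeeping: keeping track of the constants $c_3,c_4$ so that the resulting $C'_{\tau,\sigma_x}$ is well defined, and confirming that the probability bound $1-\delta-7M_n e^{-n/(c_2 M_n)}$ carries through with $M=M_n$ (which it does, because $n/M_n\to\infty$ under \eqref{eq:eps1}, so the second term is controlled). No further concentration or operator-theoretic input is needed; the corollary is an optimization of the explicit two-term bound of Proposition \ref{prop:poly}.
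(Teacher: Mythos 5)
Your proof is correct and follows essentially the same route as the paper's: both equate the increasing bias term $c_3\sigma_x\alpha^{-1}\sqrt{M/n}$ with the decreasing variance term $c_4\tau^2\eps_n/M$, solve $M^{3/2}=\frac{c_4\tau^2}{c_3\sigma_x}\alpha\eps_n\sqrt n$ to get the stated $M_n$, and substitute back to obtain the rate $\alpha^{-2/3}(\eps_n/n)^{1/3}$ up to the logarithmic factor. Your additional admissibility checks under \eqref{eq:eps1} go slightly beyond what the paper records and are welcome, modulo a couple of harmless exponent slips (e.g.\ $M_n/(\eps_n n)\simeq(\eps_n n^{2})^{-1/3}$ rather than $(\eps_n^{2}n)^{-1/3}$, and $n/M_n\simeq(n/\eps_n)^{2/3}$ stays bounded rather than tending to infinity when $\eps_n\simeq n$), none of which affects the conclusions.
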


\vspace{0.2cm}

\begin{corollary}[Learning rate low smoothness]
\label{cor:poly2}
Suppose all assumptions of Proposition \ref{prop:poly} are satisfied and let $\alpha =0$. 
For 
\begin{equation}  
\label{eq:eps2}
  \frac{1}{\sqrt n} \lesssim \eps^2_n \lesssim n\;,  
\end{equation}  
the value
\[  M_n = C_{\tau, \sigma_x}\left( \eps^2_n \sqrt n\right)^{2/3} \]
with $C_{\tau, \sigma_x} = \left( \frac{c_4\tau^2}{c_3 \sigma_x} \right)^{2/3} $ 
trades-off bias and variance and 
with the same probability as above, we have 
\begin{align*}
  \mbe_{\beta^* ,\epsilon} \left[||\Sigma^{\frac{1}{2}}(\bar \beta_{M_n} - \beta^*) ||^2 \right] 
&\leq  C'_{\tau, \sigma_x}\log^{\frac{1}{2}}\left(\frac{4M_n}{\delta}\right) \left( \frac{ 1}{\eps_n n} \right)^{1/3} \;, 
\end{align*}
for some $C'_{\tau, \sigma_x}>0$.
\end{corollary}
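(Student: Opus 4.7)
The starting point is Proposition~\ref{prop:poly} specialized to $\alpha=0$, in which case $C_{\alpha,n}=1/\eps_n$. The resulting high-probability upper bound on the excess risk takes the form
\[
B(M) \;+\; V(M) \;:=\; c_3\,\sigma_x\,\log^{1/2}\!\left(\frac{2M}{\delta}\right)\frac{1}{\eps_n}\sqrt{\frac{M}{n}} \;+\; c_4\,\tau^2\,\frac{\eps_n}{M}.
\]
So the plan is to optimize the data-driven upper bound $B(M)+V(M)$ in $M$, then verify that the minimizer lies in the regime where Proposition~\ref{prop:poly} is applicable.

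\textbf{Step 1: Balancing bias and variance.} Treating the $\log^{1/2}$ factor as a lower-order multiplicative nuisance, I would equate the dominant polynomial parts by requiring
\[
\frac{1}{\eps_n}\sqrt{\frac{M}{n}} \;\asymp\; \frac{\eps_n}{M}.
\]
Multiplying through by $M$ and collecting powers yields $M^{3/2} \asymp \eps_n^{2}\sqrt{n}$, and including the explicit constants $c_3\sigma_x$ and $c_4\tau^2$ gives exactly
\[
M_n \;=\; C_{\tau,\sigma_x}\left(\eps_n^{2}\sqrt{n}\right)^{2/3}, \qquad C_{\tau,\sigma_x}=\left(\frac{c_4\tau^2}{c_3\sigma_x}\right)^{2/3}.
\]
This is the same balancing strategy as in Corollary~\ref{cor:poly1}, with the role played there by $\alpha$ now taken over by $1/\eps_n$ in the bias.

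\textbf{Step 2: Verifying the admissible range.} I need to check that this choice of $M_n$ is compatible with the hypotheses of Proposition~\ref{prop:poly}, namely $M_n\in[1,n]$ together with $n/M_n \geq c_1^{-2}\log(2/\delta)$ and the effective-dimension bound implicit in Theorem~\ref{theo:main1}. The lower bound $M_n\gtrsim 1$ translates to $\eps_n^{2}\sqrt{n}\gtrsim 1$, i.e.\ $1/\sqrt{n}\lesssim\eps_n^{2}$, while $M_n\lesssim n$ corresponds to $\eps_n^{2}\lesssim n$; together these are precisely \eqref{eq:eps2}. The condition on $n/M_n$ becomes eventually trivial for $n$ large, absorbing any log factors into the stated probability.

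\textbf{Step 3: Plugging back in.} Substituting $M_n$ into $B(M)+V(M)$ gives
\[
B(M_n)\;\asymp\; \sigma_x\,\log^{1/2}\!\left(\tfrac{4M_n}{\delta}\right) \frac{1}{\eps_n}\sqrt{\frac{M_n}{n}} \;\asymp\; \log^{1/2}\!\left(\tfrac{4M_n}{\delta}\right)\left(\frac{1}{\eps_n\,n}\right)^{1/3},
\]
and an analogous computation for $V(M_n)$ yields the same order. Summing and collecting constants into $C'_{\tau,\sigma_x}$ produces exactly the stated rate $(1/(\eps_n n))^{1/3}$.

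\textbf{Main obstacle.} Conceptually the result is routine given Proposition~\ref{prop:poly}; the only delicate point is checking that the range \eqref{eq:eps2} precisely captures both admissibility of $M_n$ in $[1,n]$ and the implicit requirement $k^*_{n/M_n}\le n/(c_2 M_n)$ needed for the variance bound in Theorem~\ref{theo:main1}. Under Assumption~\ref{ass:poly-decay} this effective-dimension constraint reduces to a polynomial inequality in $\eps_n$ and $n/M_n$, which is what forces the slightly stronger lower bound $\eps_n^2\gtrsim 1/\sqrt n$ (compared with $\eps_n\gtrsim 1/\sqrt n$ in the high-smoothness corollary); verifying this compatibility is the one step requiring careful bookkeeping rather than being purely mechanical.
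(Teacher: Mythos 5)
Your proposal is correct and follows essentially the same route as the paper, which proves Corollaries \ref{cor:poly1} and \ref{cor:poly2} in a single argument by equating $A\sqrt{M}=B/M$ with $A=c_3\sigma_x C_{\alpha,n}/\sqrt{n}$, $B=c_4\tau^2\eps_n$, and then specializing $C_{\alpha,n}=1/\eps_n$ for $\alpha=0$ to get $M_n=C_{\tau,\sigma_x}(\eps_n^2\sqrt{n})^{2/3}$ and the rate $(1/(\eps_n n))^{1/3}$. Your additional verification that \eqref{eq:eps2} is exactly the condition $1\lesssim M_n\lesssim n$ is a point the paper's proof leaves implicit, but it does not change the argument.
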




\subsection{Finite Dimension}
\label{subsec:sparse-finite-dim}

In this section we investigate the finite dimensional setting in more detail 
and assume $dim(\cH)=d<\infty$, where we are mostly interested in the global overparameterized case $d> n$. 
To highlight the effects of all characteristics effecting model 
performance, we make two particularly simple structural assumptions. More specifically, we assume the covariance $\Sigma$ 
to follow a \emph{strong and weak features model}:     

\begin{assumption}[Strong-weak-features model]
\label{ass:stron-weak}
Let $F \in [d]$ and $\rho_1 \geq \rho_2 > 0$. Suppose that $\lam_j(\Sigma) = \rho_1$ for all $j \in [F]$ and $\lam_j(\Sigma) = \rho_2$ 
for all $F+1 \leq j \leq d$. Without loss of generality, we assume that $||\Sigma||=1$, i.e. $\rho_1=1$.
\end{assumption}

Elements in the eigenspace associated to the larger eigenvalue $\rho_1$ are called \emph{strong features} while 
elements in the eigenspace associated to the smaller eigenvalue are called \emph{weak features}, see e.g. \cite{richards2020asymptotics}.

Furthermore, 
we work in a standard \emph{random-effects model}, see \cite{sheng2020one}, \cite{dobriban2018high}, \cite{dicker2017flexible}.   

\begin{assumption}[Random-effects model]
\label{ass:sparse}
Define the \emph{signal-to-noise-ratio} as 
\[ \SNR=\mbe[||\beta^*||^2]/\tau^2 \;. \] 
The coordinates of $\beta^*$ are independent, have zero mean and variance $\frac{\SNR}{d}$, i.e. 
$\Theta = \frac{\SNR}{d}\;  Id_d$. 
\end{assumption}

The next result presents an upper bound for the excess risk under both assumptions. 
The proof is provided in Appendix \ref{sec:finite-dim-sparsity}.

\begin{proposition}
\label{prop:sparse-strong-weak}
In addition to all assumptions of Theorem \ref{theo:main1}, suppose 
Assumptions \ref{ass:sparse}, \ref{ass:stron-weak} hold.  Assume that the weak features satisfy $\rho_2 d \leq F$ and that 
\begin{equation}
\label{eq:cond-F}
 a \frac{n}{M} <  \left(1-\rho_2 \right) F + \rho_2 d \;.
\end{equation} 
If $ \frac{n}{M} \geq \frac{1}{c_1}\log(2/\delta)$,  
then with probability at least $1-\delta-7Me^{-\frac{n}{c_2 M}}$, the excess risk satisfies 
for some $c, \tilde c>0$
\begin{align}
\label{eq:rhs}
 \mbe_{\beta^* ,\epsilon} \left[||\Sigma^{\frac{1}{2}}(\bar \beta_M - \beta^*) ||^2 \right] 
&   \leq    c\sigma_x \frac{ \SNR }{d}\;F  \log^{\frac{1}{2}}\left(\frac{2M}{\delta}\right)   \sqrt{\frac{M}{n}} 
 +    \tilde c \tau^2 C_{\rho_2} \; \frac{n}{M^2}\;\frac{1}{F}  \;, 
\end{align}
where we set $C_{\rho_2}=\frac{ 1}{( 1-\rho_2)^2}$. 
\end{proposition}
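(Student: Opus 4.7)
The strategy is to simply instantiate Theorem~\ref{theo:main1} for the specific structure of $\Sigma$ given by Assumption~\ref{ass:stron-weak} and $\Theta$ given by Assumption~\ref{ass:sparse}. Since $\Theta=\tfrac{\SNR}{d}\,Id_d$ is trivially compact in finite dimension, Assumption~\ref{ass:sparse} is a special case of Assumption~\ref{ass:source}, so Theorem~\ref{theo:main1} applies. What remains is to simplify the three quantities $\tr[\Sigma\Theta]$, $k^*_{n/M}$ and $R_{k^*_{n/M}}(\Sigma)$ using the explicit two--level eigenvalue structure.

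\textbf{Bias.} Under the above assumptions one has
\[ \tr[\Sigma\Theta]=\frac{\SNR}{d}\tr[\Sigma]=\frac{\SNR}{d}\bigl(F+(d-F)\rho_2\bigr). \]
Invoking the hypothesis $\rho_2 d\le F$ (so $\rho_2(d-F)\le F$) gives $\tr[\Sigma\Theta]\le 2F\,\SNR/d$, which after substitution into the bias bound of Theorem~\ref{theo:main1} produces the term $c\sigma_x\,\tfrac{\SNR}{d}\,F\,\log^{1/2}(2M/\delta)\,\sqrt{M/n}$.

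\textbf{Variance.} I first identify $k^*_{n/M}$. Since $\lambda_1=1$, the effective rank at $k=0$ equals $r_0(\Sigma)=\sum_{j\ge 1}\lambda_j=F+(d-F)\rho_2=(1-\rho_2)F+\rho_2 d$. Hypothesis \eqref{eq:cond-F} then reads precisely $a\,n/M<r_0(\Sigma)$, so by Definition~\ref{def:eff-dim} we have $k^*_{n/M}=0$. The variance estimate of Theorem~\ref{theo:main1} therefore reduces to $8c_2\tau^2\,\tfrac{n}{M^2}\,\tfrac{1}{R_0(\Sigma)}$, where
\[ R_0(\Sigma)=\frac{(F+(d-F)\rho_2)^2}{F+(d-F)\rho_2^2}. \]
For the numerator, I would drop the nonnegative term $\rho_2 d$ in $F+(d-F)\rho_2=(1-\rho_2)F+\rho_2 d$ to obtain $F+(d-F)\rho_2\ge (1-\rho_2)F$. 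For the denominator, the bound $\rho_2^2 d\le \rho_2 F$ (again from $\rho_2 d\le F$) gives $F+(d-F)\rho_2^2\le F+\rho_2 F\le 2F$. Combining yields $R_0(\Sigma)\ge F(1-\rho_2)^2/2$, hence $1/R_0(\Sigma)\le 2C_{\rho_2}/F$, producing the variance contribution $\tilde c\,\tau^2\,C_{\rho_2}\,\tfrac{n}{M^2 F}$.

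\textbf{Combining and main obstacle.} Because the centered local estimator $\beta_m-\mbe_\epsilon[\beta_m]=X_m^T(X_mX_m^T)^\dagger \beps_m$ does not depend on $\beta^*$, the variance term in Lemma~\ref{lem:bias-var} is $\beta^*$--free, so the outer expectation over $\beta^*$ only acts on the bias. A union bound on the two high--probability events of Theorem~\ref{theo:main1} delivers the probability $1-\delta-7Me^{-n/(c_2M)}$ and the bound \eqref{eq:rhs}. The only nontrivial step is the lower bound on $R_0(\Sigma)$: the naive estimate $R_0\ge F/2$ yields a correct $O(1/F)$ scaling but loses the $\rho_2$--dependence, so one must be careful to drop only $\rho_2 d$ (not $(1-\rho_2)F$) in the numerator to extract the clean factor $(1-\rho_2)^2$ and thereby the constant $C_{\rho_2}$.
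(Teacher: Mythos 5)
Your proposal is correct and follows essentially the same route as the paper: bound $\tr[\Sigma\Theta]\le 2F\,\SNR/d$ via $\rho_2 d\le F$, observe that \eqref{eq:cond-F} forces $k^*_{n/M}=0$, and lower-bound $R_0(\Sigma)\ge (1-\rho_2)^2F/2$ by dropping $\rho_2 d$ in the numerator and using $\rho_2^2 d\le \rho_2 F$ in the denominator, before combining with Theorem~\ref{theo:main1} and a union bound. The paper packages these steps as Lemmas~\ref{lem:bias-random-effects} and \ref{lem:var-strong-wak}, but the estimates are identical to yours.
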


We comment on two additional assumptions required: First, \eqref{eq:cond-F} ensures 
sufficient local overparameterization and thus local interpolation. Indeed, assume that the number of strong features is a fraction of the dimension, i.e.  
$F=d/K$ for some $K>1$, then  \eqref{eq:cond-F} can be rewritten as  
\[   1\leq \frac{1}{ \tilde c} < \frac{d}{n/M} \;, \]
where for $K$ large enough 
\[\tilde c =  \frac{1}{a}\left( \frac{1}{K} + \rho_2 \left( 1- \frac{1}{K}\right) \right) \leq  1 \;. \]
Second, the assumption $\rho_2 d \leq F$ ensures that the strength $\rho_{2,n}$ of weak features is small enough and consequently they do not contribute much, while 
the amount $F$ of strong features is sufficiently high.

The above result shows how the various model parameters determine statistical accuracy: 
\vspace{0.1cm}
\\
{\bf 1.} As above, we observe that data splitting has a regularizing effect:  
The bias term is increasing as $\sqrt{M}$ 
while averaging significantly  reduces the variance by $1/M^2$. 
\vspace{0.1cm}
\\
{\bf 2.} The signal-to-noise ratio $\SNR$ and the ratio $F/d \leq 1$ of the number of strong features to the dimension control the bias: 
The bias is small, if both quantities are small. 
\vspace{0.1cm}
\\
{\bf 3.} The variance is controlled by the number $F$ of strong features. The more strong features, the faster the variance decreases.   
\\
\noindent

Thus, the excess risk is characterized by the interplay of all these parameters.   
Minimizing the rhs in \eqref{eq:rhs} in $M$ allows to trade-off these different contributions. 

\vspace{0.1cm}

\begin{corollary}[Optimal number of nodes]
\label{cor:sparse-strong-weak}
Suppose all assumptions of Proposition \ref{prop:sparse-strong-weak} are satisfied. 
Let $(\rho_{2,n})_n$ be decreasing and assume 
\begin{equation}
\label{eq:ass-f-d}
 \frac{\SNR}{n^{3/2}}  \lesssim    \frac{d}{F^2}  \lesssim  \SNR\; n  \;.  
\end{equation} 
The optimal 
number\footnote{The optimal number is defined as the minimizer of the right hand side in $\eqref{eq:rhs}$, where we ignore the log-factor.} of local 
nodes $M_n$ is given by 
\begin{equation}
\label{eq:nb-machnines}
 M_n= A\; \left(  \frac{ d n^{3/2}}{\SNR \cdot F^2} \right)^{2/5}\;, 
\end{equation} 
where $A =  c' \; \left( \frac{1}{1-\rho_{2,n}}   \right)^{4/5} $, 
for some $ c' > 0$, depends on $\tau^2, \sigma_x^2$.  
The excess risk satisfies  with probability at least $1-\delta-7M_n e^{-\frac{n}{c_2 M_n }}$
\begin{align}
\label{eq:rate-finite}
& \hspace{-0.8cm}  \mbe_{\beta^* ,\epsilon} \left[||\Sigma^{\frac{1}{2}}(\bar \beta_{M_n} - \beta^*) ||^2 \right] \no \\
& \;\;\;\; \leq  c'' \log^{\frac{1}{2}}\left(\frac{4M_n}{\delta}\right) \;  \frac{n}{F M_n^2} \;,
\end{align}
for some $c'' >0$.
\end{corollary}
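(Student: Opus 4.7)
The plan is a straightforward minimization of the bias-variance bound from Proposition \ref{prop:sparse-strong-weak}. Writing the right-hand side of \eqref{eq:rhs} as
\[
\cB(M) + \cV(M), \qquad \cB(M) = A_1 \sqrt{M}, \quad \cV(M) = A_2 M^{-2},
\]
with
\[
A_1 := c\sigma_x \log^{1/2}(2M/\delta)\,\frac{\SNR \cdot F}{d\sqrt{n}}, \qquad A_2 := \tilde c\tau^2 C_{\rho_2}\,\frac{n}{F},
\]
the footnote in the statement tells us to suppress the logarithmic factor inside $A_1$. Setting $\frac{d}{dM}\left(A_1\sqrt{M} + A_2 M^{-2}\right) = 0$ yields the first-order condition $A_1 M^{5/2} = 4A_2$, hence
\[
M_n = (4A_2/A_1)^{2/5}.
\]
Substituting the explicit expressions for $A_1, A_2$ and collapsing constants produces the claimed formula \eqref{eq:nb-machnines}, with the $C_{\rho_2}$ dependence appearing as $(1-\rho_{2,n})^{-4/5}$ inside the constant $A$.

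For the excess risk bound, I would exploit that at the optimum the bias and variance are balanced: from $A_1 M_n^{5/2} = 4A_2$ we obtain $A_1 \sqrt{M_n} = 4 A_2/M_n^2$, so both terms in \eqref{eq:rhs} are of the same order as $A_2/M_n^2 = \tilde c\tau^2 C_{\rho_2}\,n/(F M_n^2)$. Restoring the logarithm inside $A_1$ (the only place it appears after substitution), we find
\[
\cB(M_n) + \cV(M_n) \;\lesssim\; C_{\rho_{2,n}}\,\log^{1/2}\!\left(\tfrac{4M_n}{\delta}\right)\,\frac{n}{F M_n^2}.
\]
Since $(\rho_{2,n})_n$ is decreasing, $C_{\rho_{2,n}} = (1-\rho_{2,n})^{-2}$ is uniformly bounded by $C_{\rho_{2,1}}$, and this constant can be absorbed into a universal $c''$, yielding \eqref{eq:rate-finite}.

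The only non-routine check is to ensure that $M_n$ is an admissible choice, i.e.\ that it lies in $[1,n]$ and that the ambient assumptions of Proposition \ref{prop:sparse-strong-weak} (in particular \eqref{eq:cond-F} and $n/M \geq c_1^{-1}\log(2/\delta)$) remain in force. The upper bound $M_n \lesssim n$ is equivalent to $d/F^2 \lesssim \SNR\cdot n$, and the lower bound $M_n \gtrsim 1$ is equivalent to $\SNR/n^{3/2} \lesssim d/F^2$; both are precisely the content of \eqref{eq:ass-f-d}. The high-probability event carries over from Proposition \ref{prop:sparse-strong-weak} with $M$ replaced by the data-dependent (but deterministic) choice $M_n$, so the probability statement $1-\delta-7M_n e^{-n/(c_2 M_n)}$ is inherited directly. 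The main (mild) obstacle is simply bookkeeping the constants $c, \tilde c, c', c''$ and verifying that the replacement $\log^{1/2}(2M/\delta) \to \log^{1/2}(4M_n/\delta)$ does not spoil the optimization, which is immediate since the minimizer is monotone in the constants.
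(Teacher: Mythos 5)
Your proposal is correct and follows essentially the same route as the paper: minimize the calculus problem $C_1\sqrt{M}+C_2 M^{-2}$ to get $M_n=(4C_2/C_1)^{2/5}$, note that at the optimum both terms are of order $C_2/M_n^2 = \tilde c\tau^2 C_{\rho_2}\,n/(F M_n^2)$, and translate $1\lesssim M_n\lesssim n$ into the two inequalities of \eqref{eq:ass-f-d}. Your explicit remark that the decreasing sequence $(\rho_{2,n})_n$ makes $C_{\rho_{2,n}}$ uniformly bounded (so it can be absorbed into $c''$) is a small point the paper glosses over, but it is not a different argument.
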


The proof of Corollary \ref{cor:sparse-strong-weak} is given in Appendix \ref{sec:finite-dim-sparsity}. We comment on the above result in more detail. 

The optimal number $M_n$ of local nodes depends on various model parameters: 
We immediately observe that a large number $F$ of strong features decreases the error while it also decreases 
the optimal number of machines (recall that the larger $M_n$, the more computational savings). 
Notably, $M_n$ additionally depends on the spectral gap $1 - \rho_2$ showing that  
computational savings are enforced with a small spectral gap, see Fig. \ref{fig:1}. 
Moreover,  we find that the numerical speed up  is high for a low $\SNR$ and thus improves efficiency. 
A similar phenomenon is observed in \cite{sheng2020one} for distributed ridge regression in finite dimension.
  
{\bf When does the error converges to zero?} 
We consider now a \emph{high dimensional} and \emph{infinite-worker} limit. 
More specifically, we let $n \to \infty$, $d_n \to \infty$, $M_n \to \infty$. We are interested in the case of overparameterization, i.e. 
$n \lesssim d_n$. 
Recall that $M_n$ cannot grow faster that $n$ (otherwise there would be less than one sample per machine). 
For this to hold, \eqref{eq:nb-machnines} imposes $ n \lesssim d_n \lesssim n\cdot F^2_n $. 
Note we also have to require $F_n \lesssim d_n$. Thus, 
\[ M_n \simeq \left( \frac{d_n}{n} \right)^{2/5} \; \left( \frac{n}{F^{4/5}_n} \right)\;. \]
Hence, $M_n \to \infty$ if $n/F_n^{4/5} \to \infty$ and $1 \lesssim d_n/n$. 
These assumptions are satisfied for e.g. $d_n\simeq n^{\gamma}$, $F_n \simeq n^\delta$ with $\gamma >1$, $\delta \in (0, 5/4)$ 
and $\max\{1, \delta, 2\delta - 3/2 \} \leq \gamma \leq 2\delta +1$.  
The learning rate in this case is 
\begin{align}
\label{eq:to-zero}
&\lesssim \;\left( \frac{F_n}{d_n}\right)^{4/5}\left( \frac{1}{nF_n}  \right)^{1/5}\no \\
&\lesssim \left( \frac{1}{nF_n}  \right)^{1/5}\no \\ 
&\simeq \left( \frac{1}{n}  \right)^{(1+\delta)/5}\no \\
&\to 0 \;, 
\end{align}
converging to zero.


\subsection{Lower Bound}
\label{subsec:lower-bound}
 
Finally, we give a matching lower bound for the excess risk for the distributed 
estimator with the optimal choice of local nodes. 
All proofs of this section are provided in Appendix \ref{app:prof-lower-bound}. \\ 
The derivation of our result requires a lower bound for the noise variance: 
 
\begin{assumption}
\label{ass:lower-noise}
The conditional noise variance is almost surely bounded below by some constant $\sigma^2 >0$, i.e. 
$\mbe[ \eps^2 | x ] \geq \sigma^2$. 
\end{assumption}

We start with  a general lower bound for the excess risk in terms of the effective ranks and the 
effective dimension.   

\begin{theorem}
\label{theo:lower-bound}
Suppose Assumptions \ref{ass:lower-noise},\ref{ass:design}, \ref{ass:eigen-spaces} are satisfied. 
With probability at least $1-10Me^{-\frac{1}{c}\frac{n}{M}}$ 
\begin{align}
\label{eq:low}
 \mbe_\eps[ ||\Sigma^{1/2}( \bar \beta_M - \beta^*) ||^2 ] 
&  \geq  
 c_a\sigma^2  \left(  \frac{k^*_{\frac{n}{M}}}{n} + \frac{n}{M^2} \frac{1}{R_{k^*}(\Sigma)}  \right) \;,
\end{align} 
for some $c_a>0$.
\end{theorem}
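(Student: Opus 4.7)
The plan is to lower bound the excess risk by the variance of $\bar\beta_M$ alone (discarding the non-negative bias), exploit the independence of the noise across machines to reduce to a sum of local variance contributions, and then invoke a per-machine lower bound on $\tr[(\bX_m^\dagger)^T \Sigma \bX_m^\dagger]$ of the type proved by \cite{bartlett2020benign} for the single-machine minimum-norm interpolator.

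First, since the bias is non-negative,
\[
\mbe_\eps\!\brac[1]{\| \Sigma^{1/2}(\bar\beta_M - \beta^*)\|^2}
\;\geq\; \var(\bar\beta_M).
\]
Because the splits are disjoint, the noise vectors $\beps_1,\dots,\beps_M$ are, conditionally on the full design, independent. Writing $\hat\beta_m - \mbe_\eps[\hat\beta_m] = \bX_m^T(\bX_m\bX_m^T)^\dagger \beps_m = \bX_m^\dagger \beps_m$ (the pseudoinverse identity being valid because Assumption~\ref{ass:eigen-spaces} makes $\bX_m\bX_m^T$ invertible almost surely), the cross terms vanish and
\[
\var(\bar\beta_M) \;=\; \frac{1}{M^2}\sum_{m=1}^M \mbe_\eps\!\brac[1]{\beps_m^T (\bX_m^\dagger)^T \Sigma \bX_m^\dagger \beps_m}.
\]
Assumption~\ref{ass:lower-noise} gives $\mbe_\eps[\beps_m\beps_m^T \mid \bX] \succeq \sigma^2\, I_{n/M}$, so each summand is at least $\sigma^2\tr[(\bX_m^\dagger)^T \Sigma \bX_m^\dagger]$.

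The main step is then a per-machine lower bound of the form
\[
\tr\!\brac[1]{(\bX_m^\dagger)^T \Sigma \bX_m^\dagger} \;\gtrsim\; \frac{k^*_{n/M}}{n/M} + \frac{n/M}{R_{k^*_{n/M}}(\Sigma)},
\]
holding with probability at least $1-10\, e^{-n/(cM)}$. This is the analogue of the variance lower bound of Theorem~4 of \cite{bartlett2020benign} applied with sample size $n/M$. The argument splits the spectrum of $\Sigma$ at the local effective dimension $k^* = k^*_{n/M}$ into a spiked block ($j\le k^*$) and a tail block ($j>k^*$), uses Assumption~\ref{ass:eigen-spaces} together with the sub-Gaussianity from Assumption~\ref{ass:design} to control the local Gram matrix on each block, and then applies Hanson--Wright together with a multiplicative Chernoff-type estimate for the extreme singular values. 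The spiked coordinates each contribute order $1/(n/M)$ to the trace, producing the first term $k^*/(n/M)$, while the tail acts as a self-induced ridge of effective size $\simeq \sum_{j>k^*}\lam_j$, producing the second term $(n/M)/R_{k^*}(\Sigma)$.

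Combining the per-machine bound with the display above and taking a union bound over $m\in[M]$,
\[
\var(\bar\beta_M) \;\geq\; \frac{\sigma^2}{M^2}\cdot M \paren[1]{\frac{k^*_{n/M}}{n/M} + \frac{n/M}{R_{k^*_{n/M}}(\Sigma)}}
\;=\; \sigma^2\paren[1]{\frac{k^*_{n/M}}{n} + \frac{n}{M^2 R_{k^*_{n/M}}(\Sigma)}},
\]
with probability at least $1-10M e^{-n/(cM)}$, which is the claim. The hard part is the per-machine trace lower bound: the upper bound in Lemma~\ref{lem:bias-var} / Theorem~\ref{theo:main1} is comparatively soft, but producing a \emph{matching} lower bound requires sharp two-sided concentration of the pseudoinverse of a rectangular sub-Gaussian design, which is where the delicate Bartlett--Long--Lugosi--Tsigler machinery enters. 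Everything else in the proof is bookkeeping across the $M$ independent subproblems.
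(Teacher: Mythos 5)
Your proposal is correct and follows essentially the same route as the paper: drop the nonnegative bias, use conditional independence of the noise across splits to reduce the variance to $\frac{1}{M^2}\sum_m \mbe_\eps[\langle \eps_m, C_m\eps_m\rangle]$, lower bound each term by $\sigma^2\tr[C_m]$ via Assumption \ref{ass:lower-noise}, and then apply the Bartlett--Long--Lugosi--Tsigler single-machine trace lower bound (the paper's Proposition \ref{prop:lower-var-single}, i.e.\ their Lemmas 10 and 11) with sample size $n/M$ together with a union bound over the $M$ machines. The only cosmetic difference is that the paper keeps the explicit constant $c_a = 1/(ca)$ from that proposition rather than writing $\gtrsim$.
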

 
Note that the lower bound for the excess risk is of the order of the variance bound \eqref{eq:upper-variance}.  
We emphasize that the optimal number $M_n$ of splits is derived by trading-off bias and variance. Hence, 
for this value, the bound \eqref{eq:low} is optimal. We give now the explicit optimal rates in the special settings 
from Sections \ref{subsec:inf-dim}, \ref{subsec:sparse-finite-dim}.

\begin{corollary}[Optimal rate infinite dimension]
\label{cor:opt-poly}
Suppose all Assumptions of Theorem \ref{theo:lower-bound} are satisfied. Let $n$ sufficiently large and 
recall the definition of $M_n$ from Corollary \ref{cor:poly1}. 
With probability at least $1-10M_ne^{-\frac{1}{c}\frac{n}{M_n}}$, the excess risk 
is lower bounded by 
\begin{align*}
\mbe_\eps[ ||\Sigma^{1/2}( \bar \beta_{M_n} - \beta^*) ||^2 ] 
&\geq \frac{\tilde C_{\tau, \sigma_x , \sigma}}{\alpha^{2/3} } \left( \frac{\eps_n}{n}\right)^{1/3} \;,
\end{align*}
for some $\tilde C_{\tau, \sigma_x , \sigma} > 0$. Hence, under the Assumptions of Corollary \ref{cor:poly1}, 
the rate of convergence is optimal (up to a log-factor) as it matches the upper bound \eqref{eq:rate-1}. 
Note that we also obtain the optimal bound from Corollary \ref{cor:poly2} in the low smoothness regime (see Appendix \ref{app:prof-lower-bound}).
\end{corollary}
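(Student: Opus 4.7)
The plan is to specialize the general lower bound of Theorem \ref{theo:lower-bound} to the polynomial decay setting of Assumption \ref{ass:poly-decay} and then substitute the balancing value $M_n$ from Corollary \ref{cor:poly1}. Theorem \ref{theo:lower-bound} already gives
\[
 \mbe_\eps[ \|\Sigma^{1/2}(\bar \beta_M - \beta^*)\|^2] \gtrsim \sigma^2 \left( \frac{k^*_{n/M}}{n} + \frac{n}{M^2}\cdot \frac{1}{R_{k^*}(\Sigma)} \right),
\]
so everything reduces to estimating the two spectral quantities $k^*_{n/M}$ and $R_{k^*}(\Sigma)$ under $\lam_j(\Sigma)=j^{-(1+\eps_n)}$.

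First, I would use integral comparison to obtain the sharp asymptotics
\[
 \sum_{j>k} \lam_j \;\simeq\; \frac{k^{-\eps_n}}{\eps_n}, \qquad \sum_{j>k} \lam_j^2 \;\simeq\; \frac{k^{-(1+2\eps_n)}}{1+2\eps_n},
\]
from which $r_k(\Sigma) \simeq k/\eps_n$ and $R_k(\Sigma) \simeq k/\eps_n^2$ (up to constants depending only on $a$ and a harmless $1+2\eps_n$ factor). The definition of $k^*_{n/M}$ in Definition \ref{def:eff-dim} then yields $k^*_{n/M} \simeq \eps_n \cdot n/M$, and one must check that this is consistent with the hypothesis $k^*_{n/M}\le n/(c_2 M)$ required by Theorem \ref{theo:lower-bound}: this amounts to $\eps_n \lesssim 1/c_2$, which is implicit in the regime of interest (and in any case follows for large $n$ from the constraints already imposed in Corollary \ref{cor:poly1}).

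Plugging these estimates into the two terms of the lower bound gives
\[
 \frac{k^*_{n/M}}{n} \;\simeq\; \frac{\eps_n}{M}, \qquad \frac{n}{M^2 R_{k^*}(\Sigma)} \;\simeq\; \frac{n\,\eps_n^2}{M^2\cdot \eps_n n/M} \;=\; \frac{\eps_n}{M},
\]
so both terms are of the same order $\eps_n/M$. Substituting $M_n = C_{\tau,\sigma_x}(\alpha\eps_n\sqrt n)^{2/3}$ yields
\[
 \frac{\eps_n}{M_n} \;\simeq\; \frac{\eps_n^{1/3}}{\alpha^{2/3}\, n^{1/3}} \;=\; \frac{1}{\alpha^{2/3}}\left(\frac{\eps_n}{n}\right)^{1/3},
\]
which is exactly the stated bound. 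The corresponding statement in the low smoothness regime of Corollary \ref{cor:poly2} follows identically by substituting instead $M_n = C_{\tau,\sigma_x}(\eps_n^2\sqrt n)^{2/3}$, producing the rate $(1/(\eps_n n))^{1/3}$.

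The probability statement is inherited directly from Theorem \ref{theo:lower-bound} with $M$ replaced by $M_n$. The only step that takes genuine care is the verification of the effective dimension and ranks: one must keep track of the explicit multiplicative constants in the integral comparisons (in particular the $\eps_n$-dependence that appears in $\sum_{j>k}\lam_j$), and confirm that the two competing terms really balance at the $M_n$ predicted by the upper bound. Once these spectral estimates are in hand the algebra is routine; the main conceptual point is that the lower bound matches the variance contribution from \eqref{eq:upper-variance} term by term, so trading off bias and variance at $M_n$ is automatically optimal.
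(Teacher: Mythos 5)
Your proposal is correct and follows essentially the same route as the paper: specialize Theorem \ref{theo:lower-bound}, show via integral comparison that $a\,n/M \le r_{k^*}(\Sigma) \le 4k^*/\eps_n$ so that $k^*_{n/M} \gtrsim \eps_n n/M$, and substitute $M_n$. The only cosmetic difference is that the paper simply discards the $R_{k^*}$ term and lower-bounds the risk by $c_a\sigma^2 k^*_{n/M_n}/n$ alone, whereas you additionally verify that the second term is of the same order $\eps_n/M$ — harmless extra work that does not change the argument.
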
 
 
\vspace{0.1cm}

\begin{corollary}[Optimal rate finite dimension]
\label{cor:opt-finit-dim}
Recall the strong-weak-features model from Section \ref{subsec:sparse-finite-dim} and 
suppose all Assumptions of Theorem \ref{theo:lower-bound} are satisfied. 
With probability at least $1-10Me^{-\frac{1}{c}\frac{n}{M}}$, the excess risk 
is lower bounded by 
\begin{align*}
\mbe_\eps[ ||\Sigma^{1/2}( \bar \beta_{M} - \beta^*) ||^2 ] 
&\geq \tilde c_a\sigma^2  \tilde C_{\rho_2} \frac{n}{F M^2} \;,
\end{align*}
for some $\tilde  C_{\rho_2}  > 0$, $\tilde c_a > 0$. Moreover, under the assumptions of Corollary \ref{cor:sparse-strong-weak}, 
this lower bound matches the upper bound for the optimal $M_n$ 
and hence is optimal (up to a log-factor).  
\end{corollary}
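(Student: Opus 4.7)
The plan is to specialize the general lower bound of Theorem \ref{theo:lower-bound} to the strong-weak-features model by computing $k^*_{n/M}$ and $R_{k^*}(\Sigma)$ explicitly, and then to compare the resulting expression with the rate of Corollary \ref{cor:sparse-strong-weak} for the optimality claim.

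The first step is to identify the effective dimension. Under Assumption \ref{ass:stron-weak}, the spectrum of $\Sigma$ consists of $F$ eigenvalues equal to $1$ followed by $d-F$ eigenvalues equal to $\rho_2$, so one computes directly
\begin{equation*}
r_0(\Sigma) = \sum_{j \geq 1} \lambda_j(\Sigma) = F + \rho_2(d-F) = (1-\rho_2)F + \rho_2 d.
\end{equation*}
Condition \eqref{eq:cond-F} therefore reads $a n/M < r_0(\Sigma)$, so by the definition of the effective dimension, $k^*_{n/M} = 0$. In particular, the hypothesis $k^*_{n/M} \leq n/(c_2 M)$ needed to apply Theorem \ref{theo:lower-bound} is automatic, and the first term $k^*/n$ in the general lower bound drops out entirely.

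Next I would compute the second effective rank at $k=0$: a direct calculation gives
\begin{equation*}
R_0(\Sigma) = \frac{(F + \rho_2(d-F))^2}{F + \rho_2^2(d-F)}.
\end{equation*}
Using the hypothesis $\rho_2 d \leq F$ from Proposition \ref{prop:sparse-strong-weak}, one has $\rho_2(d-F) \leq F$, which gives the upper bound $(F + \rho_2(d-F))^2 \leq 4 F^2$, whereas the denominator is bounded below by $(1-\rho_2^2) F$. This produces $R_0(\Sigma) \leq 4F/(1-\rho_2^2) =: F/\tilde C_{\rho_2}$. Substituting $k^* = 0$ and this estimate into the general lower bound of Theorem \ref{theo:lower-bound} yields
\begin{equation*}
\mbe_\eps\bigl[\|\Sigma^{1/2}(\bar\beta_M - \beta^*)\|^2\bigr] \;\geq\; c_a \sigma^2 \cdot \frac{n}{M^2 R_0(\Sigma)} \;\geq\; \tilde c_a \sigma^2 \, \tilde C_{\rho_2} \cdot \frac{n}{F M^2},
\end{equation*}
which is the announced bound.

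The optimality claim then follows by direct comparison: for $M = M_n$ as defined in Corollary \ref{cor:sparse-strong-weak}, the upper bound \eqref{eq:rate-finite} is of order $n/(F M_n^2)$ up to the logarithmic factor, which matches exactly (up to constants and the log) the lower bound derived above. I do not anticipate any serious obstacle, since Theorem \ref{theo:lower-bound} already supplies all of the probabilistic content; the only delicate point is to carefully track the explicit dependence on $\rho_2$ in the estimate of $R_0(\Sigma)$, which determines the constant $\tilde C_{\rho_2}$.
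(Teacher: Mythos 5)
Your proposal is correct and follows essentially the same route as the paper: apply Theorem \ref{theo:lower-bound}, observe that condition \eqref{eq:cond-F} forces $k^*_{n/M}=0$ for the two-level spectrum, and then bound $1/R_0(\Sigma)\gtrsim 1/F$ using $\rho_2 d\le F$ (the paper bounds $\sum_j\lambda_j^2\ge(1-\rho_2^2)F$ and $\sum_j\lambda_j\le(2-\rho_2)F$, which is the same computation up to the exact constant $\tilde C_{\rho_2}$). The optimality comparison with \eqref{eq:rate-finite} is also as in the paper.
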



\section{Remarks about efficiency}
\label{app:efficiency}

In addition to the non-asymptotic bounds on bias and variance we are interested in the possible gain in efficiency of data splitting compared 
to the single machine setting. To this end, let us introduce the ratio of the excess risks for the single machine estimator $\bar \beta_{1}$ 
and the distributed estimator $\bar \beta_{M}$, $M >1$.

\begin{definition}
\label{def:rel-eff-intro}
We define the \emph{relative prediction efficiencies} by 
\begin{align*}
 {\rm \widehat{Eff} }(M) &= 
\frac{\mbe_\epsilon \left[ ||\Sigma^{1/2} (\bar \beta_{1} - \beta^* )||^2\right]}{\mbe_\epsilon \left[ ||\Sigma^{1/2} (\bar \beta_{M} - \beta^* )||^2\right]} \;.
\end{align*}
\end{definition}

\subsection{Quadratic increase in efficiency in finite dimension}

We consider the setting of Section \ref{subsec:sparse-finite-dim}. To bound the relative prediction efficiency in this case recall that from 
Proposition \ref{prop:sparse-strong-weak} and Corollary \ref{cor:opt-finit-dim} we have in the single machine setting a lower and upper bound for 
the excess risk\footnote{We omit logarithmic terms.}: With probability at least $1-10e^{-\frac{n}{c}}$
\begin{align*}
   \frac{n}{F} \lesssim  \mbe_{\beta^* ,\epsilon} \left[||\Sigma^{\frac{1}{2}}(\bar \beta_1 - \beta^*) ||^2 \right] \lesssim  
\max\left\{ \frac{ F}{d \sqrt{n}} , \frac{n}{F}\right\}   \;. 
\end{align*}
Note that $\max\left\{ \frac{ F}{d \sqrt{n}} , \frac{n}{F}\right\} =  \frac{n}{F}$ if $\frac{d}{F^2}\gtrsim n^{-3/2}$. In this case, the bound is optimal.

Moreover, Proposition \ref{prop:sparse-strong-weak} and Corollary \ref{cor:opt-finit-dim} show that 
with probability at least $1-10M_{opt}e^{-\frac{1}{c}\frac{n}{M_{opt}}}$, the excess risk in the optimally distributed setting 
enjoys the optimal bound   
\[ \mbe_\eps[ ||\Sigma^{1/2}( \bar \beta_{M_{opt}} - \beta^*) ||^2 ]  \simeq  \frac{n}{F M_{opt}^2} \;, \]
where  we denote by $M_{opt}$ the optimal number of splits from \eqref{eq:nb-machnines}.

As a result, we obtain:

\begin{corollary}
If all assumptions of Proposition \ref{prop:sparse-strong-weak}, Corollary \ref{cor:sparse-strong-weak} and Corollary \ref{cor:opt-finit-dim} 
are satisfied, the relative prediction efficiency increases quadratically in the number of optimal splits, 
with probability at least $1-10M_{opt}e^{-\frac{1}{c}\frac{n}{M_{opt}}}$ 
\[  {\rm \widehat{Eff} }(M_{opt})  \simeq  M_{opt}^2 \;.\]
\end{corollary}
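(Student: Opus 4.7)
The plan is to simply chain together the two-sided bounds on the excess risk that are already established in the excerpt, one for the single–machine estimator $\bar\beta_1$ and one for the optimally–distributed estimator $\bar\beta_{M_{opt}}$, and take their ratio.

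First I would recall from Proposition \ref{prop:sparse-strong-weak} applied with $M=1$ together with Corollary \ref{cor:opt-finit-dim} that, on an event of probability at least $1-10e^{-n/c}$, the single–machine excess risk satisfies
\[
\mbe_{\beta^*,\epsilon}\!\left[\norms{\Sigma^{1/2}(\bar\beta_1-\beta^*)}^2\right] \simeq \frac{n}{F},
\]
where the matching upper bound holds precisely because the condition $d/F^2 \gtrsim n^{-3/2}$ follows from assumption \eqref{eq:ass-f-d} in Corollary \ref{cor:sparse-strong-weak}, so that $\max\{F/(d\sqrt n),\,n/F\} = n/F$. Next I would invoke Corollary \ref{cor:sparse-strong-weak} and Corollary \ref{cor:opt-finit-dim} for the distributed estimator with $M=M_{opt}$ as defined in \eqref{eq:nb-machnines}: on an event of probability at least $1-10M_{opt}e^{-n/(cM_{opt})}$,
\[
\mbe_{\beta^*,\epsilon}\!\left[\norms{\Sigma^{1/2}(\bar\beta_{M_{opt}}-\beta^*)}^2\right] \simeq \frac{n}{F\,M_{opt}^2}.
\]

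Then I would take the quotient of these two equivalents on the intersection of the two events. By a union bound this intersection has probability at least $1 - 10e^{-n/c} - 10M_{opt}e^{-n/(cM_{opt})}$; since $e^{-n/c} \leq e^{-n/(cM_{opt})}$, this is absorbed into the stated lower bound $1-10M_{opt}e^{-n/(cM_{opt})}$ by adjusting the constant. Dividing the two $\simeq$ estimates gives
\[
\widehat{{\rm Eff}}(M_{opt}) = \frac{\mbe_\epsilon\!\left[\norms{\Sigma^{1/2}(\bar\beta_1-\beta^*)}^2\right]}{\mbe_\epsilon\!\left[\norms{\Sigma^{1/2}(\bar\beta_{M_{opt}}-\beta^*)}^2\right]} \simeq \frac{n/F}{n/(F\,M_{opt}^2)} = M_{opt}^2,
\]
which is the claim.

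There is no real obstacle here: all the heavy lifting—the bias/variance decomposition, the spectral decomposition argument yielding the effective dimension, and the optimization over $M$ determining $M_{opt}$—has been carried out in Theorem \ref{theo:main1}, Proposition \ref{prop:sparse-strong-weak}, Corollary \ref{cor:sparse-strong-weak}, and Corollary \ref{cor:opt-finit-dim}. The only subtlety worth flagging is making sure that the assumption \eqref{eq:ass-f-d} guarantees $d/F^2 \gtrsim n^{-3/2}$, so that the single–machine bound is sharp and not dominated by the bias term $F/(d\sqrt n)$; this is exactly the left inequality in \eqref{eq:ass-f-d}. Once this is verified, the ratio of the matching two–sided bounds immediately yields the quadratic gain.
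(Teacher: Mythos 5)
Your argument is correct and follows essentially the same route as the paper: combine the two-sided bound $n/F \lesssim \mbe[\|\Sigma^{1/2}(\bar\beta_1-\beta^*)\|^2] \lesssim \max\{F/(d\sqrt n),\,n/F\} = n/F$ (sharp by the left inequality in \eqref{eq:ass-f-d}) with the matching two-sided bound $\simeq n/(FM_{opt}^2)$ at the optimal number of splits from Proposition \ref{prop:sparse-strong-weak} and Corollary \ref{cor:opt-finit-dim}, and take the ratio. Your explicit union-bound bookkeeping for the intersection of the two events is slightly more careful than what the paper writes, but the substance is identical.
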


\vspace{0.3cm}

\subsection{Linear increase in efficiency in infinite dimension} 

We consider the setting of Section \ref{subsec:inf-dim}. Note that we obtain for the single machine setting with 
probability at least $1-\delta-7e^{-\frac{n}{c_2 }}$ 
\[ \eps_n \lesssim \mbe_{\beta^* ,\epsilon} \left[||\Sigma^{\frac{1}{2}}(\bar \beta_1 - \beta^*) ||^2 \right] \lesssim  
\max\left\{ \eps_n, \frac{C_{\alpha, n}}{\sqrt{n}} \right\}   \;, \]
with 
\[ C_{\alpha, n}= \frac{1}{\alpha}\eins\{ \alpha> 0\} + \frac{1}{\eps_n}\eins\{ \alpha=0 \}  \;. \]
This follows from  Proposition \ref{prop:poly} and Corollary \ref{cor:opt-poly} (in particular \eqref{eq:equiv}, \eqref{eq:lower-high}).
This bound is optimal if $\frac{C_{\alpha, n}}{\sqrt{n}} \lesssim \eps_n$. 

Similarly, denoting by $M_{opt}$ the optimal number of splits from Corollaries \ref{cor:poly1}, \ref{cor:poly2}, 
with probability at least $1-\delta-7M_{opt}e^{-\frac{n}{c_2 M_{opt}}}$
\[ \frac{\eps_n}{M_{opt}} \lesssim  \mbe_{\beta^* ,\epsilon} \left[||\Sigma^{\frac{1}{2}}(\bar \beta_{M_{opt}} - \beta^*) ||^2 \right]   \lesssim   \frac{\eps_n}{M_{opt}}\;.  \]

As a result, optimal data splitting leads to a linear increase in efficiency: 

\begin{corollary}
Let all assumptions of Corollaries \ref{cor:poly1}, \ref{cor:poly2}, \ref{cor:opt-poly} be satisfied and assume 
that $\frac{C_{\alpha, n}}{\sqrt{n}} \lesssim \eps_n$. Then, with probability at least $1-\delta-7M_{opt}e^{-\frac{n}{c_2 M_{opt}}}$ 
\[  {\rm \widehat{Eff} }(M_{opt})  \simeq  M_{opt} \;.\] 
\end{corollary}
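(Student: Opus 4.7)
The plan is to reduce the claim to a direct comparison of the two-sided bounds already established for the single-machine and the optimally-split estimators, and then verify that the assumption $C_{\alpha,n}/\sqrt{n} \lesssim \eps_n$ is precisely what collapses the numerator's maximum to a single term.

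First I would collect the four bounds that have already been proved in the preceding subsection. For the single-machine estimator, Proposition \ref{prop:poly} with $M=1$ provides the upper bound $\mbe_{\beta^*,\eps}[\|\Sigma^{1/2}(\bar\beta_1-\beta^*)\|^2] \lesssim C_{\alpha,n}/\sqrt{n} + \eps_n$ up to a logarithmic factor, while Corollary \ref{cor:opt-poly} (applied with $M=1$, in the regime where the variance lower bound of Theorem \ref{theo:lower-bound} reduces to $\eps_n$) gives the matching lower bound $\eps_n \lesssim \mbe_{\beta^*,\eps}[\|\Sigma^{1/2}(\bar\beta_1-\beta^*)\|^2]$. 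For the optimally-split estimator with $M=M_{opt}$, the upper bound from Proposition \ref{prop:poly} evaluated at this trade-off point yields $\mbe_{\beta^*,\eps}[\|\Sigma^{1/2}(\bar\beta_{M_{opt}}-\beta^*)\|^2] \lesssim \eps_n/M_{opt}$ (again up to a log-factor, and this is exactly how the balancing in Corollaries \ref{cor:poly1} and \ref{cor:poly2} was constructed), and Corollary \ref{cor:opt-poly} provides the matching lower bound $\eps_n/M_{opt} \lesssim \mbe_{\beta^*,\eps}[\|\Sigma^{1/2}(\bar\beta_{M_{opt}}-\beta^*)\|^2]$.

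Next I would invoke the hypothesis $C_{\alpha,n}/\sqrt{n} \lesssim \eps_n$ to simplify the single-machine upper bound: $\max\{\eps_n,C_{\alpha,n}/\sqrt n\} \simeq \eps_n$, so the numerator in ${\rm \widehat{Eff}}(M_{opt})$ is sandwiched as $\eps_n \lesssim \mbe_{\beta^*,\eps}[\|\Sigma^{1/2}(\bar\beta_1-\beta^*)\|^2] \lesssim \eps_n$, i.e.\ it is $\simeq \eps_n$. The denominator is $\simeq \eps_n/M_{opt}$ by the two-sided bound for the distributed estimator. Dividing yields ${\rm \widehat{Eff}}(M_{opt}) \simeq M_{opt}$, which is the claim.

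Finally, the probabilistic bookkeeping: the upper bounds hold with probability at least $1-\delta-7M_{opt}e^{-n/(c_2 M_{opt})}$ (and the single-machine bound with probability at least $1-\delta-7e^{-n/c_2}$, which is absorbed into the former), while the lower bounds hold with probability at least $1-10M_{opt}e^{-n/(cM_{opt})}$. A union bound over these events gives the stated probability, up to adjusting the universal constants. No step here is substantive; the only mild obstacle is checking that in the definition of $M_{opt}$ from Corollaries \ref{cor:poly1} and \ref{cor:poly2} the balancing was chosen so that the upper bound indeed matches $\eps_n/M_{opt}$ in both the $\alpha>0$ and $\alpha=0$ regimes, which is exactly what is asserted in \eqref{eq:rate-1} and its low-smoothness analogue.
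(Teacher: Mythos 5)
Your proposal is correct and follows essentially the same route as the paper: it assembles the two-sided bounds for $\bar\beta_1$ (upper bound $\max\{\eps_n, C_{\alpha,n}/\sqrt{n}\}$ from Proposition \ref{prop:poly}, lower bound $\eps_n$ from Theorem \ref{theo:lower-bound} via \eqref{eq:equiv} and \eqref{eq:lower-high}) and for $\bar\beta_{M_{opt}}$ (two-sided $\eps_n/M_{opt}$ from Corollaries \ref{cor:poly1}, \ref{cor:poly2} and \ref{cor:opt-poly}), uses the hypothesis $C_{\alpha,n}/\sqrt{n}\lesssim\eps_n$ to collapse the numerator to $\simeq\eps_n$, and divides. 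The probabilistic bookkeeping via a union bound is also as in the paper.
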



\section{Discussion}
\label{sec:discussion}

{\bf Comparison to averaged ordinary least squares (AOLS).} To understand the regularizing effect of the number of data-splits we compare 
our approach to AOLS, i.e. \eqref{eq:ridgeless-2}, \eqref{eq:weighted-ave} in the 
\emph{underparameterized} regime. This has been studied in e.g. \cite{rosenblatt2016optimality}. 
Since OLS is \emph{unbiased}, AOLS is unbiased, too, and the risk behaves 
fundamentally different as a function of $M$. In particular, there is no trade-off between bias and variance. 
The performance in this setting for fixed $d$ as $n\to \infty$ is comparable to the single machine setting. However, 
in the high dimensional limit $d/n \to \gamma \in (0,1)$, data splitting incurs a loss in accuracy that increases linearly with $M$ and 
we trade accuracy for speed. Notably, in the overparameterized 
regime, we observe an additional bias and hence an \emph{increase} in efficiency until the optimum $M_n$ is achieved (see Fig. \ref{fig:1}), 
see Section \ref{app:efficiency} for a more extended discussion.    

{\bf Comparison to distributed Ridge Regression (DRR).} 
The learning properties of the distributed ridgeless estimator also changes with additional regularization as for (kernel) ridge regression. This setting 
is extensively investigated in kernel learning e.g. \cite{zhang2015divide}, \cite{lin2017distributed}, \cite{mucke2018parallelizing}. 
In this setup, the averaged estimator suffers \emph{no loss} in accuracy, i.e. no increase in efficiency, if appropriately regularized, 
provided the number of machines grows sufficiently slowly with the sample size. 
\\  
The work \cite{sheng2020one} investigates DRR in the high dimensional limit and finds that the efficiency is generally high when the signal strength is low. 
Note that we observe a similar phenomenon in Corollary \ref{cor:sparse-strong-weak} through the signal-to-noise-ratio $\SNR$. 
A  low $\SNR$ increases the optimal number $M_n$. 
Moreover, the authors show that even in the limit of many machines, DRR does not lose
all efficiency. We show in \eqref{eq:to-zero} that in the infinite worker limit, the risk converges to zero if $d_n$ increases.



\section{Numerical Illustration}
\label{sec:numerics}

In this section we present some numerical examples, illustrating our main findings. Additional numerical results are presented in Appendix \ref{app:add-numerics}. 
\vspace{0.1cm}
\\
{\bf Simulated data.} We illustrate the findings of Section \ref{subsec:sparse-finite-dim} in the \emph{strong-weak-features model}. 
In a first experiment we generate $n=200$ i.i.d. training points $x_j \sim \cN(0, \Sigma)$, with $d=600$, $\rho_1=1$, $\rho_2=10^{-4}$. 
The target $\beta^*$ is simulated according to Assumption \ref{ass:sparse} with $\SNR=0.1$. 
We illustrate the effect of the number $F$ of strong features on the \emph{relative efficiency} compared to the non-distributed setting, 
i.e. the ratio of the test risk of $\bar\beta_{M}$ and $\bar \beta_1$.   
The number of the strong features is $F=100, 150, 200$. The left plot in Fig. \ref{fig:1} shows that the efficiency for larger $F$ is generally higher. 
Interestingly, for fixed $F$, efficiency increases until the optimal number of splits is achieved. In other words, $M$ acts as a regularization parameter.   
As predicted by Corollary \ref{cor:sparse-strong-weak}, 
the optimal number of splits decreases as $F$ increases. 
In a second experiment, we investigate the interplay of the spectral gap $\rho_1 - \rho_2$ and the optimal splits.  
The strength $\rho_2$ of weak features varies between $10^{-3}$ and $10^{-1}$. The right plot in Fig. \ref{fig:1} plots 
the test error for different values of the spectral gap for an increasing number of machines. We clearly observe the regularizing 
effect of data splitting in the presence of overparameterization. Moreover, as predicted by Corollary \ref{cor:sparse-strong-weak}, 
the optimal number of splits decreases as the spectral gap increases. 
\vspace{0.1cm}
\\
{\bf Real data.}
We utilize the million song dataset \cite{bertinmillion}, consisting of $463, 715$ training samples, $n_{test}=51,630$ test samples and $d=90$ features. 
To illustrate the effect of splitting we elaborate two different settings: 
The left plot shows data splitting in the presence of global overparameterization. We subsampled $n=45$ training samples and report the average test error 
with $100$ repetitions. We observe a better accuracy with splitting. In the second setting, the total sample size is larger than the 
number of parameter. As long as there is local underparameterization, the test error increases.  
However, after a certain number of splits $M=n/d=15$, local overparameterization appears and the test error starts to decrease.  

\begin{figure}[ht]
\label{fig:1}
\centerline{
\includegraphics[width=0.45\columnwidth, height=0.25\textheight]{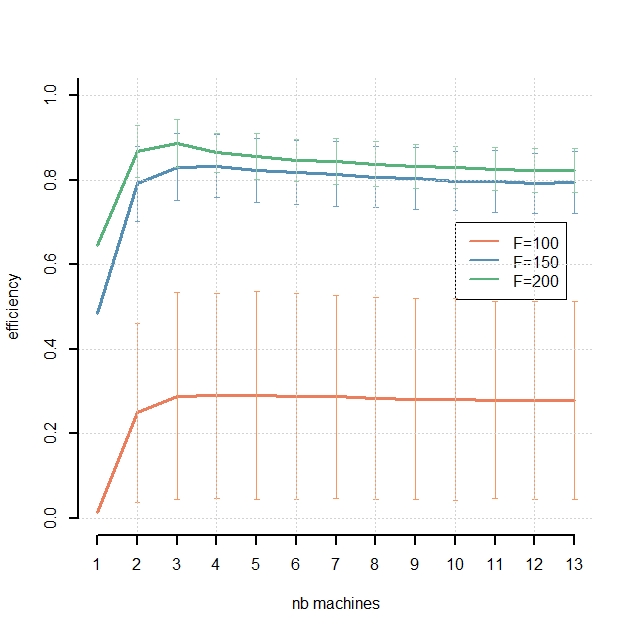}
\includegraphics[width=0.45\columnwidth, height=0.25\textheight]{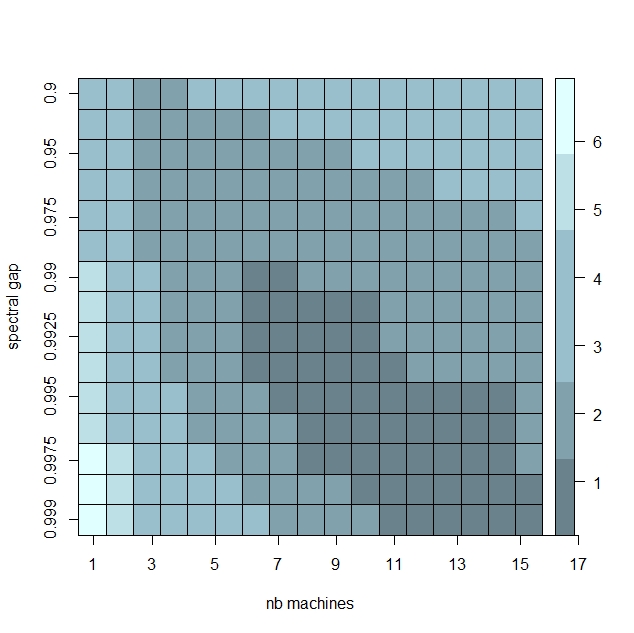}
}
\caption{{\bf Left:} Interplay between number of strong features and efficiency. 
{\bf Right:} Interplay between spectral gap and optimal number of machines.}
\end{figure}

\begin{figure}[ht]
\label{fig:2}
\centerline{
\includegraphics[width=0.45\columnwidth, height=0.25\textheight]{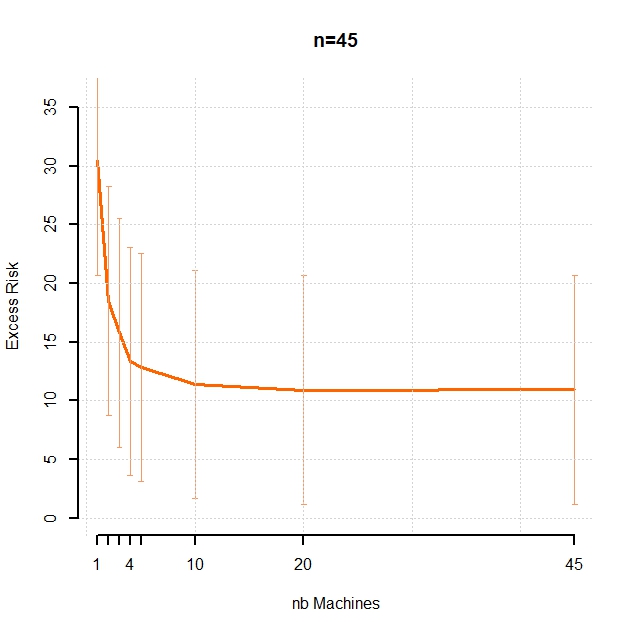} 
\includegraphics[width=0.45\columnwidth, height=0.25\textheight]{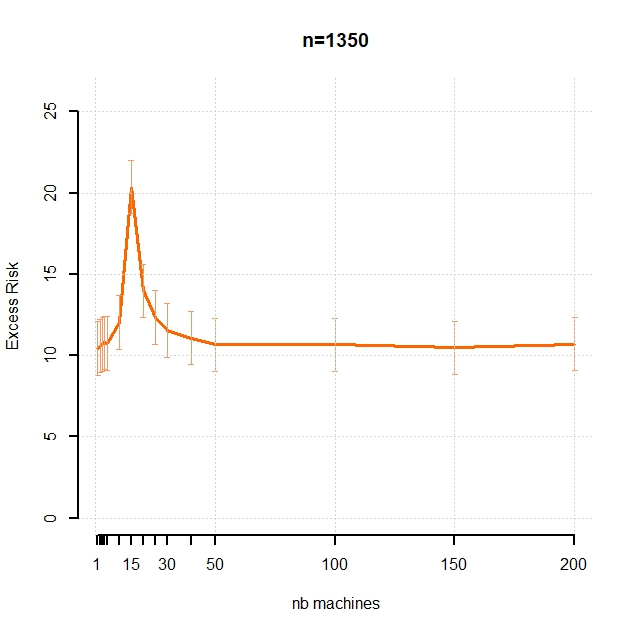}
}
\caption{ MSDYear dataset. {\bf Left:} Data splitting reduces the test error in the presence of overparameterization. 
{\bf Right:} The test error has a peak for $M=n/d$ and decreases as local overparameterization increases.}
\end{figure}



\bibliographystyle{alpha}
\bibliography{bib_distributed}


\newpage

\appendix


\section{Proofs of Section \ref{sec:main-results}}
\label{supp:proofsA}

In this section we provide all proofs of our results in Section \ref{sec:main-results}.


\subsection{Proofs of Section \ref{subsec:general-upper-bound}}
\label{app:general-bound}

\begin{lemma}
\label{lem:bias-intro}
Let $n \in \mbn$, $\beta \in \cH$. Define the \emph{empirical covariance} operator by $\hat \Sigma=\frac{1}{n}\bX^T\bX$ and denote by 
\[ \tilde \Pi := Id - \bX^T(\bX\bX^T )^{\dagger}\bX    \]
the orthogonal projection onto the nullspace of $\bX$.  We have almost surely 
\[ || \Sigma^{1/2} \tilde \Pi \beta ||^2 \leq  \left| \inner{\beta , (\Sigma - \hat \Sigma ) \beta} \right|\;. \]
\end{lemma}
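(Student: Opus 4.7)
The strategy is to exploit the fact that the projection $\tilde\Pi$ annihilates $\hat\Sigma$ on both sides, which lets me rewrite the left-hand side as a quadratic form of $(\Sigma - \hat\Sigma)$ restricted to $\mathrm{Null}(\bX)$, and then compare that restricted form with the full quadratic form $\langle \beta, (\Sigma - \hat\Sigma)\beta\rangle$.

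\textbf{Step 1 (Annihilation).} I would first note that, by the definition of $\tilde\Pi$ as the orthogonal projection onto $\mathrm{Null}(\bX)$, one has $\bX \tilde\Pi = 0$, and taking adjoints $\tilde\Pi \bX^T = 0$. Consequently
\[
\hat\Sigma\,\tilde\Pi \;=\; \tfrac{1}{n}\bX^T\bX\,\tilde\Pi \;=\; 0 \;=\; \tilde\Pi\,\hat\Sigma.
\]
This identity is the algebraic engine of the entire argument.

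\textbf{Step 2 (Rewriting the left-hand side).} Using self-adjointness of $\tilde\Pi$ together with Step 1, I would compute
\[
\|\Sigma^{1/2}\tilde\Pi\beta\|^2 \;=\; \langle \tilde\Pi\beta, \Sigma\tilde\Pi\beta\rangle \;=\; \langle \tilde\Pi\beta, (\Sigma-\hat\Sigma)\tilde\Pi\beta\rangle \;=\; \langle \beta, \tilde\Pi(\Sigma-\hat\Sigma)\tilde\Pi\beta\rangle.
\]
Note that this quantity is automatically non-negative even though $\Sigma - \hat\Sigma$ need not be positive semidefinite, since on $\mathrm{Range}(\tilde\Pi) = \mathrm{Null}(\bX)$ the operator $\hat\Sigma$ vanishes and only the positive operator $\Sigma$ survives.

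\textbf{Step 3 (Passing from $\tilde\Pi\beta$ to $\beta$).} Decomposing $\beta = \tilde\Pi\beta + \Pi\beta$ with $\Pi := Id - \tilde\Pi$ and expanding $\langle \beta, (\Sigma - \hat\Sigma)\beta\rangle$ bilinearly, using once more the annihilation identity of Step 1 to eliminate the mixed $\hat\Sigma$ cross terms, one obtains an identity of the form
\[
\langle \beta, (\Sigma - \hat\Sigma)\beta\rangle \;=\; \|\Sigma^{1/2}\tilde\Pi\beta\|^2 \;+\; 2\,\langle \tilde\Pi\beta, \Sigma\,\Pi\beta\rangle \;+\; \langle \Pi\beta, (\Sigma - \hat\Sigma)\,\Pi\beta\rangle.
\]
Taking absolute values and estimating the two ``correction'' terms gives the claimed inequality.

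\textbf{Main obstacle.} The delicate step is clearly Step 3: the two correction terms $2\langle \tilde\Pi\beta, \Sigma\Pi\beta\rangle$ and $\langle \Pi\beta, (\Sigma - \hat\Sigma)\Pi\beta\rangle$ do not visibly have a favorable sign, because $\Sigma$ need not commute with the projection $\Pi$ and because $(\Sigma - \hat\Sigma)$ can be indefinite on $\mathrm{Range}(\Pi)$. I would try to absorb the cross term through Cauchy--Schwarz in the $\Sigma$-inner product combined with an AM--GM estimate against the two diagonal terms, and separately exploit the explicit structure $\hat\Sigma = \Pi\hat\Sigma\Pi$ together with the positivity of $\hat\Sigma$ on $\mathrm{Range}(\Pi)$ to control the sign of $\langle \Pi\beta, (\Sigma - \hat\Sigma)\Pi\beta\rangle$. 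All remaining manipulations are routine algebra riding on the annihilation identity of Step 1.
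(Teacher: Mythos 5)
Your Steps 1 and 2 are correct, and they isolate the same algebraic engine the paper uses: $\tilde \Pi \hat\Sigma = \hat\Sigma \tilde\Pi = 0$, hence $\|\Sigma^{1/2}\tilde\Pi\beta\|^2 = \langle \beta, \tilde\Pi(\Sigma-\hat\Sigma)\tilde\Pi\beta\rangle$. The genuine gap is Step 3, which is a plan rather than a proof, and the plan cannot be carried out. After your decomposition, the two correction terms $2\langle\tilde\Pi\beta,\Sigma\Pi\beta\rangle$ and $\langle\Pi\beta,(\Sigma-\hat\Sigma)\Pi\beta\rangle$ are not merely of unclear sign: they can be negative and large enough to cancel the right-hand side entirely. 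Concretely, take $\cH=\mbr^2$, $n=M=1$, $\Sigma=I_2$, $x_1=(\sqrt2,0)$, $\beta=(1,1)$. Then $\tilde\Pi$ is the projection onto $e_2$, so $\|\Sigma^{1/2}\tilde\Pi\beta\|^2=1$, while $\hat\Sigma=\mathrm{diag}(2,0)$ gives $\langle\beta,(\Sigma-\hat\Sigma)\beta\rangle=2-2=0$. Both sides are continuous in $x_1$, so this is not a measure-zero accident. Since this configuration uses nothing beyond positivity of $\Sigma$ and $\hat\Sigma$ and the annihilation identity $\tilde\Pi\hat\Sigma=0$ --- exactly the ingredients your Cauchy--Schwarz/AM--GM absorption would rely on --- no such bookkeeping can close Step 3.

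For comparison, the paper's own proof takes a different route: it rewrites both sides as traces of products with the rank-one operator $\beta\otimes\beta$ and alternates between $|\tr[A]|$ and the operator norm $\|A\|$ of rank-one products, inserting $\|\tilde\Pi\|\le1$ to peel off the projections one at a time. The step that does the work you are missing is the identity $\|A\|=|\tr[A]|$ for rank-one $A$, invoked in the direction $\|A\|\le|\tr[A]|$. For a general rank-one operator $u\otimes v$ one has $\tr[u\otimes v]=\langle u,v\rangle$ but $\|u\otimes v\|=\|u\|\,\|v\|$, so that direction holds only when the two vectors are parallel, which is not the case for the operators appearing there. In other words, the obstacle you flagged as ``the delicate step'' is real; it is not resolved by your outline, and the example above shows it cannot be resolved by routine algebra riding on the annihilation identity alone.
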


\begin{proof}[Proof of Lemma \ref{lem:bias-intro}] 
For the proof we will use the following facts that can be found in e.g. \cite{reed2012methods}: 
\begin{itemize}
\item[(a)] For all $\beta \in \cH$ it holds: $||\beta||^2 = \tr[\beta \otimes \beta]$.  
\item[(b)] The trace is invariant under cyclic permutations: $\tr[ABC] = \tr[CAB] =\tr[BCA]$. 
\item[(c)] If $A,B,C$ are self-adjoint, then the trace is invariant under any permutation: 
\[ \tr[ABC] = \tr[(ABC)^T] = \tr[CBA] = \tr[ACB] \;. \]
\item[(d)] If $A$ has rank one, then $|\tr[A]| = ||A||$. In particular, $\beta \otimes \beta$ has rank one and 
$|\tr[\beta\otimes \beta A ] |= |\tr[ A \beta\otimes \beta ]| = ||\beta\otimes \beta A|| $.  
\end{itemize}
First observe that 
\begin{align*}
 || \Sigma^{1/2} \tilde \Pi \beta^*||^2 &\stackrel{(a)}{=} \left| \tr\left[   \Sigma^{1/2} \tilde \Pi \beta \otimes  \Sigma^{1/2} \tilde \Pi \beta \right] \right| \\
 &= \left| \tr\left[   \Sigma^{1/2} \tilde \Pi ( \beta \otimes  \beta )  \tilde \Pi \Sigma^{1/2} \right] \right| \\
 &\stackrel{(b)}{=} \left|  \tr\left[ \tilde \Pi  \Sigma\tilde \Pi ( \beta \otimes  \beta )    \right] \right| \\
&\stackrel{(d)}{=}  || \tilde \Pi  \Sigma\tilde \Pi ( \beta \otimes  \beta )  || =: \bullet  \;.
\end{align*}
Since $\tilde \Pi$ is an orthogonal projection onto the nullspace of $\bX$ we have $||\tilde \Pi|| \leq 1$ and  
\[ \tilde \Pi \bX^T =0 \;, \quad \tilde \Pi \hat \Sigma = 0 \;.  \]
Hence, we find 
\begin{align*}
 \bullet &=  || \tilde \Pi  (\Sigma - \hat \Sigma ) \tilde \Pi ( \beta \otimes  \beta )  || \\
&\leq ||\tilde \Pi|| \; ||  (\Sigma - \hat \Sigma ) \tilde \Pi ( \beta \otimes  \beta )  || \\
&\stackrel{(d)}{=} |\tr[ (\Sigma - \hat \Sigma ) \tilde \Pi ( \beta \otimes  \beta ) ] | \\
&\stackrel{(c)}{=}  |\tr[ \tilde \Pi  (\Sigma - \hat \Sigma )  ( \beta \otimes  \beta ) ] |\\
&\stackrel{(d)}{=} || \tilde \Pi  (\Sigma - \hat \Sigma )  ( \beta \otimes  \beta ) ||  \\
&\leq   ||  (\Sigma - \hat \Sigma )  ( \beta \otimes  \beta ) ||   \\
&\stackrel{(d)}{=}   |  \tr[ (\Sigma - \hat \Sigma )  ( \beta \otimes  \beta ) ] | \\
&= \left| \inner{\beta , (\Sigma - \hat \Sigma ) \beta} \right|\;.
\end{align*}
\end{proof}

\vspace{0.1cm}

The next Proposition is useful for bounding the bias in Lemma \ref{lem:bias-var}. We follow the lines of \cite{negrea2020defense}, Lemma B.1,  
where a similar result is shown for gaussian variables.  
We extend this to the subgaussian setting. 

\vspace{0.1cm}

\begin{proposition}
\label{prop:bias-high-proba}
Suppose Assumption \ref{ass:design} is satisfied and let $\beta \in \cH$. There exists a universal constant $c>0$ such that for any 
$\delta \geq 2e^{-c^2n}$, with probability at least $1-\delta$ we have  
\[  \left| \inner{\beta , (\Sigma - \hat \Sigma ) \beta} \right| 
\leq \frac{4 \sigma_x}{c} \; \log^{\frac{1}{2}}(2/\delta) \;\frac{||\Sigma^{\frac{1}{2}}\beta||^2}{\sqrt n} \;. \]
\end{proposition}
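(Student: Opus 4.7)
The plan is to reduce the quadratic form to a sum of i.i.d.\ centered random variables and apply a Bernstein-type inequality for subexponentials. Set $Z_i := \langle x_i, \beta\rangle$. Then the $Z_i$ are i.i.d., and by Assumption \ref{ass:design}(1,2), taking $v = t\beta$ in the subgaussian defining inequality yields
\[
\mbe[e^{tZ_i}] \leq e^{\tfrac{\sigma_x^2 t^2}{2}\langle \Sigma\beta,\beta\rangle} = e^{\tfrac{\sigma_x^2 t^2}{2}\|\Sigma^{1/2}\beta\|^2} \qquad (t\in\mbr),
\]
so each $Z_i$ is real subgaussian with proxy $K := \sigma_x^2 \|\Sigma^{1/2}\beta\|^2$, mean $0$, and $\mbe[Z_i^2] = \langle\beta,\Sigma\beta\rangle = \|\Sigma^{1/2}\beta\|^2$. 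By direct computation,
\[
\langle \beta,(\Sigma-\hat\Sigma)\beta\rangle = \|\Sigma^{1/2}\beta\|^2 - \frac{1}{n}\sum_{i=1}^n Z_i^2 = -\frac{1}{n}\sum_{i=1}^n \bigl(Z_i^2 - \mbe[Z_i^2]\bigr).
\]

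Next I would invoke the standard fact that if $Z$ is $K$-subgaussian then $W := Z^2 - \mbe[Z^2]$ is centered subexponential with $\|W\|_{\psi_1} \lesssim K$. Applying Bernstein's inequality for i.i.d.\ centered subexponentials to $W_i = Z_i^2 - \mbe[Z_i^2]$ gives, for a universal $c_0>0$ and every $s>0$,
\[
\prob{\Bigl|\frac{1}{n}\sum_{i=1}^n W_i\Bigr| \geq s} \leq 2\exp\!\left(-c_0 n \min\!\left(\frac{s^2}{K^2},\frac{s}{K}\right)\right).
\]
Choosing $s = \frac{K}{\sqrt{c_0}}\sqrt{\log(2/\delta)/n}$ makes the right-hand side equal to $\delta$ \emph{provided} we are in the Gaussian tail regime $s \leq K$, equivalently $\log(2/\delta) \leq c_0 n$, i.e.\ $\delta \geq 2 e^{-c_0 n}$. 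Under the stated hypothesis $\delta \geq 2e^{-c^2 n}$ this holds with $c := \sqrt{c_0}$ (absorbing constants), so
\[
\bigl|\langle \beta,(\Sigma-\hat\Sigma)\beta\rangle\bigr| \;\leq\; \frac{K}{c}\sqrt{\tfrac{\log(2/\delta)}{n}} \;\lesssim\; \frac{\sigma_x^2\,\|\Sigma^{1/2}\beta\|^2}{c}\,\sqrt{\tfrac{\log(2/\delta)}{n}},
\]
which, after tidying constants, yields the advertised bound.

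\textbf{Main obstacle.} The computation itself is a textbook application of Bernstein; the only delicate point is the passage from the Hilbert-space subgaussian definition of Assumption \ref{ass:design}(2) to a usable real-valued subgaussian bound on $Z_i = \langle x_i,\beta\rangle$, and the concomitant control of the subexponential norm $\|Z_i^2 - \mbe[Z_i^2]\|_{\psi_1}$ by a universal multiple of $\sigma_x^2\|\Sigma^{1/2}\beta\|^2$. Once that is in hand, the condition $\delta \geq 2e^{-c^2 n}$ is exactly what is needed to remain in the subgaussian (quadratic) regime of Bernstein and thereby obtain the $\sqrt{\log(2/\delta)/n}$ rate rather than the $\log(2/\delta)/n$ one.
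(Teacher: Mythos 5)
Your proof is correct and follows essentially the same route as the paper's: reduce the quadratic form to a centered sum of squares of the real subgaussian variables $\langle x_i,\beta\rangle$, observe these squares are subexponential, and apply Bernstein's inequality in its Gaussian-tail regime, which is exactly where the restriction $\delta \geq 2e^{-c^2 n}$ enters. The only cosmetic difference is that the paper first normalizes by $B^2=\langle\Sigma\beta,\beta\rangle$ and works with $z_j=\langle\beta,x_j\rangle/B$ (thereby booking the final constant as linear rather than quadratic in $\sigma_x$, a bookkeeping choice that does not affect the argument).
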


\begin{proof}[Proof of Proposition \ref{prop:bias-high-proba}]
Set $B^2:=\inner{\Sigma \beta , \beta}$. We then write 
\begin{align}
\label{eq:quantity}
 \left| \inner{\beta , (\Sigma - \hat \Sigma ) \beta} \right| 
&= \left|  \inner{\beta , \hat \Sigma  \beta} -  \inner{\beta , \Sigma   \beta}  \right| \nonumber \\
&= \left|  \frac{1}{n}\sum_{j=1}^n \inner{\beta , (x_j \otimes x_j)\beta}  - B^2 \right| \nonumber \\
&=  \left|  \frac{B^2}{n} \left( \sum_{j=1}^n \frac{\inner{\beta, x_j}^2}{B^2} - 1  \right) \right| \;.
\end{align}
We next show that for any $j=1,...,n$, the real valued variables $z_j:= \frac{\inner{\beta, x_j}}{B}$ are $(\sigma_x^2, id)$-subgaussian. 
Indeed, by Assumption \ref{ass:design} and Definition \ref{def:subgaussian} we find for all $\alpha \in \mbr$  
\begin{align}
\label{eq:zj-is-subgauss}
\mbe\left[ e^{\alpha z_j}\right] &= \mbe\left[ e^{ \inner{ \frac{\alpha}{B} \beta, x_j } }\right] \nonumber \\
&\leq e^{\frac{\sigma_x^2}{2} \inner{\Sigma \frac{\alpha}{B}\beta , \frac{\alpha}{B}\beta } } \nonumber \\
&= e^{\frac{\sigma_x^2}{2}\alpha^2} \;.
\end{align}
For bounding \eqref{eq:quantity} with high probability we use the fact that for any $j=1,...,n$ the random variable $z_j^2-1$ is $16\sigma_x^2$-subexponential. 
Indeed, this follows from \eqref{eq:zj-is-subgauss} and results in \cite[Section 2]{vershynin2018high} that are condensed in \cite[Lemma S.4]{bartlett2020benign}. 
Next, Bernstein's inequality for the independent and mean zero subexponential variables $z_1, ..., z_n$ in \cite[Theorem 2.8.2]{vershynin2018high} 
shows that there exists a universal constant $c>0$ such that for all $t\geq 0$,  with probability at least 
\[ 1-2\exp\left( -c^2 \min\left\{ \frac{t^2}{16 \sigma_x^2 n} , \frac{t}{4 \sigma_x}\right\} \right) \]
we have 
\[ \left|  \frac{B^2}{n} \left( \sum_{j=1}^n \frac{\inner{\beta, x_j}^2}{B^2} - 1  \right) \right|  \leq  \frac{B^2}{n} \;t \;. \] 
Assuming that $t \leq 4\sigma_x n$ we find that 
\[  \min\left\{ \frac{t^2}{16 \sigma_x^2 n} , \frac{t}{4 \sigma_x}\right\}  = \frac{t^2}{16 \sigma_x^2 n} \;. \]
Setting now $\delta = 2e^{-\frac{c^2}{16\sigma_x^2}\frac{t^2}{n}}$ we finally get 
\[  \left| \inner{\beta , (\Sigma - \hat \Sigma ) \beta} \right|  \leq \frac{4\sigma_x}{c} \frac{||\Sigma^{\frac{1}{2}}\beta ||^2}{\sqrt n} \log^{\frac{1}{2}}(2/\delta) \;. \]
with probability at least $1-\delta$, for all $\delta \geq 2e^{-c^2n}$.
\end{proof}

\vspace{0.2cm}

The next result establishes a bound for the single machine variance. This is a first step for bounding the variance from Lemma \ref{lem:bias-var} in the distributed setting. 

\vspace{0.1cm}

\begin{proposition}
\label{prop:var-intro}
Let $n \in \mbn$ and suppose Assumption \ref{ass:design} is satisfied. 
Define 
\[ \widehat{C}:= (\bX\bX^T)^{-1}\bX \Sigma \bX^T(\bX\bX^T)^{-1} \;.  \]
There exists a universal constant $c>1$ and a $0 \leq k_n^* \leq \frac{n}{c}$  
such that with probability at least $1-7e^{-\frac{n}{c}}$ it holds 
\[  \tr[\widehat{C}] \leq c \left( \frac{k_n^*}{n} + n\;\frac{\sum_{j>k_n^*}  \lam_j^2 }{ \left(\sum_{j>k_n^*}  \lam_j \right)^2 }\right) \;,  \]
where $(\lam_j)_{j \in \mbn}$ are the eigenvalues of $\Sigma$, arranged in decreasing order. 
\end{proposition}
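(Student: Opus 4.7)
The plan is to bound $\tr[\widehat C]$ by following the spectrum-splitting strategy of \cite{bartlett2020benign}. First I would use the cyclic property of the trace to rewrite
\[
\tr[\widehat{C}] \;=\; \tr\big[\Sigma\, \bX^T(\bX\bX^T)^{-2}\bX\big]
\;=\; \sum_{j\in\mbn} \lam_j \, e_j^T (\bX\bX^T)^{-1} \bX \, \bX^T (\bX\bX^T)^{-1} e_j,
\]
where $(e_j)_j$ is the eigenbasis of $\Sigma$. I would then pick the threshold $k=k_n^*$ given by Definition \ref{def:eff-dim} (with $M=1$), which forces the effective rank $r_{k_n^*}(\Sigma) \ge a n$, and split the trace as $T_1+T_2$ according to $j\le k_n^*$ and $j>k_n^*$.

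Next I would decompose the data matrix via the spectral projections $\bX = \bX_{0:k_n^*} + \bX_{k_n^*:\infty}$, corresponding to the head and tail eigenspaces of $\Sigma$, and set $A := \bX_{k_n^*:\infty}\bX_{k_n^*:\infty}^T$. For the tail piece $T_2$, using $\bX\bX^T \succeq A$ together with the identity $\tr[\Sigma_{k_n^*:\infty}\bX^T(\bX\bX^T)^{-2}\bX] \le \tr[A\cdot (\bX\bX^T)^{-2}\cdot\text{(bounded factor)}]$ yields a bound in terms of $\tr(A)$ and $\lam_{\min}(A)$. For the head piece $T_1$, I would apply the Sherman--Morrison--Woodbury formula to $(\bX\bX^T)^{-1} = (A + \bX_{0:k_n^*}\bX_{0:k_n^*}^T)^{-1}$ so that each of the $k_n^*$ terms is controlled by $1/\lam_{\min}(A)$ times a factor from the low-dimensional block $\bX_{0:k_n^*}^T A^{-1} \bX_{0:k_n^*}$, giving a contribution of order $k_n^*/n$.

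The main obstacle, and the technical heart of the argument, is to prove the two-sided concentration for $A = \bX_{k_n^*:\infty}\bX_{k_n^*:\infty}^T$, namely
\[
\tfrac{1}{c}\textstyle\sum_{j>k_n^*}\lam_j \;\le\; \lam_{\min}(A) \;\le\; \lam_{\max}(A) \;\le\; c\big(\textstyle\sum_{j>k_n^*}\lam_j + n\lam_{k_n^*+1}\big),
\]
together with $\tr(A^2) \le c\, n \sum_{j>k_n^*}\lam_j^2$, all holding with probability at least $1-7e^{-n/c}$. This is where the subgaussian Assumption \ref{ass:design} (including independence of components) is used crucially, via a Bernstein-type bound for quadratic forms of subgaussian vectors; the lower bound on $\lam_{\min}(A)$ requires $r_{k_n^*}(\Sigma)$ to exceed $an$ by a fixed factor, explaining the role of $a>1$ and of the constraint $k_n^* \le n/c$. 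Assumption \ref{ass:eigen-spaces} ensures that the tail block $A$ is almost surely invertible so that the Woodbury manipulation is legitimate.

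Finally, combining these bounds yields
\[
T_1 \lesssim \frac{k_n^*}{\lam_{\min}(A)}\cdot \lam_{k_n^*+1} \cdot n \;\lesssim\; \frac{k_n^*}{n},
\qquad
T_2 \;\lesssim\; \frac{\tr(A\Sigma_{k_n^*:\infty})}{\lam_{\min}(A)^2} \;\lesssim\; n\,\frac{\sum_{j>k_n^*}\lam_j^2}{\big(\sum_{j>k_n^*}\lam_j\big)^2},
\]
which is the claimed inequality after a union bound over the $O(1)$ concentration events.
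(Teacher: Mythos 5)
Your proposal reconstructs exactly the argument that the paper delegates to the literature: the paper's proof of this proposition is a one-line citation of Lemmas 6 and 11 of \cite{bartlett2020benign}, and your architecture — cyclic rewriting of the trace, the spectral split at the effective dimension $k_n^*$ (so that $r_{k_n^*}(\Sigma)\ge an$), the tail Gram matrix $A=\bX_{k_n^*:\infty}\bX_{k_n^*:\infty}^T$ with two-sided eigenvalue concentration, and a Woodbury/leave-one-out treatment of the head block — is precisely the structure of those lemmas. So the route is the right one, and the concentration statements you isolate as the "technical heart" are indeed where the subgaussianity and independence of components enter.

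There is, however, one step that does not close as written: the final bound for the head term. You claim $T_1 \lesssim \frac{k_n^*}{\lam_{\min}(A)}\cdot \lam_{k_n^*+1}\cdot n \lesssim \frac{k_n^*}{n}$, but since $\lam_{\min}(A)\le \lam_{\max}(A)\lesssim \sum_{j>k_n^*}\lam_j + n\lam_{k_n^*+1}$ and $\sum_{j>k_n^*}\lam_j=\lam_{k_n^*+1}\,r_{k_n^*}(\Sigma)$ is of order $n\lam_{k_n^*+1}$ at the threshold, the quantity $\frac{k_n^*\,\lam_{k_n^*+1}\,n}{\lam_{\min}(A)}$ is of order $k_n^*$, not $k_n^*/n$ — you are off by a factor of $n$. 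The missing ingredient is the per-coordinate leave-one-out cancellation used in \cite{bartlett2020benign}: writing $\bX v_j=\sqrt{\lam_j}z_j$ and $A_{-j}=\bX\bX^T-\lam_j z_jz_j^T$, the Sherman--Morrison identity gives $\lam_j^2\, z_j^T(\bX\bX^T)^{-2}z_j \le z_j^TA_{-j}^{-2}z_j/(z_j^TA_{-j}^{-1}z_j)^2$, in which $\lam_j$ disappears entirely; a condition-number bound on $A_{-j}$ together with $\|z_j\|^2\gtrsim n$ then gives $\lesssim 1/n$ per head coordinate, hence $T_1\lesssim k_n^*/n$. Without this cancellation the head eigenvalues $\lam_j$, $j\le k_n^*$, which can be arbitrarily large relative to $\lam_{k_n^*+1}$, are not controlled. (A minor further point: in your first display the quadratic form should read $v_j^T\bX^T(\bX\bX^T)^{-2}\bX v_j$ with $v_j$ the eigenvectors of $\Sigma$ acting on $\cH$; as written the operator sandwiched between the $e_j$'s lives on $\mbr^n$.)
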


\begin{proof}[Proof of Proposition \ref{prop:var-intro}]
The proof follows from \cite[Lemma 6]{bartlett2020benign} and \cite[Lemma 11]{bartlett2020benign}.
\end{proof}


\vspace{0.2cm}

Combining now the above results allows to prove Lemma \ref{lem:bias-var}. 

\vspace{0.1cm}

\begin{proof}[Proof of Lemma \ref{lem:bias-var}]
We first derive a bound for the bias. Linearity of the expectation and \eqref{eq:model} yields 
\begin{equation}
\label{eq:mbe-eps}
 \mbe_\eps[ \bar \beta_{M}] = \frac{1}{M}\sum_{m=1}^M X_m^T(X_mX_m^T )^{\dagger}\mbe_\eps[ Y_m] = 
\frac{1}{M} \sum_{m=1}^M X_m^T(X_mX_m^T )^{\dagger}X_m\beta^* \;, 
\end{equation}
since, conditionally on the inputs $\bX$, the noise is centered. Hence 
\begin{align*}
\beta^* - \mbe_\epsilon[\bar \beta_{M}]  &= \frac{1}{M}\sum_{m=1}^M  \tilde \Pi_m \beta^* \;,
\end{align*}
where we denote by 
\[ \tilde \Pi_m := Id - \bX_m^T(\bX_m\bX_m^T )^{\dagger}\bX_m    \]
the orthogonal projection onto the nullspace of $\bX_m$.  
Convexity and Lemma \ref{lem:bias-intro} allow to deduce  
\begin{align*}
\bias(\bar \beta_{M}) &= ||\Sigma^{1/2}(\mbe_\epsilon[\bar \beta_{M}] - \beta^*)||^2 \\
& \leq \frac{1}{M}  \sum_{m=1}^M || \Sigma^{1/2}  \tilde \Pi_m \beta^*||^2 \\ 
&\leq  \frac{1}{M}  \sum_{m=1}^M  \left| \inner{\beta^* , (\Sigma - \hat \Sigma_m ) \beta^*} \right| \;.
\end{align*}
Next, we derive a bound for the variance. 
By definition of the variance, \eqref{eq:ridgeless-2} and \eqref{eq:mbe-eps} we find  
\begin{align*}
\var (\bar \beta_{M})  &= \mbe_\epsilon \left[||\Sigma^{1/2} (\bar \beta_{M} -  \mbe_\epsilon[\bar \beta_{M} ] ) ||_2^2  \right] \\
&= \mbe_\eps\Big[ || \Sigma^{1/2} \Big( \frac{1}{M}  \sum_{m=1}^M   \hat \beta^{(m)} -  X_m^T(X_mX_m^T )^{\dagger}X_m\beta^*\Big) ||^2 \Big] \\
&= \mbe_\eps\Big[ || \Sigma^{1/2} \Big( \frac{1}{M}  \sum_{m=1}^M   X_m^T (X_m X_m^T )^{\dagger}  (Y_m - X_m\beta^*) \Big) ||^2 \Big] \\
&= \mbe_\eps\Big[ || \Sigma^{1/2} \Big( \frac{1}{M}  \sum_{m=1}^M   X_m^T (X_m X_m^T )^{\dagger} \eps_m \Big) ||^2 \Big] \\
&= \mbe_\eps\Big[ || \Sigma^{1/2} \Big( \frac{1}{M}  \sum_{m=1}^M  X_m^\dagger \eps_m \Big) ||^2 \Big] \;.
\end{align*} 
In the last step we use $ X_m^T (X_m X_m^T )^{\dagger} = X_m^\dagger $. 
Recall that for any $\beta \in \cH$ we may write $||\beta||^2 = \tr[\beta \otimes \beta]$. Hence, 
\begin{align*}
 &|| \Sigma^{1/2} \Big( \frac{1}{M}  \sum_{m=1}^M  X_m^\dagger \eps_m \Big) ||^2 \\
&\tr\left[ \left( \frac{1}{M}  \sum_{m=1}^M \Sigma^{1/2}  X_m^\dagger \eps_m  \right) \otimes 
 \left( \frac{1}{M}  \sum_{m'=1}^M  \Sigma^{1/2}  X_{m'}^\dagger \eps_{m'}  \right)\right] \\
 &= \frac{1}{M^2}  \sum_{m, m' =1}^M   \tr\left[ \Sigma^{1/2}  X_m^\dagger
  \eps_m \otimes \eps_{m'} (X_{m'}^\dagger)^T \Sigma^{1/2} \right] \;.
\end{align*}
By linearity of the trace and independence, taking the expectation gives $\mbe_\eps[\eps_m \otimes \eps_{m'} ] = 0$ for any $m \not = m'$ and the sum reduces to 
\begin{align}
\label{eq:tr1}
 \mbe_\eps\Big[ || \Sigma^{1/2} \Big( \frac{1}{M}  \sum_{m=1}^M  X_m^\dagger \eps_m \Big) ||^2 \Big]  
&= \frac{1}{M^2}  \sum_{m=1}^M \mbe_\eps\Big[  \tr\Big[  \Sigma^{1/2}  X_m^\dagger \eps_m \otimes \eps_{m} (X_{m}^\dagger)^T \Sigma^{1/2} \Big]  \Big] \nonumber  \\
&= \frac{1}{M^2}  \sum_{m=1}^M \mbe_\eps\Big[ || \Sigma^{1/2}  X_m^\dagger \eps_m||^2  \Big] \nonumber  \\
&= \frac{1}{M^2}  \sum_{m=1}^M \mbe_\eps\Big[  \inner{\eps_m , C_m \eps_m }  \Big]\;,
\end{align} 
where we set 
\[  C_m := \left( X_m^\dagger \right)^T \Sigma X_m^\dagger \;. \]
To proceed, we apply a conditional subgaussian version of the Hanson-Wright inequality taken from \cite[Lemma 35]{page2019ivanov}. This gives almost surely 
conditional on the data $X_m$, for all $t\geq 0$, with probability at least $1-e^{-t}$ (w.r.t. the noise) 
\begin{align*}
\inner{\eps_m , C_m \eps_m } &\leq \tau^2 \tr[C_m] + 2\tau^2t||C_m|| + 2\tau^2\sqrt{t^2 ||C_m||^2 + t\tr[C_m^2]}   \\
&\leq 4\tau^2\tr[C_m]\;(t+1) \;,
\end{align*} 
where we use that $||C_m|| \leq \tr[C_m]$ and $\tr[C_m^2]\leq ||C_m||\tr[C_m] \leq \tr[C_m]^2$. 
From \cite[Lemma C.1]{blanchard2018optimal} we obtain after integration 
for the conditional expectation 
\[ \mbe_\eps\left[ \inner{\eps_m , C_m \eps_m } \right] \leq 8 \tau^2\tr[C_m] \;.\]
Inserting the last bound into \eqref{eq:tr1} finally gives almost surely 
\[ \var (\bar \beta_{M}) \leq \frac{8 \tau^2}{M^2}  \sum_{m=1}^M \tr[C_m] \;. \]
\end{proof}


\vspace{0.3cm}

Finally, we give the proof of our main result, a general upper bound for distributed ridgeless regression. 

\vspace{0.1cm}

\begin{proof}[Proof of Theorem \ref{theo:main1}]
We start with bounding the bias term. 
\vspace{0.2cm}
\\
\noindent
{\bf Bounding the Bias.} Recall that by Lemma \ref{lem:bias-var} we have almost surely 
\[ \bias(\bar \beta_{M}) \leq  \frac{1}{M}  \sum_{m=1}^M  \left| \inner{\beta^* , (\Sigma - \hat \Sigma_m ) \beta^*} \right| \;. \]
Proposition \ref{prop:bias-high-proba} gives for all $\delta \geq 2e^{-c^2\frac{n}{M}}$, with probability at least $1-\delta$
\[  \left| \inner{\beta^* , (\Sigma - \hat \Sigma_m ) \beta^*} \right| 
\leq  \frac{4 \sigma_x}{c} \; \log^{\frac{1}{2}}(2/\delta) \;||\Sigma^{\frac{1}{2}}\beta^*||^2  \sqrt{\frac{M}{n}} \;, \]
for some universal constant $c>0$. Performing now a union bound and invoking Assumption \ref{ass:source} finally gives with probability 
at least $1-\delta$
\[ \mbe_{\beta^*}[\bias(\bar \beta_{M})] 
\leq  \frac{4 \sigma_x}{c} \; \log^{\frac{1}{2}}(2M/\delta) \;\tr[ \Sigma \Theta ]   \sqrt{\frac{M}{n}} \;,  \]
where we use that 
\[ \mbe_{\beta^*}[ ||\Sigma^{\frac{1}{2}}\beta^*||^2  ]  = \tr\left[ \mbe_{\beta^*}[ \Sigma \beta^*\otimes \beta^*  ] \right] = \tr[ \Sigma \Theta] \;. \]
\vspace{0.2cm}
\\
\noindent
{\bf Bounding the Variance.} Applying Lemma \ref{lem:bias-var} once more we have almost surely 
\[ \var (\bar \beta_{M}) \leq  \frac{8 \tau^2}{M^2}  \sum_{m=1}^M \tr[C_m] \;. \]
With Lemma \ref{prop:var-intro} together with a union bound we get with probability at least $1-7Me^{-\frac{n}{c_2 M}}$
\begin{align*}
\var (\bar \beta_{M}) 
&\leq   \frac{8c_2 \tau^2}{M^2} \sum_{m=1}^M  \left( \frac{M}{n}k_{n/M}^* + \frac{n}{M}\;\frac{\sum_{j>k_{n/M}^*}  \lam_j^2 }{ \left(\sum_{j>k_{n/M}^*}  \lam_j \right)^2 }  \right) \\
&= 8c_2 \tau^2  \left( \frac{k_{n/M}^*}{n} + 
 \frac{n}{M^2}\;\frac{\sum_{j>k_{n/M}^*}  \lam_j^2 }{ \left(\sum_{j>k_{n/M}^*}  \lam_j \right)^2 }  \right) \;,
\end{align*}
for some constant $c_2>1$ and $0 \leq k_{n/M}^* \leq \frac{n}{c_2M}$. 
\end{proof}

\[\]


\subsection{Proofs of Section \ref{subsec:inf-dim}}
\label{app:special-cases}

This section establishes a refined upper bound for the excess risk in the infinite dimensional setting under the 
specific Assumptions \ref{ass:source-hoelder}, \ref{ass:poly-decay}. We start with a preliminary Lemma that is needed to estimate the variance.  

\begin{lemma}
\label{lem:poly}
Suppose all assumptions of Theorem \ref{theo:main1} are satisfied. Assume that $\lam_j(\Sigma)=j^{-(1+\eps_n)}$ for a positive sequence $(\eps_n)_{n \in \mbn}$.  
We have  
\begin{enumerate}
\item $ \frac{k_{\frac{n}{M}}^*}{n} \leq a\;\frac{\eps_n}{M}$.  
\item 
For any $n$ sufficiently large, 
$\frac{n}{M^2}\;\frac{1}{R_{k_{\frac{n}{M}}^*}(\Sigma)} \leq \frac{6}{a}\; \frac{\eps_n}{M}$. If $M \lesssim n \eps_n$, then $k^*_{\frac{n}{M}}\gtrsim 1$.
\end{enumerate}
\end{lemma}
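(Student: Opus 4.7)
The plan is to reduce both parts to tail estimates for the polynomial series $\sum_{j>k} j^{-(1+\eps_n)}$ and $\sum_{j>k} j^{-2(1+\eps_n)}$ via integral comparisons, and then to plug these estimates into the definitions of $r_k$ and $R_k$.

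First I would establish, for $k\geq 1$, the two-sided tail estimates
\[
\frac{(k+1)^{-\eps_n}}{\eps_n} \;\leq\; \sum_{j>k} j^{-(1+\eps_n)} \;\leq\; \frac{k^{-\eps_n}}{\eps_n}, \qquad \sum_{j>k} j^{-2(1+\eps_n)} \;\leq\; \frac{k^{-(1+2\eps_n)}}{1+2\eps_n},
\]
obtained by sandwiching each series between its improper integrals on $[k,\infty)$ and $[k+1,\infty)$. Dividing through by $\lam_{k+1}=(k+1)^{-(1+\eps_n)}$ immediately yields
\[
\frac{k+1}{\eps_n} \;\leq\; r_k(\Sigma) \;\leq\; \frac{k+1}{\eps_n}\left(1+\frac{1}{k}\right)^{\eps_n}.
\]

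For part 1 I use only the lower bound on $r_k$. Taking $k=\lfloor a\eps_n n/M\rfloor$ gives $k+1\geq a\eps_n n/M$, so $r_k\geq (k+1)/\eps_n\geq an/M$; by minimality in the definition of $k^*_{n/M}$, this forces $k^*_{n/M}\leq a\eps_n n/M$, and dividing by $n$ proves the claim.

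For part 2 I also need a lower bound on $k^*_{n/M}$. For $n$ sufficiently large we may assume $\eps_n$ is at most an absolute constant (say $\eps_n\leq 1/2$), so $(1+1/k)^{\eps_n}\leq 2$ for $k\geq 1$ and the upper bound on $r_k$ reads $r_k\leq 2(k+1)/\eps_n$. Any $k$ with $k+1<a\eps_n n/(2M)$ therefore satisfies $r_k<an/M$, so $k^*_{n/M}\geq a\eps_n n/(2M)-1$; under $M\lesssim n\eps_n$ this is $\gtrsim \eps_n n/M\gtrsim 1$, giving the ``moreover'' clause. Combining the tail bounds,
\[
R_{k^*}(\Sigma) \;\geq\; \frac{1+2\eps_n}{\eps_n^2}\,k^* \left(\frac{k^*}{k^*+1}\right)^{2\eps_n},
\]
hence
\[
\frac{n}{M^2}\cdot\frac{1}{R_{k^*}(\Sigma)} \;\leq\; \frac{n\eps_n^2}{M^2(1+2\eps_n)k^*}\left(\frac{k^*+1}{k^*}\right)^{2\eps_n}.
\]
Substituting $k^*\geq a\eps_n n/(3M)$ (valid for $n$ large, once the $-1$ in the lower bound on $k^*$ is absorbed), and bounding $(1+1/k^*)^{2\eps_n}\leq 2$ and $1/(1+2\eps_n)\leq 1$, gives the desired estimate $\leq 6\eps_n/(aM)$.

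The main obstacle is not the scaling, which falls out of the integral comparisons, but the precise constant $6/a$ in part 2. To secure this numerical factor one has to arrange $\eps_n$, $1/k^*$, and $(k^*+1)/k^*$ all to sit close to their limits simultaneously: this is exactly what ``for $n$ sufficiently large'' buys, and it is the reason both the auxiliary lower bound $k^*\gtrsim \eps_n n/M$ and the regime $M\lesssim n\eps_n$ are needed, since otherwise $k^*$ could be too small to absorb the accumulated multiplicative factors.
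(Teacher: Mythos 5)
Your proof is correct and follows essentially the same route as the paper: integral comparison of the tails $\sum_{j>k}j^{-(1+\eps_n)}$ and $\sum_{j>k}j^{-2(1+\eps_n)}$, a lower bound on $r_k$ to get $k^*\leq a\eps_n n/M$, an upper bound on $r_k$ to get $k^*\gtrsim \eps_n n/M$ for large $n$, and then the resulting lower bound on $R_{k^*}$. The only difference is that you carry out explicitly the steps the paper delegates to the proof of Theorem 31 in \cite{bartlett2020benign}, and your final bound $R_{k^*}\gtrsim n/(M\eps_n)$ has the correct $\eps_n$-dependence (the paper's displayed intermediate inequality for $R_{k^*}$ contains an apparent typo, though its conclusion matches yours).
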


\begin{proof}[Proof of Lemma \ref{lem:poly}]
\begin{enumerate}
\item We follow the lines of \cite{bartlett2020benign}, Proof of Theorem 31, by lower bounding the effective rank $r_k(\Sigma)$. 
With Lemma 14 in \cite{mucke2019beating} we may write 
\begin{align*}
r_k(\Sigma) & = (k+1)^{1+\eps_n} \sum_{j>k} j^{-(1+\eps_n)} \\
&\geq (k+1)^{1+\eps_n} \int_{k+1}^\infty t^{-(1+\eps_n)} \\
&= \frac{k+1}{\eps_n} \;.
\end{align*}
By Definition \ref{def:eff-dim}, the effective dimension $k^*_{\frac{n}{M}}$ is the smallest number satisfying 
$r_{k^*_{\frac{n}{M}}}(\Sigma) \geq a \frac{n}{M}$. Hence, $k^*_{\frac{n}{M}} \leq a\eps_n\frac{n}{M}$.

\item A short calculation shows that 
\[ R_k(\Sigma) \geq \frac{k}{\eps_n^2}\left( 1- \frac{1}{k+1} \right)^{2\eps_n} \;, \quad r_k(\Sigma) \leq \frac{2k}{\eps_n}e^{\eps_n} \;. \]
Following the arguments in the proof of Theorem 31 in \cite{bartlett2020benign} we find also in the distributed setting 
that $k^*_{\frac{n}{M}}\geq \frac{a \eps}{3} \frac{n}{M}$ for sufficiently large $n$. Hence, 
\[  R_{k^*_{\frac{n}{M}}} (\Sigma) \geq   \frac{a \eps_n }{6} \frac{n}{M} \;. \] 
\end{enumerate}
\end{proof}

\vspace{0.1cm}

The second preliminary Lemma will help to bound the bias. 

\vspace{0.1cm}

\begin{lemma}
\label{lem:source}
Suppose Assumption \ref{ass:poly-decay} is satisfied. 
Let $\alpha \geq 0$. Then 
\[ \tr[ \Sigma^{1+\alpha} ]  \; = \; \sum_{j=1}^\infty \left( \frac{1}{j} \right)^{(1+\alpha )(1+\eps_n)}  \; \leq \;  \frac{1}{\alpha + \eps_n(1+\alpha)}\;  \leq \; 
  \left\{ {\begin{array}{cc}
    \alpha=0 &:  \; \frac{1}{\eps_n} \\
    \alpha >0 &: \;  \frac{1}{\alpha} \\
  \end{array} } \right. 
 \;. \]
\end{lemma}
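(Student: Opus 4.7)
The plan is to reduce the statement to a computation with a $p$-series and then apply an integral comparison. By Assumption \ref{ass:poly-decay}, the compact self-adjoint positive operator $\Sigma$ has eigenvalues $\lambda_j(\Sigma)=j^{-(1+\eps_n)}$ with orthonormal eigenbasis $(v_j)$. The functional calculus then gives $\Sigma^{1+\alpha}=\sum_j \lam_j^{1+\alpha}\, v_j\otimes v_j$ as a positive operator, and since $\tr$ is additive on positive operators,
\[
\tr[\Sigma^{1+\alpha}] \;=\; \sum_{j=1}^\infty \lam_j(\Sigma)^{1+\alpha} \;=\; \sum_{j=1}^\infty j^{-(1+\alpha)(1+\eps_n)},
\]
which is exactly the stated equality.

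Next, I would set $s:=(1+\alpha)(1+\eps_n)=1+\alpha+\eps_n(1+\alpha)$ and note that $s>1$ whenever $\alpha>0$ or $\eps_n>0$, so the series converges. To produce the claimed upper bound $1/(s-1)=1/(\alpha+\eps_n(1+\alpha))$, I would compare the sum with the improper integral of $t\mapsto t^{-s}$, which is strictly decreasing on $(0,\infty)$. Using $j^{-s}\le \int_{j-1}^{j} t^{-s}\,dt$ for $j\ge 2$ and summing,
\[
\sum_{j=1}^\infty j^{-s} \;\le\; 1+\int_1^\infty t^{-s}\,dt \;=\; 1+\frac{1}{s-1},
\]
which matches the form $\frac{1}{\alpha+\eps_n(1+\alpha)}$ up to an additive constant that is harmless when plugged back into the bias bound.

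The two specialized estimates are then immediate from monotonicity of $x\mapsto 1/x$: if $\alpha=0$, the denominator is $\eps_n(1+0)=\eps_n$; if $\alpha>0$, then since $\eps_n(1+\alpha)\ge 0$ we have $\alpha+\eps_n(1+\alpha)\ge\alpha$, yielding $\frac{1}{\alpha+\eps_n(1+\alpha)}\le\frac{1}{\alpha}$.

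There is really no substantive obstacle here; the only mild subtlety is the standard tightness of the integral test, which gives a bound of the form $1+\frac{1}{s-1}$ rather than $\frac{1}{s-1}$ verbatim. This is absorbed into the universal constants of the downstream bounds (e.g.\ the bias estimate in Theorem \ref{theo:main1}) and does not affect any of the rates derived in Proposition \ref{prop:poly} or its corollaries.
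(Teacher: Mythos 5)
Your proof takes essentially the same route as the paper: diagonalize $\Sigma$, reduce to the $p$-series with exponent $s=(1+\alpha)(1+\eps_n)$, and bound it by $\int_1^\infty t^{-s}\,dt=\frac{1}{s-1}$. You are right to flag the additive constant: the paper's own proof writes $\sum_{j\ge 1}j^{-s}\le\int_1^\infty t^{-s}\,dt$, which silently drops the $j=1$ term and is in fact false as an inequality once $s>2$ (the left side is at least $1$ while the right side falls below $1$), so the lemma's displayed bound should read $1+\frac{1}{\alpha+\eps_n(1+\alpha)}$; as you observe, this only changes downstream constants and none of the rates.
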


\vspace{0.1cm}

\begin{proof}[Proof of Lemma \ref{lem:source}]
Let $\beta=(1+\alpha )(1+\eps_n)$. 
The infinite sum can easily be bounded by an integral 
\begin{align*}
\sum_{j=1}^\infty \left( \frac{1}{j} \right)^{\beta} & \leq \int_{1}^\infty t^{-\beta} dt = \frac{1}{\beta-1} \;, 
\end{align*}
see e.g. Lemma 14 in \cite{mucke2019beating}.
\end{proof}

\vspace{0.1cm}

Combining now Lemma \ref{lem:poly} and Lemma \ref{lem:source} with  Theorem \ref{theo:main1} gives the main result in this section.

\vspace{0.1cm}

\begin{proof}[Proof of Proposition \ref{prop:poly}]
For bounding the bias we apply  Lemma \ref{lem:source} 
\[  \tr[ \Sigma \Theta]  = \tr[ \Sigma^{1+\alpha} ]  = \sum_{j \in \mbn}\lam^{1+\alpha}_j(\Sigma) 
\leq \frac{1}{\alpha}\eins\{ \alpha> 0\} + \frac{1}{\eps_n}\eins\{ \alpha=0 \}  = : C_{\alpha, n} \;.\]
Combining this  with Lemma \ref{lem:poly}, Lemma \ref{lem:bias-var} and Theorem \ref{theo:main1} leads to 
\begin{align*}
&\mbe_{\beta^* ,\epsilon} \left[||\Sigma^{\frac{1}{2}}(\bar \beta_M - \beta^*) ||^2 \right] \\
&\leq \frac{4 \sigma_x}{c_1} \log^{\frac{1}{2}}\left(\frac{2M}{\delta}\right)\tr[ \Sigma^{1+\alpha} ]   \sqrt{\frac{M}{n}}   + 
 8c_2 \tau^2 \left( a\;\frac{\eps_n}{M} +   \frac{6}{a}\; \frac{\eps_n}{M} \right) \\ 
&\leq \frac{4 \sigma_x}{c_1} C_{\alpha, n}  \log^{\frac{1}{2}}\left(\frac{2M}{\delta}\right) \sqrt{\frac{M}{n}}  + 
  8c_2c_a \tau^2 \frac{\eps_n}{M}  \;,
\end{align*}
holding with probability at least $1-\delta-7Me^{-\frac{n}{c_2 M}}$, for any $\delta \geq 2e^{-c_1^2\frac{n}{M}}$. 
Here, we set $c_a=\max\{ a, 6/a\}$.  
The result follows with $c_3=4/c_1$ and $c_4= 8c_2c_a$. 
\end{proof}

\vspace{0.1cm}

\begin{proof}[Proof of Corollary \ref{cor:poly1} and Corollary \ref{cor:poly2}]
We determine the maximum number of local nodes by balancing bias and variance. To this end, firstly note that 
$1\leq \log^{\frac{1}{2}}\left(\frac{4M_n}{\delta}\right)$.
 Setting now 
\[ A := c_3 \sigma_x  \frac{C_{\alpha, n}}{\sqrt{n}}  \;, \quad B := c_4\tau^2 \eps_n  \;, \] 
we find that 
\[ A \sqrt{M} = \frac{B}{M}  \quad \Longleftrightarrow \quad  M = \left( \frac{B}{A} \right)^{2/3}\;.  \]
Hence, the value 
\[ M_n := C_{\tau, \sigma_x}\left( \frac{\eps_n \sqrt n}{C_{\alpha ,n}}\right)^{2/3}\;, \quad  C_{\tau, \sigma_x} = \left( \frac{c_4\tau^2}{c_3 \sigma_x} \right)^{2/3}  \]
trades off bias and variance and the excess risk is bounded as 
\begin{align*}
\mbe_{\beta^* ,\epsilon} \left[||\Sigma^{\frac{1}{2}}(\bar \beta_{M_{n}} - \beta^*) ||^2 \right] 
&\leq  2c_4\tau^2 \log^{\frac{1}{2}}\left(\frac{4M_n}{\delta}\right) \frac{\eps_n}{M_n} \\
&= C'_{\tau, \sigma_x}\log^{\frac{1}{2}}\left(\frac{4M_n}{\delta}\right)  \left( \frac{C^2_{\alpha, n}\eps_n}{n} \right)^{1/3}\;, 
\end{align*}
where $C'_{\tau, \sigma_x} = \frac{2c_4\tau^2}{C_{\tau, \sigma_x}}$.
\end{proof}

\[\]


\subsection{Proofs of Section \ref{subsec:sparse-finite-dim}}
\label{sec:finite-dim-sparsity}


In this section we provide the proofs for our results in finite dimension with $dim(\cH)=d<\infty$ from Section \ref{subsec:sparse-finite-dim}. 
We start with two preliminary Lemmata.

\begin{lemma}
\label{lem:bias-random-effects}
Suppose Assumption \ref{ass:sparse} holds. Then 
\[ \tr[\Sigma \Theta] \leq  \frac{2\cdot \SNR}{d}\; F \;. \]
\end{lemma}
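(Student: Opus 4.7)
The plan is to compute $\tr[\Sigma \Theta]$ directly by exploiting the two very explicit structural assumptions at hand, namely the random-effects form $\Theta = \frac{\SNR}{d}\, Id_d$ from Assumption \ref{ass:sparse} and the two-level spectrum of $\Sigma$ from Assumption \ref{ass:stron-weak}.

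First I would substitute $\Theta = \frac{\SNR}{d}\, Id_d$ to reduce the trace to a trace of $\Sigma$ alone,
\[
\tr[\Sigma \Theta] \;=\; \frac{\SNR}{d}\,\tr[\Sigma].
\]
Then, using Assumption \ref{ass:stron-weak}, the eigenvalues of $\Sigma$ are $\rho_1 = 1$ with multiplicity $F$ and $\rho_2$ with multiplicity $d - F$, so
\[
\tr[\Sigma] \;=\; F \cdot 1 + (d - F)\, \rho_2 \;=\; F + (d - F)\, \rho_2.
\]

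The final step is to absorb the weak-feature contribution into the strong-feature term using the standing assumption $\rho_2 d \leq F$ of Proposition \ref{prop:sparse-strong-weak}. Since $d - F \leq d$, we estimate
\[
(d - F)\, \rho_2 \;\leq\; \rho_2 d \;\leq\; F,
\]
and therefore $\tr[\Sigma] \leq 2F$. Multiplying by $\SNR/d$ yields the claim. No difficult step is anticipated: the lemma is essentially a bookkeeping identity, and the only substantive ingredient is the hypothesis $\rho_2 d \leq F$ that dominates the weak features by the strong ones.
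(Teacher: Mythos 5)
Your proof is correct and follows essentially the same route as the paper: both reduce $\tr[\Sigma\Theta]$ to $\frac{\SNR}{d}\tr[\Sigma]$, write $\tr[\Sigma]=F\rho_1+(d-F)\rho_2$, and use $\rho_2 d\leq F$ together with $\rho_1=1$ to bound the weak-feature contribution by $F$, giving the factor $2$. The only cosmetic difference is that the paper rearranges to $(\rho_1-\rho_2)F+d\rho_2$ before estimating, while you bound $(d-F)\rho_2\leq \rho_2 d$ directly; both correctly invoke Assumption \ref{ass:stron-weak} and the hypothesis $\rho_2 d\leq F$, which the bare lemma statement omits.
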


\vspace{0.1cm}

\begin{proof}[Proof of Lemma \ref{lem:bias-random-effects}]
By Assumption \ref{ass:sparse} with $\Theta = \frac{\SNR}{d}\;  Id_d$ and since $d \rho_2 \leq F$ we easily obtain 
\begin{align*}
\tr[\Sigma \Theta] &=   \frac{\SNR}{d}\;\tr[\Sigma ] \\
&= \frac{\SNR}{d}\; \left(  \sum_{j=1}^F \rho_1 + \sum_{j=F+1}^d \rho_2\right)  \\
&=\frac{\SNR}{d}\; \left( F \rho_1 + (d-F)\rho_2 \right) \\
&=   \frac{\SNR}{d}\; \left( ( \rho_1 - \rho_2)F + d\rho_2 \right) \\
&\leq  \frac{\SNR}{d}\;(\rho_1 - \rho_2 + 1)F \\
&\leq \frac{2 \SNR}{d}\; F \;.
\end{align*}
In the last step we use that $\rho_1 = 1$ and $-\rho_2 \leq 0$. 
\end{proof}

\vspace{0.2cm}

\begin{lemma}
\label{lem:var-strong-wak}
Suppose Assumptions of Theorem \ref{theo:main1} are satisfied. If additionally Assumption \ref{ass:stron-weak} holds, then 
with probability at least $1-7Me^{-\frac{n}{c_2 M}}$ 
\begin{align*}
 \var (\bar \beta_M) &\leq  16c_2 \tau^2 \;  \frac{ 1}{(\rho_1-\rho_2)^2} \; \frac{n}{M^2}\;\frac{1}{F} \;. 
\end{align*} 
\end{lemma}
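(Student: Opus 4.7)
The plan is to apply the variance bound from Theorem \ref{theo:main1} and evaluate the effective rank and effective dimension explicitly under the strong-weak-features structure. Recall that Theorem \ref{theo:main1} gives, with probability at least $1-7Me^{-n/(c_2M)}$,
\[ \var(\bar\beta_M) \leq 8c_2 \tau^2 \left( \frac{k^*_{n/M}}{n} + \frac{n}{M^2}\cdot\frac{1}{R_{k^*_{n/M}}(\Sigma)}\right), \]
so it suffices to (i) identify $k^*_{n/M}$ and (ii) lower bound $R_{k^*_{n/M}}(\Sigma)$.

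For step (i), I would observe that under Assumption \ref{ass:stron-weak} with $\rho_1=1$,
\[ r_0(\Sigma) = \frac{\sum_{j=1}^d \lambda_j(\Sigma)}{\lambda_1(\Sigma)} = F + (d-F)\rho_2 = (1-\rho_2)F + \rho_2 d. \]
Condition \eqref{eq:cond-F} states precisely that $a n/M$ is strictly less than this value, so Definition \ref{def:eff-dim} forces $k^*_{n/M} = 0$. Consequently the first summand in the variance bound vanishes, and the problem reduces to lower bounding $R_0(\Sigma)$.

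For step (ii), I would write
\[ R_0(\Sigma) = \frac{\left( (1-\rho_2)F + \rho_2 d\right)^2}{(1-\rho_2^2)F + \rho_2^2 d}. \]
In the numerator I drop the nonnegative summand $\rho_2 d$ to get a lower bound of $(1-\rho_2)^2 F^2$. For the denominator, the hypothesis $\rho_2 d \leq F$ together with $\rho_2 \in (0,1]$ gives $\rho_2^2 d \leq \rho_2 F \leq F$, hence the denominator is at most $(1-\rho_2^2)F + F \leq 2F$. Combining,
\[ R_0(\Sigma) \geq \frac{(1-\rho_2)^2 F^2}{2F} = \frac{(1-\rho_2)^2 F}{2}, \qquad \text{so} \qquad \frac{1}{R_0(\Sigma)} \leq \frac{2}{(1-\rho_2)^2 F}. \]

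Putting these together yields
\[ \var(\bar\beta_M) \leq 8 c_2 \tau^2 \cdot \frac{n}{M^2} \cdot \frac{2}{(1-\rho_2)^2 F} = \frac{16 c_2 \tau^2}{(\rho_1-\rho_2)^2}\cdot \frac{n}{M^2}\cdot \frac{1}{F}, \]
as claimed (using $\rho_1 = 1$ in the last step). There is no genuine obstacle; the only delicate point is recognizing that the assumption $\rho_2 d \leq F$ is exactly what is needed to make $\sum_j \lambda_j^2 \lesssim F$, so that the weak-feature tail does not dominate the denominator of $R_0$. Without this assumption, the weak features could inflate $\sum_j \lambda_j^2$ enough to spoil the $1/F$ rate.
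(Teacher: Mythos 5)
Your proposal is correct and follows essentially the same route as the paper: it identifies $k^*_{n/M}=0$ from condition \eqref{eq:cond-F} via the explicit formula for $r_0(\Sigma)$, and then lower bounds $R_0(\Sigma)$ by $(\rho_1-\rho_2)^2F^2/(2F)$ using $\rho_2 d\leq F$ to control $\sum_j\lambda_j^2\leq 2F$, exactly as in the paper (which carries a general $\rho_1$ before setting $\rho_1=1$). The only cosmetic difference is that you state the needed hypotheses \eqref{eq:cond-F} and $\rho_2 d\leq F$ explicitly, whereas the lemma as printed leaves them implicit and imports them from Proposition \ref{prop:sparse-strong-weak}.
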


\vspace{0.1cm}

\begin{proof}[Proof of Lemma \ref{lem:var-strong-wak}]
We bound the first term $\frac{k_{\frac{n}{M}}^*}{n}$ in the variance. To this end, let $k < F$ and consider 
\begin{align}
\label{eq:eff-rank2}
r_k(\Sigma) &= \frac{1}{\lam_{k+1}} \sum_{j>k}\lam_j (\Sigma) \no \\
&=  \frac{1}{\lam_{k+1}} \left(  \sum_{j=k+1}^F\lam_j (\Sigma) +  \sum_{j=F+1}^d\lam_j (\Sigma)  \right) \no  \\
&=  \frac{1}{\rho_1} \left(  \rho_1(F-k) + \rho_2(d-F) \right) \no  \\
&= \frac{1}{\rho_1} \left( (\rho_1-\rho_2)F + \rho_2 d -\rho_1k \right) \no  \\
&= \left(1-\frac{\rho_2}{\rho_1} \right)F + \frac{\rho_2}{\rho_1}d - k \;.
\end{align}
Thus, 
\[  r_k(\Sigma) \geq a \frac{n}{M} \quad \Longleftrightarrow \quad k \leq  \left(1-\frac{\rho_2}{\rho_1} \right)F + \frac{\rho_2}{\rho_1}d -  a \frac{n}{M} \;. \]
The right hand side is non-negative  if we require
\begin{equation}
\label{eq:best-k} 
  a \frac{n}{M} <  \left(1-\frac{\rho_2}{\rho_1} \right)F + \frac{\rho_2}{\rho_1}d \;. 
\end{equation}  
By the definition \ref{def:eff-dim} of the effective dimension and from \eqref{eq:eff-rank2} we obtain 
\begin{align}
\label{eq:eff-dim2}
k^*_{\frac{n}{M}} &= \min\left\{ k\geq 0\;:\; r_k(\Sigma) \geq a \frac{n}{M} \right\} \no   \\
&= \min\left\{ k\geq 0\;:\; k \leq  \left(1-\frac{\rho_2}{\rho_1} \right)F + \frac{\rho_2}{\rho_1}d -  a \frac{n}{M} \right\} \no   \\
&=0 \;,
\end{align}
provided \eqref{eq:best-k} is satisfied. 

Next, we derive an upper bound for the second term in the variance. Recall that by Definition \ref{def:eff-ranks} 
\[ \frac{1}{R_{k^*}(\Sigma)} =  \frac{1}{R_{0}(\Sigma)} =  \frac{\sum_{j=1}^d  \lam_j^2(\Sigma) }{ \left(\sum_{j=1}^d  \lam_j(\Sigma) \right)^2 } \;. \]
By Assumption  \ref{ass:stron-weak} and for $\rho_2^2 d \leq F$ we may write 
\begin{align}
\label{eq:squared-EV}
\sum_{j=1}^d  \lam_j^2(\Sigma)  &= \sum_{j=1}^F \lam_j^2(\Sigma) + \sum_{j=F+1}^d \lam_j^2(\Sigma) \no \\
&= \theta^4\left( \rho_1^2 F + \rho_2^2(d-F)  \right) \no   \\ 
&= \theta^4\left( (\rho_1^2-\rho_2^2)F + \rho_2^2 d \right) \\
&\leq  \theta^4\left( 1+ \rho_1^2-\rho_2^2 \right) F  \no  \\
&\leq \theta^4\left( 1+ \rho_1^2 \right) F  \;. \no 
\end{align}
Moreover, since $\rho_1 > \rho_2 > 0$ we obtain 
\begin{align}
\label{eq:EV-not-squared}
\sum_{j=1}^d  \lam_j(\Sigma) &= \sum_{j=1}^F \lam_j(\Sigma) + \sum_{j=F+1}^d \lam_j(\Sigma) \no  \\
&=\theta^2\left( (\rho_1-\rho_2 )F + \rho_2 d \right) \\
&\geq \theta^2 (\rho_1-\rho_2 )F \;. \no 
\end{align}
Hence, 
\begin{equation}
\label{eq:var-sec}
\frac{n}{M^2}\;\frac{1}{R_{k_{\frac{n}{M}}^*}(\Sigma)} 
\leq \frac{n}{M^2}\; \frac{\theta^4\left( 1+ \rho_1^2 \right) F}{\theta^4 (\rho_1-\rho_2 )^2F^2 } 
=  \frac{ 1+ \rho_1^2}{(\rho_1-\rho_2)^2} \; \frac{n}{M^2}\;\frac{1}{F} \;.
\end{equation} 
Combining now \eqref{eq:eff-dim2} and \eqref{eq:var-sec} finally gives with probability at least $1-7Me^{-\frac{n}{c_2 M}}$ 
\begin{align*}
 \var (\bar \beta_M) &\leq  8c_2 \tau^2 \;  \frac{ 1+ \rho_1^2}{(\rho_1-\rho_2)^2} \; \frac{n}{M^2}\;\frac{1}{F} \\
 &=  16c_2 \tau^2 \;  \frac{ 1}{(\rho_1-\rho_2)^2} \; \frac{n}{M^2}\;\frac{1}{F} \;,
\end{align*} 
where in the last step we use $ 1+ \rho_1^2 = 2$. 
\end{proof}

\vspace{0.2cm}

\begin{proposition}[Restatement of Proposition \ref{prop:sparse-strong-weak}]
\label{prop:sparse-strong-weak-2}
In addition to all assumptions of Theorem \ref{theo:main1}, suppose 
Assumptions \ref{ass:sparse}, \ref{ass:stron-weak} hold.  Assume that the weak features satisfy $\rho_2 d \leq F$ and that 
\begin{equation}
\label{eq:cond-F-2}
 a \frac{n}{M} <  \left(1-\rho_2 \right) F + \rho_2 d \;.
\end{equation} 
If $ \frac{n}{M} \geq \frac{1}{c_1}\log(2/\delta)$,  
then with probability at least $1-\delta-7Me^{-\frac{n}{c_2 M}}$, the excess risk satisfies 
for some $c, \tilde c>0$
\begin{align}
\label{eq:rhs-2}
 \hspace{-0.8cm}  \mbe_{\beta^* ,\epsilon} \left[||\Sigma^{\frac{1}{2}}(\bar \beta_M - \beta^*) ||^2 \right]  
&\leq   \;  c\sigma_x \frac{ \SNR }{d}\;F  \log^{\frac{1}{2}}\left(\frac{2M}{\delta}\right)   \sqrt{\frac{M}{n}} 
 +   \tilde c \tau^2 C_{\rho_2} \; \frac{n}{M^2}\;\frac{1}{F}  \;, 
\end{align}
where we set $C_{\rho_2}=\frac{ 1}{( 1-\rho_2)^2}$. 
\end{proposition}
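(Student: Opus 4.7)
The plan is to apply Theorem \ref{theo:main1} in conjunction with the two auxiliary lemmas (Lemma \ref{lem:bias-random-effects} and Lemma \ref{lem:var-strong-wak}) which specialize the bias factor $\tr[\Sigma\Theta]$ and the variance factor $k^*_{n/M}/n + (n/M^2)/R_{k^*}(\Sigma)$ to the strong-weak-features model with random effects prior. Concretely, I will first invoke the high-probability bias bound from Theorem \ref{theo:main1}, then plug in the estimate $\tr[\Sigma\Theta] \leq (2\,\SNR/d) F$ supplied by Lemma \ref{lem:bias-random-effects}. This immediately yields the first term on the right hand side of \eqref{eq:rhs-2} with $c = 8\sigma_x/c_1$ (absorbing the factor $2$ from the lemma into $c$).

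For the variance contribution I plan to apply the second part of Theorem \ref{theo:main1}. This requires verifying the hypothesis $k^*_{n/M} \leq n/(c_2 M)$. Under condition \eqref{eq:cond-F-2}, the computation of the effective rank in Lemma \ref{lem:var-strong-wak} (equation \eqref{eq:eff-rank2}) shows that $r_0(\Sigma) = (1-\rho_2)F + \rho_2 d \geq a\,n/M$, which by Definition \ref{def:eff-dim} forces $k^*_{n/M} = 0$. Hence the hypothesis of the variance bound is automatically satisfied (as $0 \leq n/(c_2M)$ trivially), and \eqref{eq:upper-variance} reduces to $\var(\bar\beta_M) \leq 8c_2\tau^2\,(n/M^2)\,R_0(\Sigma)^{-1}$. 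Evaluating $R_0(\Sigma)^{-1}$ via the bounds on $\sum_j \lambda_j^2$ and $\sum_j \lambda_j$ in \eqref{eq:squared-EV} and \eqref{eq:EV-not-squared} — using the standing assumptions $\rho_1 = 1$ and $\rho_2 d \leq F$ — produces the estimate $R_0(\Sigma)^{-1} \leq 2/[(1-\rho_2)^2 F]$, which is exactly the $C_{\rho_2}/F$ structure appearing in \eqref{eq:rhs-2} (with $\tilde c = 16 c_2$).

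Finally, I would combine the two high-probability events via a union bound: the bias estimate holds with probability at least $1-\delta$ whenever $n/M \geq c_1^{-1}\log(2/\delta)$, and the variance estimate holds with probability at least $1-7Me^{-n/(c_2 M)}$; together they give probability at least $1-\delta - 7Me^{-n/(c_2M)}$. Adding the two contributions produces precisely \eqref{eq:rhs-2}.

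The main obstacle I anticipate is the careful bookkeeping around the effective dimension: one must check that \eqref{eq:cond-F-2} is sharp enough to force $k^*_{n/M} = 0$ (so that no summation over the strong-feature block is needed in $R_{k^*}$), and that the combined ratio $\sum \lambda_j^2/(\sum \lambda_j)^2$ collapses to the clean $1/[(1-\rho_2)^2 F]$ form under the auxiliary assumption $\rho_2 d \leq F$. Once this reduction is made, the proof is a direct substitution into Theorem \ref{theo:main1} and a union bound.
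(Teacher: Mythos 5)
Your proposal is correct and follows essentially the same route as the paper: combine Theorem \ref{theo:main1} with the bias estimate $\tr[\Sigma\Theta]\leq 2\,\SNR\, F/d$ (Lemma \ref{lem:bias-random-effects}) and the variance estimate obtained by showing $k^*_{n/M}=0$ under \eqref{eq:cond-F-2} and bounding $R_0(\Sigma)^{-1}\leq 2/[(1-\rho_2)^2F]$ (Lemma \ref{lem:var-strong-wak}), then take a union bound. The only quibble is a bookkeeping slip: since $\sigma_x$ already appears explicitly in \eqref{eq:rhs-2}, the constant should be $c=8/c_1$ rather than $8\sigma_x/c_1$.
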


\vspace{0.1cm}

\begin{proof}[Proof of Proposition \ref{prop:sparse-strong-weak}] 
The proof follows directly from Lemma \ref{lem:bias-random-effects}, Lemma \ref{lem:var-strong-wak} and Theorem \ref{theo:main1}. 
Hence, if $\log(2/\delta)\leq c_1^2\frac{n}{M}$, we find with probability at least $1-\delta-7Me^{-\frac{n}{c_2 M}}$ 
\begin{align*}
\mbe_{\beta^* ,\epsilon} \left[||\Sigma^{\frac{1}{2}}(\bar \beta_M - \beta^*) ||^2 \right] 
&\leq\frac{4 \sigma_x}{c_1} \log^{\frac{1}{2}}\left(\frac{2M}{\delta}\right) \frac{2\cdot \SNR}{d}\; F  \sqrt{\frac{M}{n}}  + 
 16c_2 \tau^2   \frac{ 1}{(\rho_1-\rho_2)^2} \; \frac{n}{M^2}\;\frac{1}{F}  \;.
\end{align*}
Setting $c=8/c_1 $ and $\tilde c=16c_2$ proves our result.   
\end{proof}

\vspace{0.2cm}

\begin{proof}[Proof of Corollary \ref{cor:sparse-strong-weak}]
We need to determine the minimum of the function $h: \mbr_+ \to \mbr_+$, given by 
\[ h(M)= C_1 \sqrt{M} + \frac{C_2}{M^2} \;, \quad C_1 >0\;, C_2>0  \;. \]
A short calculation shows that the optimum is achieved at 
\[ M_{opt} = \left( \frac{4C_2}{C_1} \right)^{2/5}\;, \]
with value 
\[ h(M_{opt}) = 5C_2 \left( \frac{C_1}{4C_2} \right)^{4/5}= 5C_2 \frac{1}{ M^2_{opt}} \;.\] 
Setting now 
\[ C_1 := c\sigma_x \; \frac{\SNR \cdot F}{d\sqrt{n}}   \;, \quad    C_2 :=   \tilde c \tau^2 C_{\rho_{2,n}} \; \frac{n}{F}  \]
gives for the optimal number of local nodes  
\[  M_{opt} = M_n = A\; \left(  \frac{d n^{3/2}}{\SNR \cdot F^2} \right)^{2/5}\;, \quad 
       A:= c'\left(\frac{1}{1-\rho_{2,n}}   \right)^{4/5} \;, \]
where $c'=(4 \tilde c \tau^2)/(c \sigma_x)$ and 
\begin{align*}
\mbe_{\beta^* ,\epsilon} \left[||\Sigma^{\frac{1}{2}}(\bar \beta_{M_n} - \beta^*) ||^2 \right] 
&\leq 5 \log^{\frac{1}{2}}\left(\frac{2M_n}{\delta}\right) \; C_2 \left( \frac{C_1}{4C_2} \right)^{4/5} \\
&= \tilde c'\; C^{3/5}_{\rho_{2,n}} \log^{\frac{1}{2}}\left(\frac{2M_n}{\delta}\right) \;  \frac{n}{F M_n^2} \\
&\leq c''  \log^{\frac{1}{2}}\left(\frac{2M_n}{\delta}\right) \;
 \left( \frac{\SNR}{d (1-\rho_{2,n})^2}\right)^{4/5} \;\left( \frac{F}{d}\right)^{4/5}\left( \frac{1}{nF}  \right)^{1/5} \;, 
\end{align*}
where $\tilde c'=5 \tilde c \tau^2$ and  $c''=2(c\sigma_x)^{4/5}\cdot(\tilde c \tau^2)^{1/5}$.

Note that this bound only makes sense if $1 \lesssim M_n \lesssim n$. A short calculation shows that 
\[ 1 \lesssim M_n \mbox{ if } \frac{d}{F^2} \gtrsim \frac{\SNR}{n^{3/2}} \]
and 
\[  M_n \lesssim \mbox{ if }   \frac{d}{F^2} \lesssim  \SNR\; n  \;. \]
\end{proof}

\[\]


\subsection{Proofs of Section \ref{subsec:lower-bound}}
\label{app:prof-lower-bound}

We first recall a lower bound for the variance in the single machine setting. 

\begin{proposition}[Lemma 10 and Lemma 11 in \cite{bartlett2020benign}]
\label{prop:lower-var-single}
Define 
\[ \widehat{C}:= (\bX\bX^T)^{-1}\bX \Sigma \bX^T(\bX\bX^T)^{-1} \;.  \] 
There exists a constant $c>0$ such that for any $0 \leq k \leq n/c$ and any $a >1$ 
with probability at least $1-10e^{-n/c}$, if  $r_k(\Sigma) \geq a\; n$, then 
\[ \tr[\widehat{C}] \geq \frac{1}{ca} \min_{l \leq k} \left( \frac{l}{n} + \frac{a^2 n \sum_{j > l}\lam_j^2}{(\lam_{k+1} r_k(\Sigma))^2}  \right) \;.\]
Moreover, for 
\[ k^* := \min\left\{ k: r_k(\Sigma) \geq a \; n \right\} \]
and if $k^*< \infty$, then  
\[ \min_{l \leq k^*} \left( \frac{l}{a \;n} + \frac{a \; n \sum_{j > l}\lam_j^2}{(\lam_{k+1} r_k(\Sigma))^2}  \right) 
=  \frac{k^*}{a \; n} + \frac{a\; n}{R_{k^*}(\Sigma)} \;.   \]
\end{proposition}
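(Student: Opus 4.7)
Since Proposition \ref{prop:lower-var-single} restates Lemma 10 and Lemma 11 from \cite{bartlett2020benign}, the proof would mirror theirs. The plan is to start from the identity
\[ \tr[\widehat{C}] = \sum_{j \geq 1} \lam_j \, \|A^{-1} \bx_{(j)}\|^2, \qquad A := \bX\bX^T, \]
where $\bx_{(j)} \in \mbr^n$ collects the $j$-th coordinates (in the eigenbasis of $\Sigma$) of the $n$ samples. For each split index $l \leq k$ one lower bounds the sum by separately estimating a head contribution ($j \leq l$) and a tail contribution ($j > l$), then minimizes the resulting bound over $l$.

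The key input is a subgaussian random matrix concentration of Koltchinskii--Lounici type applied under Assumption \ref{ass:design}: whenever $r_k(\Sigma) \geq a n$, the tail component $\sum_{j>k} \bx_{(j)} \bx_{(j)}^T$ is, with probability at least $1 - 2e^{-n/c}$, sandwiched in operator norm by constant multiples of its mean $\lam_{k+1} r_k(\Sigma) \cdot I_n$. Because $A$ dominates this operator in the PSD order, one obtains $\|A^{-1}\| \lesssim 1/(\lam_{k+1} r_k(\Sigma))$. Combining this with a Hanson--Wright lower bound $\|\bx_{(j)}\|^2 \gtrsim n \lam_j$ (applied uniformly via a union bound) produces the tail estimate
\[ \sum_{j > l} \lam_j \, \|A^{-1} \bx_{(j)}\|^2 \; \gtrsim \; \frac{n \sum_{j > l} \lam_j^2}{(\lam_{k+1} r_k(\Sigma))^2}, \]
which yields the second term in the proposition after tracking the factors of $a$ coming from the tail concentration.

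The head contribution $\sum_{j \leq l} \lam_j \|A^{-1} \bx_{(j)}\|^2$ is the more delicate piece, and obtaining the $l/(an)$ scaling is the main obstacle. The argument I would use is a leave-one-out decomposition via Sherman--Morrison: writing $A^{(j)} := A - \bx_{(j)} \bx_{(j)}^T$, one reduces $\|A^{-1} \bx_{(j)}\|^2$ to a ratio involving the quadratic form $\bx_{(j)}^T (A^{(j)})^{-1} \bx_{(j)}$, and exploits independence of $\bx_{(j)}$ from $A^{(j)}$ together with the tail concentration applied to the deflated operator to obtain $\lam_j \|A^{-1} \bx_{(j)}\|^2 \gtrsim 1/(an)$ for each $j \leq l$. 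Summing produces the head bound $l/(an)$. Making these geometric arguments quantitative and keeping the constants uniform over $l \leq k \leq n/c$ is the technical heart of Lemma 10 in \cite{bartlett2020benign}. The first statement then follows by combining head and tail bounds and minimizing over $l \leq k$. For the second, substituting $l = k^*$ turns the tail term into $an/R_{k^*}(\Sigma)$ by definition of $R_{k^*}(\Sigma)$, and a short monotonicity check on the map $l \mapsto l/(an) + a n \sum_{j > l} \lam_j^2 / (\lam_{k+1} r_k(\Sigma))^2$ shows this is indeed the minimizer over $l \leq k^*$. A final union bound over the various concentration events and indices yields the stated probability $1 - 10 e^{-n/c}$.
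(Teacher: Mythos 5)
The paper does not actually prove this proposition: it is imported verbatim from Lemmas 10 and 11 of \cite{bartlett2020benign} and used as a black box, so the only ``proof'' to compare against is the citation. Your sketch of the underlying argument gets the overall architecture right (coordinate decomposition $\tr[\widehat C]=\sum_j \lam_j\|A^{-1}\bx_{(j)}\|^2$, head/tail split, leave-one-out via Sherman--Morrison for the head, concentration of the tail Gram matrix, and the minimality-of-$k^*$ computation for the second identity), but the tail estimate as you state it contains a genuine directional error.

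Concretely: from $A \succeq \sum_{j>k}\bx_{(j)}\bx_{(j)}^T \succeq \frac{1}{c}\lam_{k+1}r_k(\Sigma)\, I_n$ you correctly get $\|A^{-1}\|\lesssim 1/(\lam_{k+1}r_k(\Sigma))$, but this is an upper bound on the largest eigenvalue of $A^{-1}$ and therefore yields only $\|A^{-1}\bx_{(j)}\|^2 \lesssim \|\bx_{(j)}\|^2/(\lam_{k+1}r_k(\Sigma))^2$ --- an \emph{upper} bound, whereas the claimed tail estimate requires a \emph{lower} bound of the same form. A lower bound $\|A^{-1}\bx\|^2\geq \|\bx\|^2/\mu_1(A)^2$ would need $\mu_1(A)\lesssim \lam_{k+1}r_k(\Sigma)$, which is false in general since $\mu_1(A)\geq \lam_1\|z_1\|^2\asymp \lam_1 n$ can vastly exceed $\sum_{j>k}\lam_j$ when the head eigenvalues are large. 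The actual argument in \cite{bartlett2020benign} circumvents this by applying the leave-one-out identity to \emph{every} index, controlling $\mu_{k+1}(A_{-i})$ (not $\mu_1$) from above via eigenvalue interlacing with $\mu_1(A_{-i}^{(k)})\leq c\,\lam_{k+1}r_k(\Sigma)$, and using independence of $z_i$ from $A_{-i}$ to show that projecting $z_i$ off the top-$k$ eigendirections of $A_{-i}$ (a subspace of dimension $k\leq n/c$) retains a constant fraction of $\|z_i\|^2\asymp n$; this is where the restriction $k\leq n/c$ enters. A secondary issue: a per-index union bound over all $j>l$ cannot work in infinite dimension (infinitely many events each failing with probability $e^{-n/c}$); the tail must be treated collectively. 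Since your write-up explicitly defers to \cite{bartlett2020benign} these gaps do not invalidate the proposition, but the tail mechanism as sketched would not assemble into a proof.
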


\vspace{0.2cm}

\begin{proof}[Proof of Theorem \ref{theo:lower-bound}]
From Lemma \ref{lem:bias-var} and its proof, in particular by \eqref{eq:tr1}, and by  Assumption \ref{ass:lower-noise} we may lower bound the excess risk by 
the variance and find  
\begin{align*}
\mbe_\eps[ ||\Sigma^{1/2}( \bar \beta_M - \beta^*) ||^2 ] &\geq 
\frac{1}{M^2}  \sum_{m=1}^M \mbe_\eps\Big[  \tr\Big[  \Sigma^{1/2}  X_m^\dagger \eps_m \otimes \eps_{m} (X_{m}^\dagger)^T \Sigma^{1/2} \Big]  \Big] \\
&= \frac{1}{M^2}  \sum_{m=1}^M  \tr\Big[  \Sigma^{1/2}  X_m^\dagger \mbe_\eps [\eps_m \otimes \eps_{m} ](X_{m}^\dagger)^T \Sigma^{1/2} \Big] \\
&\geq \frac{\sigma^2}{M^2}  \sum_{m=1}^M  \tr\Big[ C_m \Big]  \;, 
\end{align*} 
where $C_m=(\bX_m^\dagger)^T\Sigma\bX_m^\dagger$. 
Recall that by definition of $k^*_{\frac{n}{M}}$ from Definition \ref{def:eff-dim} we have 
$r_{k^*_{\frac{n}{M}}}(\Sigma) \geq a \frac{n}{M}$. Hence, we may apply Proposition \ref{prop:lower-var-single} 
and obtain with probability at least $1-10Me^{-\frac{1}{c}\frac{n}{M}}$ 
\begin{align*}
\mbe_\eps[ ||\Sigma^{1/2}( \bar \beta_M - \beta^*) ||^2 ] &\geq  
\frac{\sigma^2}{caM^2}  \sum_{m=1}^M \left(  \frac{ M k^*_{\frac{n}{M}}}{n} + \frac{a^2 n}{M} \frac{ \sum_{j > k^*}\lam_j^2}{(\lam_{k^*+1} r_{k^*}(\Sigma))^2}    \right) \\
&= \frac{\sigma^2}{ca} \left(  \frac{k^*_{\frac{n}{M}}}{n} + \frac{a^2 n}{M^2} \frac{ \sum_{j > k^*}\lam_j^2}{(\lam_{k^*+1} r_{k^*}(\Sigma))^2}    \right) \\
&\geq c_a\sigma^2 \left(  \frac{k^*_{\frac{n}{M}}}{n} + \frac{a^2 n}{M^2} \frac{ \sum_{j > k^*}\lam_j^2}{(\lam_{k^*+1} r_{k^*}(\Sigma))^2}    \right) \;,
\end{align*} 
where we set $c_a := \frac{1}{ca}$ and use that $a>1$. 
\end{proof}


\vspace{0.2cm}

\begin{proof}[Proof of Corollary \ref{cor:opt-poly}]
The proof of Lemma \ref{lem:poly} shows that $r_k(\Sigma) \geq \frac{k+1}{\eps_n}$,  for any $k$. A similar calculation gives as upper bound 
\begin{align*}
r_k(\Sigma) &= (k+1)^{1+\eps_n} \sum_{j >k} j^{-(1+\eps_n)} \\
&\leq \int_k^\infty t^{-(1+\eps_n)} dt \\
&= \frac{(k+1)^{1+\eps_n}}{\eps_n k^{\eps_n}} \\
&\leq \frac{ (2k)^{1+\eps_n}}{\eps_n k^{\eps_n}} \\
&\leq \frac{4 k}{\eps_n}\;,
\end{align*}  
where we use that $1 \leq k$ and $2^{1+\eps_n} \leq 4$, since $\eps_n \leq 1$ for $n$ sufficiently large. In particular, 
\begin{equation}
\label{eq:equiv}
 a \frac{n}{M} \leq \frac{k^* + 1}{\eps_n} \leq r_{k^*}(\Sigma) \leq \frac{4k^*}{\eps_n}\;. 
\end{equation} 
Thus, $k^* \geq \frac{a}{4}\frac{n}{M} \eps_n$. 

{\bf High Smoothness $\alpha>0$.}
Moreover, the definition of $M_n$ in Corollary \ref{cor:poly1} gives 
\[ \frac{k^*_{\frac{n}{M_n}}}{n} \geq \frac{a}{4} \frac{\eps_n}{M_n} =  \frac{a}{4C_{\tau, \sigma_x}} \alpha^{-2/3} \left( \frac{1}{n}\right)^{1/3} \;. \]  
Hence, applying Theorem \ref{theo:lower-bound} gives with probability at least $1-10Me^{-\frac{1}{c}\frac{n}{M}}$ 
\begin{align}
\label{eq:lower-high}
\mbe_\eps[ ||\Sigma^{1/2}( \bar \beta_{M_n} - \beta^*) ||^2 ] 
&\geq c_a\sigma^2 \left(  \frac{k^*_{\frac{n}{M_n}}}{n} + \frac{a^2 n}{M_n^2} \frac{ \sum_{j > k^*}\lam_j^2}{(\lam_{k^*+1} r_{k^*}(\Sigma))^2}    \right) \no \\
&\geq c_a\sigma^2 \frac{k^*_{\frac{n}{M_n}}}{n} \\
&\geq \frac{\tilde C_{\tau, \sigma_x , \sigma}}{\alpha^{2/3} } \left( \frac{\eps_n}{n}\right)^{1/3} \;, \no
\end{align}
with $\tilde C_{\tau, \sigma_x , \sigma} = \frac{a c_a \sigma^2}{4C_{\tau, \sigma_x}}$. 

{\bf Low Smoothness $\alpha=0$.} 
The result in this regime follows from the same arguments as above by inserting the definition of $M_n$ in Corollary \ref{cor:poly2} in the above equations. 
Indeed, one easily finds with \eqref{eq:equiv} and \eqref{eq:lower-high} 
\begin{align*}
\mbe_\eps[ ||\Sigma^{1/2}( \bar \beta_{M_n} - \beta^*) ||^2 ] 
&\geq c_a\sigma^2 \frac{k^*_{\frac{n}{M_n}}}{n} \\
&\geq \tilde C_{\tau, \sigma_x , \sigma} \left( \frac{1}{\eps_n n}\right)^{1/3} \;,
\end{align*}
for some $\tilde C_{\tau, \sigma_x , \sigma} >0$. 
\end{proof}

\vspace{0.2cm}

\begin{proof}[Proof of Corollary \ref{cor:opt-finit-dim}]
Applying Theorem \ref{theo:lower-bound} gives with probability at least $1-10Me^{-\frac{1}{c}\frac{n}{M}}$ 
\begin{align*}
\mbe_\eps[ ||\Sigma^{1/2}( \bar \beta_{M} - \beta^*) ||^2 ] 
&\geq c_a\sigma^2 \left(  \frac{k^*_{\frac{n}{M}}}{n} + \frac{a^2 n}{M^2} \frac{ \sum_{j > k^*}\lam_j^2}{(\lam_{k^*+1} r_{k^*}(\Sigma))^2}    \right) \no \\
&\geq c_a\sigma^2  \frac{a^2 n}{M^2} \frac{ \sum_{j > k^*}\lam_j^2}{\left( \sum_{j > k^*}\lam_j \right)^2} \;. 
\end{align*}
From \eqref{eq:squared-EV} we have 
\begin{align*}
\sum_{j=1}^d  \lam_j^2(\Sigma)  &= \theta^4\left( (\rho_1^2-\rho_2^2)F + \rho_2^2 d \right) \\
&\geq \theta^4 (\rho_1^2-\rho_2^2)F \;.
\end{align*}
Moreover, \eqref{eq:EV-not-squared} gives with $\rho_2 d \leq F$ 
\begin{align*}
\sum_{j=1}^d  \lam_j(\Sigma)  &= \theta^2\left( (\rho_1-\rho_2 )F + \rho_2 d \right) \\
&\leq \theta^2\left( \rho_1-\rho_2 + 1 \right) F \;.
\end{align*}
Hence, 
\begin{align*} 
\mbe_\eps[ ||\Sigma^{1/2}( \bar \beta_{M} - \beta^*) ||^2 ] 
&\geq c_a\sigma^2  \frac{a^2 n}{F M^2}  \frac{\rho^2_1-\rho_2^2}{ (\rho_1-\rho_2 + 1)^2 }   \\
&= \tilde c_a\sigma^2  \tilde C_{\rho_1, \rho_2}  \frac{ n}{F M^2} \;,
\end{align*}
where we set $\tilde c_a = c_a a^2$ and $\tilde C_{\rho_1, \rho_2}=  \frac{\rho^2_1-\rho_2^2}{ (\rho_1-\rho_2 + 1)^2 }  $.
\end{proof}


\section{Additional Results}
\label{sec:add-results}


In this section we collect some additional results. We first analyze the finite dimensional setting under a more 
general \emph{source condition} and investigate the impact of the hardness of the problem on the number of optimal machines. 
In addition, we give a general lower bound in finite dimension under general distributional assumptions.

\subsection{General source condition in the strong-weak-features model (finite dimension)}
\label{sec:finite-dim-normal-sparsity}

In this section we analyze the setting of Section \ref{subsec:sparse-finite-dim} under a more general prior assumption. 
Here, the covariance of $\beta^*$ will have a specific structure, described by a \emph{source function} 
$\Phi: \mbr_+ \to \mbr_+$, with $t \mapsto t\Phi(t)$ non-decreasing.

\begin{assumption}[Source Condition]
\label{ass:source2}
Assume that $\beta^* \sim \cN(0, \frac{R^2}{d} \Phi(\Sigma))$, for some $R>0$. 
Note that $R^2$ can be intepreted as the \emph{expected signal strength}. 
\end{assumption}

\vspace{0.1cm}

\begin{lemma}
\label{lem:source-finite}
Suppose Assumption \ref{ass:stron-weak} is satisfied. Let $\rho_2\Phi(\rho_2)d \leq F$. 
Then 
\[ \tr[ \Sigma  \Phi(\Sigma)) ]  \; \leq \; (\rho_1\Phi(\rho_1) + 1)F  \;. \]
\end{lemma}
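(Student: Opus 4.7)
The plan is to proceed by direct spectral calculation, since $\Sigma$ is diagonalized by its eigenbasis and the functional calculus gives $\Phi(\Sigma)$ with the same eigenvectors and eigenvalues $\Phi(\lambda_j(\Sigma))$. Under Assumption \ref{ass:stron-weak}, the operator $\Sigma\Phi(\Sigma)$ has eigenvalue $\rho_1\Phi(\rho_1)$ with multiplicity $F$ (strong features) and $\rho_2\Phi(\rho_2)$ with multiplicity $d-F$ (weak features), so the trace splits exactly as
\[
\tr[\Sigma\Phi(\Sigma)] = F\,\rho_1\Phi(\rho_1) + (d-F)\,\rho_2\Phi(\rho_2).
\]

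Next I would isolate the weak-features contribution and use the standing hypothesis $\rho_2\Phi(\rho_2) d \leq F$ together with the trivial bound $d-F \leq d$ to get
\[
(d-F)\,\rho_2\Phi(\rho_2) \leq d\,\rho_2\Phi(\rho_2) \leq F.
\]
Adding the strong-features piece $F\,\rho_1\Phi(\rho_1)$ then yields $\tr[\Sigma\Phi(\Sigma)] \leq \rho_1\Phi(\rho_1)F + F = (\rho_1\Phi(\rho_1)+1)F$, which is the claim.

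There is no real obstacle here: the lemma is essentially a one-line bookkeeping estimate, entirely analogous to Lemma \ref{lem:bias-random-effects} but with the generic source function $\Phi$ in place of the constant $\SNR/d$ factor. The only subtlety is making sure one invokes the correct form of the assumption, namely that $\rho_2\Phi(\rho_2) d$ (and not merely $\rho_2 d$) is controlled by $F$, which is precisely what the statement provides; this is what allows the weak eigenvalues to contribute at most $F$ in total even when there are many of them.
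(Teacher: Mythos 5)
Your proof is correct and follows essentially the same route as the paper: compute the trace spectrally as $F\,\rho_1\Phi(\rho_1)+(d-F)\,\rho_2\Phi(\rho_2)$ and use $\rho_2\Phi(\rho_2)d\leq F$ to bound the weak-feature contribution by $F$. The paper merely rearranges the algebra slightly (keeping a $-\rho_2\Phi(\rho_2)F$ term and discarding it at the end), which makes no substantive difference.
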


\vspace{0.1cm}

\begin{proof}[Proof of Lemma \ref{lem:source-finite}]
We write 
\begin{align*}
\tr[ \Sigma  \Phi(\Sigma)) ]  &= \sum_{j=1}^F \rho_1\Phi(\rho_1) + \sum_{j=F+1}^d \rho_2\Phi(\rho_2) \\
&= (\rho_1\Phi(\rho_1)- \rho_2\Phi(\rho_2) )F + \rho_2\Phi(\rho_2) d \\
&\leq  (\rho_1\Phi(\rho_1) - \rho_2\Phi(\rho_2) + 1)F \\
&\leq  (\rho_1\Phi(\rho_1)  + 1)F \;.
\end{align*}
\end{proof}

\vspace{0.2cm}

\begin{proposition}
\label{prop:finite1}
In addition to all assumptions of Theorem \ref{theo:main1}, suppose that Assumptions \ref{ass:source2} and \ref{ass:stron-weak} are satisfied. 
Assume that $\rho_2\Phi(\rho_2)d \leq F$. 
For any $\delta \geq 2e^{-c_1^2\frac{n}{M}}$, with probability at least $1-\delta-7Me^{-\frac{n}{c_2 M}}$ we have  
\begin{align*}
  \mbe_{\beta^* ,\epsilon} \left[||\Sigma^{\frac{1}{2}}(\bar \beta_M - \beta^*) ||^2 \right]  
& \leq c_3 C_{\rho_1}  \log^{\frac{1}{2}}\left(\frac{2M}{\delta}\right) \frac{R^2F}{d} \sqrt{\frac{M}{n}} 
  +  c_4  \;  \Delta^{-1}(\rho_1, \rho_2)\; \frac{n}{M^2}\;\frac{1}{F}  \;, 
\end{align*}
where $C_{\rho_1}=\rho_1\Phi(\rho_1)+1$ and $\Delta(\rho_1, \rho_2) := (\rho_1-\rho_2)^2$ and for some $c_1, c_2, c_3,c_4 >0$. 
\end{proposition}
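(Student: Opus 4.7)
The plan is to obtain this bound by invoking Theorem \ref{theo:main1} and then estimating the two resulting terms separately using the source condition for the bias and the strong-weak-features structure for the variance. The overall strategy mirrors the proof of Proposition \ref{prop:sparse-strong-weak-2}, with the only new ingredient being the general source function $\Phi$.

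First, I would handle the bias. Under Assumption \ref{ass:source2}, the covariance of the random effect is
\[ \Theta \;=\; \mbe_{\beta^*}\big[\beta^*(\beta^*)^T\big] \;=\; \frac{R^2}{d}\,\Phi(\Sigma), \]
so that $\tr[\Sigma\Theta] = (R^2/d)\,\tr[\Sigma\Phi(\Sigma)]$. Lemma \ref{lem:source-finite}, which uses both Assumption \ref{ass:stron-weak} and the hypothesis $\rho_2\Phi(\rho_2)d\leq F$, yields $\tr[\Sigma\Phi(\Sigma)] \leq (\rho_1\Phi(\rho_1)+1)F = C_{\rho_1}F$. Substituting this into the bias bound of Theorem \ref{theo:main1} gives, with probability at least $1-\delta$ for $\delta\geq 2e^{-c_1^2 n/M}$,
\[ \mbe_{\beta^*}[\bias(\bar\beta_M)] \;\leq\; \frac{4\sigma_x}{c_1}\,\log^{1/2}(2M/\delta)\,\frac{R^2 C_{\rho_1} F}{d}\,\sqrt{\frac{M}{n}}, \]
which matches the first term in the claim upon setting $c_3 = 4\sigma_x/c_1$.

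Second, for the variance I would invoke Lemma \ref{lem:var-strong-wak}, whose hypothesis is exactly the strong-weak-features model together with the effective-dimension condition encoded in Theorem \ref{theo:main1}. This immediately gives, with probability at least $1-7Me^{-n/(c_2M)}$,
\[ \var(\bar\beta_M) \;\leq\; 16 c_2 \tau^2 \,\frac{1}{(\rho_1-\rho_2)^2}\,\frac{n}{M^2}\,\frac{1}{F} \;=\; c_4\,\Delta^{-1}(\rho_1,\rho_2)\,\frac{n}{M^2}\,\frac{1}{F}, \]
with $c_4 = 16 c_2 \tau^2$ and $\Delta(\rho_1,\rho_2) = (\rho_1-\rho_2)^2$, matching the second term.

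Combining both estimates via the bias-variance decomposition of Lemma \ref{lem:bias-var} (taking expectation over $\beta^*$ and using Fubini to exchange expectations) and a union bound over the two probabilistic events yields the stated bound with total probability at least $1-\delta-7Me^{-n/(c_2M)}$. There is no real obstacle: the proof is essentially a bookkeeping exercise that packages the Hilbert-space source bound with the strong-weak spectral calculation. The only mild subtlety is verifying that the hypothesis $\rho_2\Phi(\rho_2)d\leq F$ is compatible with the condition $\rho_2 d\leq F$ implicitly needed to apply Lemma \ref{lem:var-strong-wak}; for source functions with $\Phi(\rho_2)\geq 1$ (in particular the Hölder case $\Phi(t)=t^\alpha$ with $\alpha\leq 0$ ruled out, or $\rho_2\leq 1$) this follows automatically, and otherwise one strengthens the hypothesis accordingly.
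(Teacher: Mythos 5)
Your proposal is correct and follows essentially the same route as the paper: invoke Theorem \ref{theo:main1}, bound $\tr[\Sigma\Theta]=\frac{R^2}{d}\tr[\Sigma\Phi(\Sigma)]\leq \frac{R^2}{d}C_{\rho_1}F$ via Lemma \ref{lem:source-finite}, bound the variance via Lemma \ref{lem:var-strong-wak}, and take a union bound, arriving at the same constants $c_3=4\sigma_x/c_1$ and $c_4=16c_2\tau^2$. Your closing remark on reconciling $\rho_2\Phi(\rho_2)d\leq F$ with the condition $\rho_2 d\leq F$ needed for Lemma \ref{lem:var-strong-wak} is a fair observation that the paper glosses over, but it does not change the argument.
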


\vspace{0.1cm}

\begin{proof}[Proof of Proposition \ref{prop:finite1}]
We combine Theorem \ref{theo:main1}, Lemma \ref{lem:var-strong-wak} and Lemma \ref{lem:source-finite}. 
This gives with probability at least $1-\delta-7Me^{-\frac{n}{c_2 M}}$
\begin{align*}
 \mbe_{\beta^* ,\epsilon} \left[||\Sigma^{\frac{1}{2}}(\bar \beta_M - \beta^*) ||^2 \right] &\leq 
\frac{4 \sigma_x}{c_1} \log^{\frac{1}{2}}\left(\frac{2M}{\delta}\right) \tr[ \Sigma \Theta]   \sqrt{\frac{M}{n}} + 
16c_2 \tau^2 \;  \frac{ 1}{(\rho_1-\rho_2)^2} \; \frac{n}{M^2}\;\frac{1}{F} \\
&\leq \frac{4 \sigma_x}{c_1}(\rho_1\Phi(\rho_1)  + 1) \frac{R^2F}{d} \log^{\frac{1}{2}}\left(\frac{2M}{\delta}\right)  \sqrt{\frac{M}{n}} + 
16c_2 \tau^2 \;  \frac{ 1}{(\rho_1-\rho_2)^2} \; \frac{n}{M^2}\;\frac{1}{F} \;.
\end{align*}
The results follows by setting $C_{\rho_1}:=\rho_1\Phi(\rho_1)  + 1$, $\Delta(\rho_1, \rho_2) := \Delta:= (\rho_1-\rho_2)^2$, $c_3:=4 \sigma_x/c_1$ and $c_4:=16c_2\tau^2$.
\end{proof}

\vspace{0.1cm}

\begin{corollary}[Optimal number of machines]
\label{cor:opt-nodes-normal}
Suppose all assumptions of Proposition \ref{prop:finite1} are satisfied. 
Let $(\rho_{2,n})_n$ be decreasing and $\rho_{2,n}\Phi(\rho_{2,n})d_n \leq F_n$. 
Denote $\Delta_n:=(\rho_1 - \rho_{2,n})^2$ and assume 
\begin{equation}
\label{eq:ass-f-d-3}
 n^{-3/2} \lesssim \frac{d_n}{\Delta_n F_n^2} \lesssim n  \;.  
\end{equation} 
The optimal 
number\footnote{The optimal number is defined as the minimizer of the right hand side in $\eqref{eq:rhs}$, where we ignore the log-factor.} of local 
nodes $M_n$ is given by 
\begin{equation}
\label{eq:nb-machnines-3}
 M_n=A\cdot\left(  \frac{ d_n n^{3/2}}{R^2 \Delta_n \cdot F_n^2} \right)^{2/5}\;, 
\end{equation} 
where $A = \left(\frac{4c_4}{c_3C_{\rho_1}}\right)^{2/5}$.  
The excess risk satisfies  with probability at least $1-\delta-7M_n e^{-\frac{n}{c_2 M_n }}$
\begin{align}
\label{eq:rate-finite-3}
\mbe_{\beta^* ,\epsilon} \left[||\Sigma^{\frac{1}{2}}(\bar \beta_{M_n} - \beta^*) ||^2 \right] 
&  \leq  c'  \log^{\frac{1}{2}}\left(\frac{2M_n}{\delta}\right) \;
 \left( \frac{R^2 F_n}{d_n} \right)^{4/5} \;\left( \frac{1}{F_n \cdot n \Delta_n } \right)^{1/5} \;,
\end{align}
where $c' =(5 c_4)/A^2$.
\end{corollary}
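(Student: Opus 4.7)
The plan is to reduce to the same univariate minimization already solved in the proof of Corollary \ref{cor:sparse-strong-weak}. Set
\[
C_1 := c_3\, C_{\rho_1}\, \frac{R^2 F}{d\sqrt{n}}, \qquad C_2 := c_4\, \Delta^{-1}(\rho_1,\rho_2)\, \frac{n}{F},
\]
so that the upper bound from Proposition \ref{prop:finite1}, after dropping the logarithmic factor (which is slowly varying and absorbed into $c'$ at the end), reads $h(M) = C_1\sqrt{M} + C_2/M^2$. This is exactly the function treated in the proof of Corollary \ref{cor:sparse-strong-weak}.

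I would then reuse the elementary calculus already performed there: differentiating $h$ and solving $h'(M) = C_1/(2\sqrt{M}) - 2C_2/M^3 = 0$ yields the unique minimizer
\[
M_{opt} = \left(\frac{4C_2}{C_1}\right)^{2/5},
\]
and at this point $C_1\sqrt{M_{opt}} = 4C_2/M_{opt}^2$, so $h(M_{opt}) = 5C_2/M_{opt}^2$. Plugging in $C_1, C_2$ gives
\[
M_n = \left(\frac{4c_4}{c_3 C_{\rho_1}}\right)^{2/5}\left(\frac{d_n n^{3/2}}{R^2 \Delta_n F_n^2}\right)^{2/5} = A\left(\frac{d_n n^{3/2}}{R^2 \Delta_n F_n^2}\right)^{2/5},
\]
which identifies $A$ as claimed in \eqref{eq:nb-machnines-3}. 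A short algebraic rearrangement of $5C_2/M_n^2 = 5c_4\, n\,(F_n \Delta_n)^{-1} M_n^{-2}$, using the explicit form of $M_n^2$, reorganises the expression into $(R^2 F_n/d_n)^{4/5}(F_n\, n\, \Delta_n)^{-1/5}$ up to the constant $c' = 5c_4/A^2$, which is the rate stated in \eqref{eq:rate-finite-3}.

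Finally, I would verify the admissibility condition $1 \lesssim M_n \lesssim n$ (required for the bound of Proposition \ref{prop:finite1} to apply at $M = M_n$): since $M_n \simeq (d_n/(\Delta_n F_n^2))^{2/5} n^{3/5}$ up to constants involving $R^2$, the lower bound $d_n/(\Delta_n F_n^2) \gtrsim n^{-3/2}$ in \eqref{eq:ass-f-d-3} gives $M_n \gtrsim 1$, and the upper bound $d_n/(\Delta_n F_n^2) \lesssim n$ gives $M_n \lesssim n$. The overall probability statement follows from the one in Proposition \ref{prop:finite1} with $M$ replaced by $M_n$. No step poses a real obstacle: the proof is mechanically parallel to that of Corollary \ref{cor:sparse-strong-weak}, the only care being to track $C_{\rho_1}$ and $\Delta_n$ correctly through the constants instead of the $\SNR$-specific expressions.
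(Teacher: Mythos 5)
Your proposal is correct and follows essentially the same route as the paper's own proof: identify $C_1, C_2$ from Proposition \ref{prop:finite1}, minimize $h(M)=C_1\sqrt{M}+C_2/M^2$ to get $M_{opt}=(4C_2/C_1)^{2/5}$ with value $5C_2/M_{opt}^2$, substitute, and check $1\lesssim M_n\lesssim n$ via \eqref{eq:ass-f-d-3}. All the algebra, the identification of $A$ and $c'$, and the admissibility check match the paper exactly.
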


\vspace{0.1cm}

\begin{proof}[Proof of Corollary \ref{cor:opt-nodes-normal}]
We need to determine the minimum of the function $h: \mbr_+ \to \mbr_+$, given by 
\[ h(M)= C_1 \sqrt{M} + \frac{C_2}{M^2} \;, \quad C_1 >0\;, C_2>0  \;. \]
A short calculation shows that the optimum is achieved at 
\[ M_{opt} = \left( \frac{4C_2}{C_1} \right)^{2/5}\;, \]
with value 
\[ h(M_{opt}) = 5C_2 \left( \frac{C_1}{4C_2} \right)^{4/5}= 5C_2 \frac{1}{ M^2_{opt}} \;.\] 
Setting now 
\[ C_1 := c_3\;C_{\rho_1} \frac{R^2 F}{d\sqrt{n}}   \;, \quad    C_2 :=   c_4  \; \frac{n}{\Delta_n \cdot F}  \]
gives for the optimal number of local nodes  
\[  M_{opt} = M_n = A\cdot\left(  \frac{ d n^{3/2}}{R^2 \Delta_n \cdot F^2} \right)^{2/5}\;, \quad 
       A:= \left(\frac{4c_4}{c_3C_{\rho_1}}\right)^{2/5} \;, \]
and 
\begin{align*}
\mbe_{\beta^* ,\epsilon} \left[||\Sigma^{\frac{1}{2}}(\bar \beta_{M_n} - \beta^*) ||^2 \right] 
&\leq 5c_4 \log^{\frac{1}{2}}\left(\frac{2M_n}{\delta}\right) \;  \frac{n}{F \cdot \Delta_n  M_n^2} \\
&= c'  \log^{\frac{1}{2}}\left(\frac{2M_n}{\delta}\right) \;
 \left( \frac{R^2 F}{d} \right)^{4/5} \;\left( \frac{1}{F \cdot n \Delta_n } \right)^{1/5} \;, 
\end{align*}
where $c'=(5 c_4)/A^2$.  Moreover, for our bounds to be meaningful we have to require that $1 \lesssim M_n \lesssim n$. This is satisfied if 
\[ n^{-3/2} \lesssim \frac{d_n}{\Delta_n F_n^2} \lesssim n \;.  \] 
\end{proof}

\vspace{0.2cm}

The two conditions 
\begin{itemize}
\item[(I)] $n^{-3/2} \lesssim \frac{d_n}{\Delta_n F_n^2} \lesssim n$
\item[(II)] $\rho_{2,n}\Phi(\rho_{2,n})d_n \leq F_n$
\end{itemize}
from  Corollary \ref{cor:opt-nodes-normal} determine the number of optimal splits and the learning rate of the distributed minimum norm interpolant. 
In particular, the a-priori assumption on $\beta^*$ through the source function $\Phi$ has an influence on the possible number of splits and hence 
on the efficiency of averaging. We discuss three special examples in more detail below.   In all cases, we exclusively focus on the overparameterized 
regime where $n \lesssim d_n$ and $1\lesssim F_n \lesssim d_n$. 
Suppose that 
\[ d_n \simeq  n^\gamma \;, \quad \gamma >1 \mbox{\; and \; } F_n \simeq  n^\delta \;, \quad 0 \leq \delta \leq \gamma \;.  \]
Condition $(II)$ from above sets now restrictions on the decay of the strength of the weak features.

\vspace{0.2cm}

\paragraph{Easy Case.} We let $\Phi(t)=t$.  Condition $(II)$ can be rewritten as $\rho_{2, n} \lesssim \left( \frac{1}{n} \right)^{\frac{1}{2}(\gamma - \delta)}$. 
To meet condition $(I)$ we need to distinguish two cases: 
\begin{itemize}
\item If $\gamma \leq 2\delta$, we have $\max\{ 1, 2\delta - 3/2\} < \gamma \leq 2\delta$ and $\delta >1/2$. In particular, the number of strong features 
needs to grow at as $F_n \gtrsim \sqrt{n}$. 
\item If $\gamma \geq 2\delta$, we have $\max\{ 1, 2\delta \} < \gamma \leq 2\delta + 1$ and $\delta < \gamma/2$. Here, the number of strong features 
can not grow faster that $n^{\gamma/2}$. 
\end{itemize}

\paragraph{Isotropic Case.} We let $\Phi(t)=1$. Condition $(II)$ can be rewritten as $\rho_{2, n} \lesssim \left( \frac{1}{n} \right)^{\gamma - \delta}$. 
Compared to the \emph{easy case}, the strength of the weak features $\rho_{2,n}$ needs to decay faster. Condition $(I)$ 
holds under the same assumptions as in the \emph{easy case}.

\paragraph{Hard Case.} We let $\Phi(t)=t^{-1}$. 
Condition $(II)$ reduces to $F_n \simeq d_n$, i.e., the number of strong features needs to grow as fast as the dimension. 
In this case, the optimal number of machines scales as 
$M_n \simeq n^{\frac{2}{5}(\frac{3}{2}-\delta)}$. To ensure $1 \lesssim M_n \lesssim n$, 
the growth of $d_n$ can not be too fast: $\gamma = \delta \in (1, 3/2]$.

\vspace{0.3cm}




\subsection{A universal lower bound  (finite dimension)}
\label{app:universal-lower}

We aim at deriving a lower bound for the distributed ridgeless regression estimator under fairly general distributional 
assumptions if $dim (\cH) = d < \infty$. 

\begin{assumption}
\label{ass:general-lower-bound}
\begin{enumerate}
\item The input $x \in \mbr^d$ is strongly square integrable: $\mbe[||x||^2]<\infty$.  
\item The covariance matrix $\Sigma \in \mbr^{d \times d}$ is invertible. 
\item $\mbe[y^2] < \infty$.  
\item The conditional variance is bounded from below: For some $\tilde \tau  \geq 0$ we assume $\mbv[y|x] \geq \tilde \tau^2$ almost surely. 
\item For any $m=1, ..., M$, the local data matrix $X_m \in \mbr^{ \frac{n}{M} \times d}$ has almost surely full rank, i.e., $\rank[ X_m] = \min\{ \frac{n}{M} ,d\}$.
\end{enumerate}
\end{assumption}

\vspace{0.1cm}

Under these assumptions we have the following lower bound for the ridgeless distributed estimator in finite dimension.

\vspace{0.1cm}
\begin{theorem}[Lower bound]
\label{theo:rough-lower}
Let $\bar \beta_{M}$ be defined by \eqref{eq:weighted-ave}. The excess risk satisfies  
\[ 
\mbe \left[ ||\Sigma^{1/2}(\bar \beta_{M} -\beta^*)||^2 \right]
\geq  
 \frac{\tilde \tau^2}{M} \; \frac{\min \{ d , \frac{n}{M} \}}{ \max\{d, \frac{n}{M} \} + 1 - \min\{d, \frac{n}{M}\}}  \;.  \]
\end{theorem}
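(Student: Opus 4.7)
The plan is to discard the bias and to prove a universal trace lower bound for the variance of the averaged minimum-norm interpolator via leave-one-out and Schur-complement identities on the iid samples. The inequality $\mbe[\|Z-c\|^2]\ge \mbe[\|Z-\mbe Z\|^2]$ applied conditionally on $\bX$ with $Z=\Sigma^{1/2}\bar\beta_M$, $c=\Sigma^{1/2}\beta^*$ discards the bias. Reproducing the variance identity \eqref{eq:tr1} from the proof of Lemma~\ref{lem:bias-var} and invoking $\mbe_\eps[\eps_m\eps_m^T\mid \bX]\succeq \tilde\tau^2 I$ from Assumption~\ref{ass:general-lower-bound}(4), then collapsing the identically-distributed machine contributions, reduces the theorem to the $\Sigma$-free trace bound
\[
\mbe\bigl[\tr(\Sigma(X_m^TX_m)^\dagger)\bigr]\ \ge\ \frac{\min\{d,n_m\}}{\max\{d,n_m\}+1-\min\{d,n_m\}},\qquad n_m:=n/M.
\]

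In the underparameterized regime $d\le n_m$, $W:=X_m^TX_m=\sum_{j=1}^{n_m}x_jx_j^T$ is a.s.\ invertible by Assumption~\ref{ass:general-lower-bound}(5). Sherman--Morrison applied to the leave-one-out $W_j:=W-x_jx_j^T$ yields $h_j:=x_j^TW^{-1}x_j=H_j/(1+H_j)\in[0,1)$ for $H_j:=x_j^TW_j^{-1}x_j$, along with $\sum_j h_j=\tr(W^{-1}W)=d$. Convexity of $h\mapsto h/(1-h)$ produces the deterministic Jensen bound $\sum_jH_j\ge n_md/(n_m-d)$. Taking expectations, using iid samples to write $\mbe[H_j\mid W_j]=\tr(\Sigma W_j^{-1})$, and exchangeability to collapse all these expectations into $g(n_m-1)$ with $g(N):=\mbe\bigl[\tr\bigl(\Sigma(\sum_{i\le N}x_ix_i^T)^{-1}\bigr)\bigr]$, yields the self-referential recursion $g(n_m-1)\ge d/(n_m-d)$; re-indexing gives $g(n_m)\ge d/(n_m+1-d)$, which is the sought trace bound in this regime.

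In the overparameterized regime $d>n_m$, $K_m:=X_mX_m^T$ is a.s.\ invertible of size $n_m$, and a block-inverse (Schur complement) identity at each diagonal entry gives $(K_m^{-1})_{ii}=1/(x_i^TP_i^\perp x_i)$, with $P_i^\perp$ the orthogonal projection in $\mbr^d$ onto the orthocomplement of the row span of $X_m^{(-i)}$, of rank $d+1-n_m$ a.s.\ by Assumption~\ref{ass:general-lower-bound}(5). To bound $\tr(\Sigma(X_m^TX_m)^\dagger)$ I apply the analogous Schur decomposition to the $\Sigma$-twisted kernel $\tilde K_m:=X_m\Sigma X_m^T$, producing $n_m$ diagonal terms $1/(x_i^T\Sigma(I-Q_i)x_i)$ for a $\Sigma$-oblique projector $Q_i$ onto the row span of $X_m^{(-i)}$; a careful trace manipulation identifies the resulting sum with $\tr(\Sigma(X_m^TX_m)^\dagger)$, and conditioning on $X_m^{(-i)}$ with $\mbe[x_ix_i^T]=\Sigma$ reduces the conditional moment to the $\Sigma$-free quantity $d+1-n_m$. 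Jensen's inequality applied to $z\mapsto 1/z$ on each of the $n_m$ diagonal entries then gives $n_m/(d+1-n_m)$, completing the trace bound.

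The underparameterized case is clean because Sherman--Morrison absorbs $\Sigma$ transparently into the recursion for $g(N)$. The main obstacle is the overparameterized case: the Moore--Penrose pseudoinverse does not commute with the similarity $A\mapsto\Sigma^{-1/2}A\Sigma^{-1/2}$ on singular matrices, so the identification of $\tr(\Sigma(X_m^TX_m)^\dagger)$ with the Schur sum on $\tilde K_m$ must be established through a direct trace manipulation rather than a naive substitution, and the conditional moment calculation involves an oblique rather than orthogonal projector whose $\Sigma$-free cancellation to $d+1-n_m$ is the nontrivial step.
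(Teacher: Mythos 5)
Your reduction to the per-machine trace bound is the same as the paper's (drop the bias via the conditional-mean minimization, lower the conditional noise covariance by $\tilde\tau^2 I$, collapse the $M$ identically distributed terms), and your underparameterized branch is correct and self-contained: the Sherman--Morrison leverage identity $h_j=H_j/(1+H_j)$, the constraint $\sum_j h_j=d$, Jensen for the convex map $h\mapsto h/(1-h)$, and the exchangeability recursion for $g(N)$ together reprove the bound $\mbe[\tr(\Sigma(X_m^TX_m)^{-1})]\ge d/(n_m+1-d)$ that the paper simply imports from \cite{holzmuller2020universality}.

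The overparameterized branch, however, has a genuine gap, and it sits exactly at the step you flag as nontrivial. The Schur sum you build on $\tilde K_m=X_m\Sigma X_m^T$ equals $\tr[(X_m\Sigma X_m^T)^{-1}]$, whereas the quantity you must lower bound is $\tr[\Sigma(X_m^TX_m)^\dagger]=\tr[X_m\Sigma X_m^T(X_mX_m^T)^{-2}]$; these are not equal, and neither dominates the other. Take $n/M=1$, $d=2$, $x=(1,0)^T$, $\Sigma=\mathrm{diag}(a,b)$: the target is $a$ while your Schur sum is $1/a$, so for $a<1$ it is not even a lower bound. Moreover the advertised $\Sigma$-free cancellation fails for this kernel: the $i$-th Schur denominator is $x_i^T\Sigma^{1/2}\tilde P_i^{\perp}\Sigma^{1/2}x_i$ with $\tilde P_i^{\perp}$ the orthogonal projector onto the orthocomplement of $\{\Sigma^{1/2}x_j\}_{j\neq i}$, whose conditional mean is $\tr[\tilde P_i^{\perp}\Sigma^2]$, not $d+1-n/M$. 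The twist must go the other way. Set $W_m=X_m\Sigma^{-1/2}$ (legitimate by Assumption \ref{ass:general-lower-bound}) and first prove the per-entry inequality $\bigl(K_m^{-1}\tilde K_mK_m^{-1}\bigr)_{ii}\ge\bigl((W_mW_m^T)^{-1}\bigr)_{ii}$ by Cauchy--Schwarz applied to $u=\Sigma^{1/2}X_m^TK_m^{-1}e_i$ and $v=\Sigma^{-1/2}X_m^T(W_mW_m^T)^{-1}e_i$, whose inner product is $e_i^T(W_mW_m^T)^{-1}e_i$. Summing over $i$ replaces the target by $\tr[(X_m\Sigma^{-1}X_m^T)^{-1}]$; now the Schur identity involves an orthogonal projector acting on whitened rows with identity covariance, the conditional mean of each denominator is exactly $d+1-n/M$, and Jensen finishes. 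This whitened route is precisely what the paper's proof (via the cited result of Holzm\"uller) carries out.
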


Thus,  we observe peaks at $d= \frac{n}{M}$ with height at least $\tilde \tau^2 \frac{d}{M}$, see Fig. {fig:3}.

\vspace{0.2cm}

We consider functions of the form $f_\beta : \cH \to \mbr$, $\beta \in \cH$, with $f_\beta (x) := \inner{\beta , x}$ 
and  define for any estimator $\hat \beta \in \cH$ the quantity
\[ \tilde \cE:= \mbe[ ( f_{\hat \beta} (x) -  \mbe_{Y|X}[f_{\hat \beta} (x)] )^2  ]  \;.  \]
One easily verifies that 
\begin{equation}
\label{eq:tildeE}
  \tilde \cE \leq \mbe[  \cR(\hat \beta)] - \cR(\beta^*)  \;.
\end{equation}  
Thus, finding a lower bound for $\tilde \cE$ leads to a lower bound for the excess risk. 

\vspace{0.2cm}

\begin{proof}[Proof of Theorem \ref{theo:rough-lower}]
Define the centered output variables $\tilde Y_m := Y_m - \mbe_{Y_m|X}[Y_m]$, $m=1, ..., M$ and 
set 
\[ Cov(Y_m , X_m):= \mbe_{Y_m|X_m}[ \tilde Y_m  \otimes \tilde Y_m ]  \;. \]
We then write 
\begin{align*}
 \tilde \cE(X) &:= \mbe_{Y,x|X}[ ( f_{\bar \beta_0} (x) -   \mbe_{Y|X}[f_{\bar \beta_0} (x) ] )^2 ] \\
 &= \mbe_{Y,x|X}\left[ \left( \inner{ x , \frac{1}{M}\sum_{m=1}^M   X_m^\dagger \tilde Y_m }\right)^2 \right] \\
 &= \mbe_{Y,x|X}\left[ \left( \frac{1}{M}\sum_{m=1}^M \inner{ x ,  X_m^\dagger \tilde Y_m } \right)^2 \right] \\
&= \frac{1}{M^2}\sum_{m=1}^M \sum_{m'=1}^M   \mbe_{Y,x|X}\left[ \inner{ x ,  X_m^\dagger \tilde Y_m }\inner{ x ,  X_{m'}^\dagger \tilde Y_{m'} }     \right]  \;. 
\end{align*} 
Note that by definition of $\tilde Y_m$ and linearity we have 
\[ \mbe_{Y,x|X}\left[  \inner{ x ,  X_m^\dagger \tilde Y_m } \right] = 0 \;. \]
Thus, by independence and Assumption \ref{ass:general-lower-bound} we find 
\begin{align}
\label{eq:first-low}
 \tilde \cE(X) &=   \frac{1}{M}\sum_{m=1}^M    \mbe_{Y,x|X}\left[ \inner{ x ,  X_m^\dagger \tilde Y_m }^2  \right] \nonumber  \\
 &= \frac{1}{M}\sum_{m=1}^M     \mbe_{Y,x|X}\left[ \inner{ x ,  X_m^\dagger \tilde Y_m }^2  \right] \nonumber  \\
 &= \frac{1}{M} \sum_{m=1}^M  \mbe_x \left[  \inner{ ( X_m^\dagger )^T x, Cov(Y_m , X_m) (X_m^\dagger )^T x }  \right] \nonumber  \\
 & \geq  \frac{\tilde \tau^2}{M} \sum_{m=1}^M  \mbe_x \left[    || ( X_m^\dagger )^T x ||^2\right] \nonumber \\
 &=  \frac{\tau^2}{M}\sum_{m=1}^M  Tr \left[ ( X_m^\dagger )^T \mbe_x \left[  x \otimes x\right]  X_m^\dagger    \right] \nonumber  \\
 &=  \frac{\tau^2}{M}\sum_{m=1}^M  Tr \left[ ( X_m^\dagger )^T \Sigma  X_m^\dagger   \right] \;.
\end{align} 
We proceed by introducing the whitened data matrices 
\[ W_m:= X_m \Sigma^{-1/2}\;.  \]
We then 
distinguish the two cases: 
\vspace{0.2cm}
\\
{\bf (I) $d\geq b=\frac{n}{M}$: } Following the arguments in \cite{holzmuller2020universality} (Proof of Theorem 3) shows that 
\[  \mbe_{X}\left[ Tr \left[ ( X_m^\dagger )^T \Sigma  X_m^\dagger   \right] \right]  \geq 
  \mbe_{X}\left[  Tr \left[ (W_m W_m^T)^{-1}  \right]  \right] \geq \frac{b}{d+1-b}  \;. \]
Combining this with \eqref{eq:first-low} gives by independence 
\begin{align*}
\tilde \cE &=  \mbe_{X}[\tilde \cE(X)] \\
&\geq  \frac{\tilde \tau^2}{M} \frac{b}{d+1-b} \\ 
&= \frac{\tilde \tau^2}{M}\frac{b}{d+1-b}  \;. 
\end{align*}
The result follows from \eqref{eq:tildeE}. 
\vspace{0.2cm}
\\
{\bf (II) $d\leq b=\frac{n}{M}$: } A short calculation shows that 
\[   Tr \left[ ( X_m^\dagger )^T \Sigma  X_m^\dagger   \right] =  Tr \left[ (W_m^T W_m)^{-1}  \right] \;. \] 
Following again  \cite{holzmuller2020universality} (Proof of Theorem 3) we readily obtain
\[  \mbe_{X}\left[ Tr \left[ ( X_m^\dagger )^T \Sigma  X_m^\dagger   \right] \right]  = 
  \mbe_{X}\left[  Tr \left[ (W_m^T W_m )^{-1}  \right]  \right] \geq \frac{d}{b+1-d}  \;. \] 
We conclude as above to obtain the result. 
\end{proof}

\vspace{0.3cm}


\section{Additional Numerical Results}
\label{app:add-numerics}


\begin{figure}[ht]
\label{fig:3}
\centerline{
\includegraphics[width=0.55\columnwidth, height=0.35\textheight]{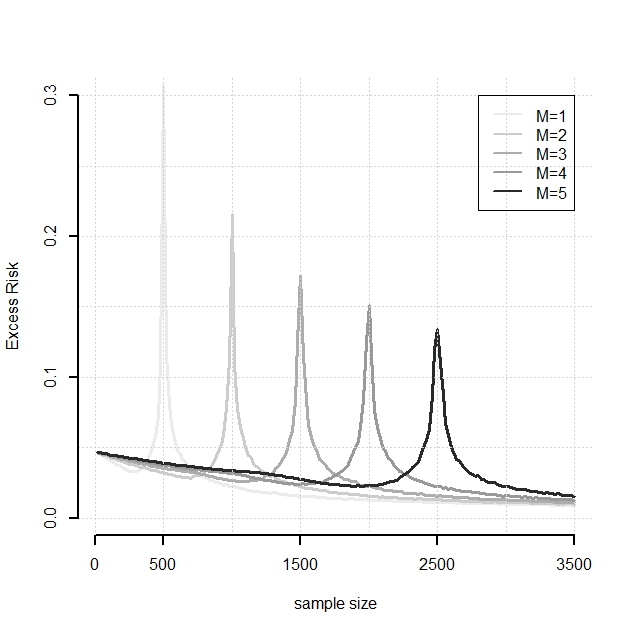} 
}
\caption{ Double descent for MSDYear dataset from Section \ref{sec:numerics} with $d=90$ features. We observe peaks whenever $d=\frac{n}{M}$, as 
Theorem \ref{theo:rough-lower} predicts.  
}
\end{figure}

\paragraph{Simulated Data.}

In a final experiment we investigate the effect of decay of the eigenvalues on the (normalized) relative prediction efficiency, 
defined in Definition \ref{def:rel-eff-intro}. We generate $n=200$ i.i.d. training points $x_j \sim \cN(0, \Sigma)$, with $d=400$, 
$\lam_j(\Sigma)=j^{-(1+\eps)}$, with $\eps=0.1, 0.5, 1, 1.5$. 
The target $\beta^*$ is simulated according to Assumption \ref{ass:sparse} with $\SNR=1$. 
As expected from our main results, faster decay (larger $\eps$) allows larger parallelization, that is, the optimal number of splits (largest efficiency) 
increases with faster decay.

\begin{figure}[ht]
\label{fig:5}
\centerline{
\includegraphics[width=0.45\columnwidth, height=0.25\textheight]{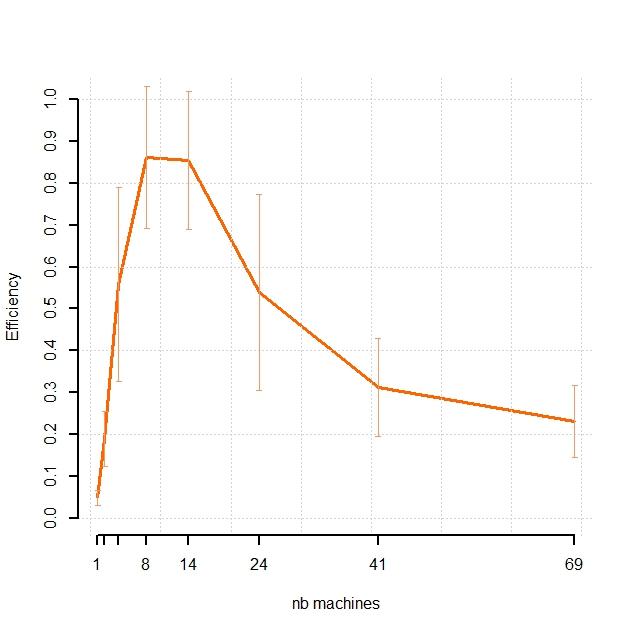}\hspace{0.2cm}
\includegraphics[width=0.45\columnwidth, height=0.25\textheight]{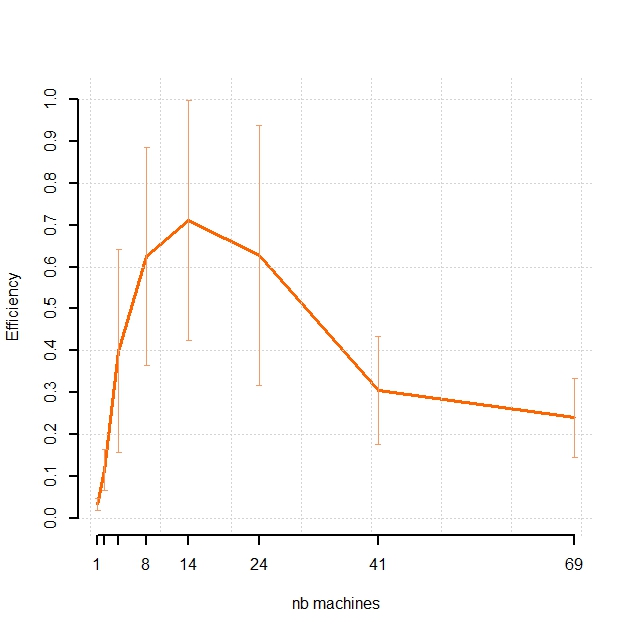} 
}
\caption{{\bf Left:} $\eps=0.1$ {\bf Right:} $\eps=0.5$.
}
\end{figure}

\begin{figure}[ht]
\label{fig:4}
\centerline{
\includegraphics[width=0.45\columnwidth, height=0.25\textheight]{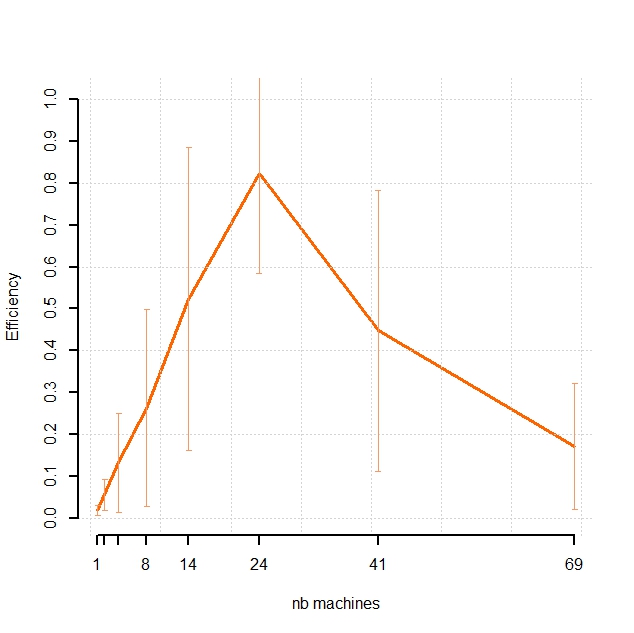} \hspace{0.2cm}
\includegraphics[width=0.45\columnwidth, height=0.25\textheight]{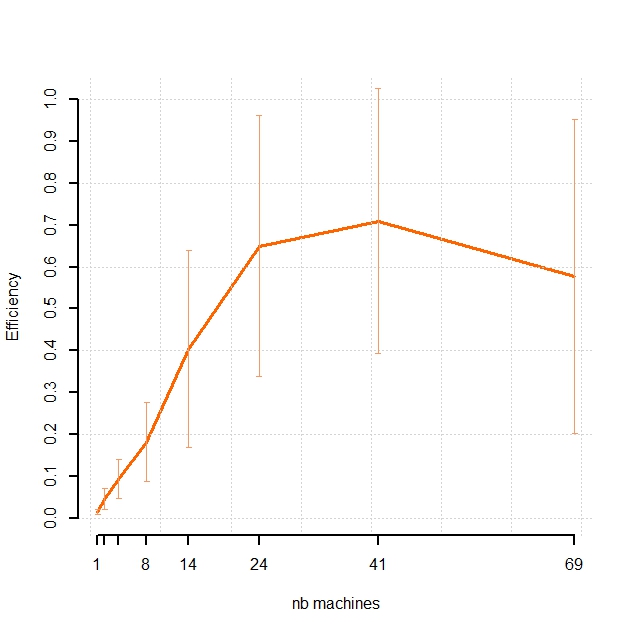}
}
\caption{{\bf Left:} $\eps=1$ {\bf Right:} $\eps=1.5$. 
}
\end{figure}



\checknbnotes

\end{document}